  \newcommand\figcaption{\def\@captype{figure}\caption}
  \newcommand\tabcaption{\def\@captype{table}\caption}
\newtheorem{proposition}{Proposition}
\newtheorem{definition}{Definition}
\newtheorem{observation}{Observation}
\title{Mildly Conservative $Q$-Learning for Offline Reinforcement Learning}
\author{%
  Jiafei Lyu$^{1}$\thanks{Equal Contribution}, Xiaoteng Ma$^{2*}$, Xiu Li$^{1}$\thanks{Corresponding Authors}, Zongqing Lu$^{3 \dagger}$ \\
  $^{1}$Tsinghua Shenzhen International Graduate School, Tsinghua University\\
  $^{2}$Department of Automation, Tsinghua Unversity \\
  $^{3}$School of Computer Science, Peking University \\
  \texttt{\{lvjf20,ma-xt17\}@mails.tsinghua.edu.cn,} \\
  \texttt{li.xiu@sz.tsinghua.edu.cn, zongqing.lu@pku.edu.cn}
  % examples of more authors
  % \And
  % Coauthor \\
  % Affiliation \\
  % Address \\
  % \texttt{email} \\
  % \AND
  % Coauthor \\
  % Affiliation \\
  % Address \\
  % \texttt{email} \\
  % \And
  % Coauthor \\
  % Affiliation \\
  % Address \\
  % \texttt{email} \\
  % \And
  % Coauthor \\
  % Affiliation \\
  % Address \\
  % \texttt{email} \\
}
\begin{document}

\maketitle

\begin{abstract}

Offline reinforcement learning (RL) defines the task of learning from a static logged dataset without continually interacting with the environment. The distribution shift between the learned policy and the behavior policy makes it necessary for the value function to stay conservative such that out-of-distribution (OOD) actions will not be severely overestimated. However, existing approaches, penalizing the unseen actions or regularizing with the behavior policy, are too pessimistic, which suppresses the generalization of the value function and hinders the performance improvement. This paper explores mild but enough conservatism for offline learning while not harming generalization. We propose Mildly Conservative $Q$-learning (MCQ), where OOD actions are actively trained by assigning them proper pseudo $Q$ values. We theoretically show that MCQ induces a policy that behaves at least as well as the behavior policy and no erroneous overestimation will occur for OOD actions. Experimental results on the D4RL benchmarks demonstrate that MCQ achieves remarkable performance compared with prior work. Furthermore, MCQ shows superior generalization ability when transferring from offline to online, and significantly outperforms baselines. Our code is publicly available at \href{https://github.com/dmksjfl/MCQ}{https://github.com/dmksjfl/MCQ}.

\end{abstract}

\section{Introduction}
%Real-time interaction with the environment for online reinforcement learning (RL) is often infeasible and unrealistic, since the data collection process of the agent maybe expensive, difficult, or even dangerous, especially in real-world applications. Offline RL, instead, aims at learning from a static dataset that was previously collected by some unknown process \cite{Lange2012BatchRL}, hence elimiating the need of environmental interactions during training. Considering the huge success in computer vision \cite{Deng2009ImageNetAL} and natural language process \cite{Chelba2014OneBW} by adopting large-scale offline fixed datasets, the progress in offline RL will inevitably boost the real-world applications of RL methods.

Continually interacting with the environment of online reinforcement learning (RL) is often infeasible and unrealistic, since the data collection process of the agent may be expensive, difficult, or even dangerous, especially in real-world applications. Offline RL, instead, aims at learning from a static dataset that was previously collected by some unknown process \cite{Lange2012BatchRL}, hence eliminating the need for environmental interactions during training. 

The main challenge of offline RL is the distribution shift of state-action visitation frequency between the learned policy and the behavior policy. The evaluation of out-of-distribution (OOD) actions causes extrapolation error \cite{Fujimoto2019OffPolicyDR}, which can be exacerbated through bootstrapping \cite{Kumar2019StabilizingOQ} and result in severe overestimation errors. Thus, keeping conservatism in value estimation is necessary in offline RL~\cite{jin2021pessimism,rashidinejad2021bridging,xie2021bellman}. Previous methods achieve the conservatism by compelling the learned policy to be close to the behavior policy \cite{Fujimoto2019OffPolicyDR, Wu2019BehaviorRO, Kumar2019StabilizingOQ, Fujimoto2021AMA, Wang2020CriticRR}, by penalizing the learned value functions from being over-optimistic upon out-of-distribution (OOD) actions \cite{Kumar2020ConservativeQF, Kostrikov2021OfflineRL, Wu2021UncertaintyWA}, or by learning without querying OOD samples \cite{Wang2018ExponentiallyWI, Chen2020BAILBI,yang2021believe,kostrikov2022offline,ma2022offline}. 

%We deem that a good offline RL algorithm ought to: (1) stay pessimistic upon in-distribution and OOD actions while not include too much pessimism on \emph{any} types of datasets; (2) can exhibit good generalization capability and performance on offline-to-online stages. Unfortunately, to the best of our knowledge, few of prior work meet these requirements. To bridge this gap, we propose a novel \emph{\underline{M}ildly \underline{C}onservative \underline{B}ellman} (MCB) operator (see Definition \ref{def:mcb}) for offline RL. We use Figure \ref{fig:mcb} to illustrate our intuition. Vanilla bellman backup is not trustworthy in the offline setting and may results in wrong overestimation on OOD region ($Q_{\rm ood}$ in the light orange region). In contrast, the MCB operator assigns fake target values for the OOD actions to ensure that they are enforced to stay in a mild conservative manner ($Q_{\rm MCB}$ in the light orange region) while not affect the evaluations upon in-distribution samples ($Q_{\rm in}$ in the light green region). Different from prior methods, we \emph{actively} train OOD actions and query their $Q$ values instead of avoiding them. We argue that OOD actions can be beneficial if they are properly handled, which also contributes to better generalization capability of the offline RL algorithm.

In practice, we rely on neural networks to extract knowledge from the dataset and generalize it to the nearby unseen states and actions when facing continuous state and action spaces. In other words, we need the networks to ``stitch'' the suboptimal trajectories to generate the best possible trajectory supported by the dataset. Unfortunately, there is no free lunch. Conservatism, which offline RL celebrates, often limits the generalization and impedes the performance of the agent. Existing approaches are still inadequate in balancing conservatism and generalization. As illustrated in Figure~\ref{fig:mcb}, policy regularization is unreliable for offline RL when the data-collecting policy is poor, and value penalization methods often induce unnecessary pessimism in both the in-dataset region and OOD region. We argue that \emph{the proper conservatism should be as mild as possible}. As depicted in Figure~\ref{fig:mcb}, we aim at well estimating the value function in the support of the dataset, and allowing value estimates upon OOD actions to be high (even higher than their optimal values) as long as $Q(s,a^{\rm ood})< \max_{a\in{\rm Support}(\mu)}Q(s,a)$ is satisfied. The mild conservatism benefits generalization since value estimates upon OOD actions are slightly optimistic instead of being overly conservative.

% \begin{figure}
%     \centering
%     \includegraphics[width=0.9\textwidth]{}
%     \caption{Illustration of the MCQ operator versus vanilla Bellman backup for offline learning. LHS denotes the regular Bellman backup where the $Q$ function is updated by training on data sampled from the dataset and the policy is optimized by maximizing the $Q$ function. RHS is the policy optimization process with the aid of the MCB operator, where the severely overestimated $Q$ values upon unseen actions are properly trained without affecting the training of in-distribution data.}
%     \label{fig:mcb}
% \end{figure}

\begin{figure}
    \centering
    \includegraphics[width=\textwidth]{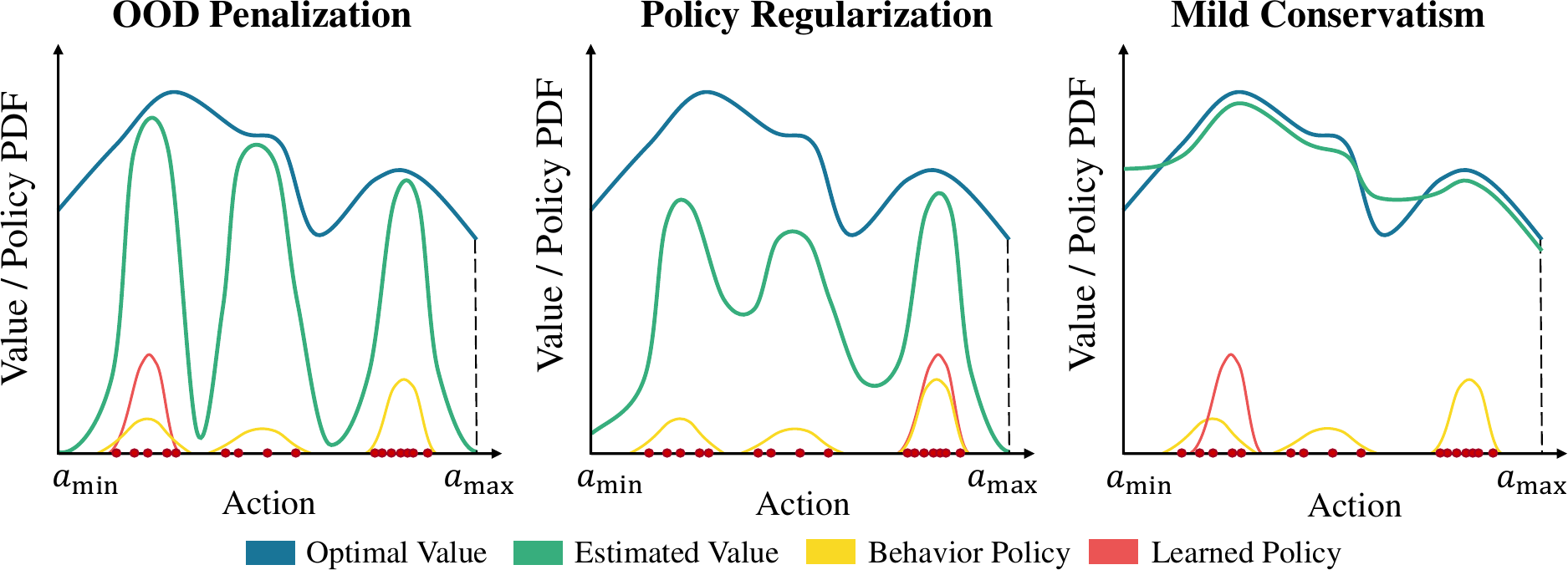}
    \caption{Comparison of prior methods against mild conservatism. The red spots represent the dataset samples. The left figure shows that penalizing OOD actions makes the value function drop sharply at the boundary of the dataset's support, which barriers policy learning. The central figure depicts that policy regularization keeps the policy near behavior policy, leading to undesired performance if the behavior policy is unsatisfying. On the right side, we illustrate the basic idea of mild conservatism. The estimated values for OOD actions are allowed to be high as long as it does not affect the learning for the optimal policy supported by the dataset, i.e., $Q(s,a^{\rm ood})< \max_{a\in{\rm Support}(\mu)}Q(s,a)$.}
    \label{fig:mcb}
\end{figure}

%We theoretically analyze the dynamic programming properties of the MCB operator under the tabular MDP setting. We show that the policy induced by the MCB operator is guaranteed to behave better than the behavior policy, and can consistently improve the policy with a tighter lower bound compared with policy constraint methods or value penalization methods like CQL \cite{Kumar2020ConservativeQF}. For practical application of the MCB operator, we propose the practical MCB operator and illustrate its advantages by theoretically showing that \emph{erroneously overestimation error will never occur} with it. We then estimate the behavior policy with a conditional variational autoencoder (CVAE) \cite{Kingma2014AutoEncodingVB,Sohn2015LearningSO}, and integrate the practical MCB operator with the Soft Actor-Critic (SAC) \cite{Haarnoja2018SoftAO} algorithm. To this end, we propose our novel offline RL algorithm, Mildly Conservative $Q$-learning (MCQ).

%The only change MCQ makes to the vanilla SAC algorithm is an auxiliary loss that aims at training OOD actions. Experimental results on the D4RL MuJoCo locomotion tasks demonstrate that MCQ surpasses recent strong baseline methods on most of the tasks, especially on non-optimal datasets. Meanwhile, MCQ shows superior generalization capability when transferring from offline to online stages, validating our claims that mild pessimism is of importance to offline learning.

To fulfill that, we propose a novel \emph{\underline{M}ildly \underline{C}onservative \underline{B}ellman} (MCB) operator for offline RL, where we \emph{actively} train OOD actions and query their $Q$ values. We theoretically analyze the convergence property of the MCB operator under the tabular MDP setting. We show that the policy induced by the MCB operator is guaranteed to behave better than the behavior policy, and can consistently improve the policy with a tighter lower bound compared with policy constraint methods or value penalization methods like CQL \cite{Kumar2020ConservativeQF}. For practical usage, we propose the practical MCB operator and illustrate its advantages by theoretically showing that \emph{erroneous overestimation error will not occur} with it. We then estimate the behavior policy with a conditional variational autoencoder (CVAE) \cite{Kingma2014AutoEncodingVB,Sohn2015LearningSO}, and integrate the practical MCB operator with the Soft Actor-Critic (SAC) \cite{Haarnoja2018SoftAO} algorithm. To this end, we propose our novel offline RL algorithm, Mildly Conservative $Q$-learning (MCQ).

Experimental results on the D4RL MuJoCo locomotion tasks demonstrate that MCQ surpasses recent strong baseline methods on most of the tasks, especially on non-expert datasets. Meanwhile, MCQ shows superior generalization capability when transferring from offline to online, validating our claims that mild pessimism is of importance to offline learning.

\section{Preliminaries}
We consider a Markov Decision Process (MDP) specified by a tuple $\langle \mathcal{S},\mathcal{A},r,\rho_0,p,\gamma \rangle$, where $\mathcal{S}$ is the state space, $\mathcal{A}$ is the action space, $r(s,a):\mathcal{S}\times\mathcal{A}\mapsto\mathbb{R}$ is the reward function, $\rho_0(s)$ is the initial state distribution, $p(s^\prime|s,a):\mathcal{S}\times\mathcal{A}\times\mathcal{S}\mapsto[0,1]$ is the transition probability, $\gamma\in[0,1)$ is the discount factor. Reinforcement learning (RL) aims at finding a policy $\pi(\cdot|s)$ such that the expected cumulative long-term rewards $J(\pi) = \mathbb{E}_{s_0\sim\rho_0(\cdot), a_t\sim\pi(\cdot|s_t), s_{t+1}\sim p(\cdot|s_t,a_t)}[\sum_{t=0}^\infty\gamma^t r(s_t,a_t)]$ are maximized. The state-action function $Q(s,a)$ measures the discounted return starting from state $s$ and action $a$, and following the policy $\pi$. We assume that the reward function $r(s,a)$ is bounded, i.e., $|r(s,a)|\le r_{\rm max}$. Given a policy $\pi(\cdot|s)$, the Bellman backup for obtaining the corresponding $Q$ function gives:
\begin{equation}
    \label{eq:bellmanbackup}
    \mathcal{T}^\pi Q(s,a):= r(s,a) + \gamma\mathbb{E}_{s^\prime}\mathbb{E}_{a^\prime\sim\pi(\cdot|s^\prime)}[Q(s^\prime, a^\prime)].
\end{equation}
The $Q$ function of the optimal policy satisfies the following Bellman optimal operator:
\begin{equation}
    \label{eq:bellmanoptimal}
    \mathcal{T}Q(s,a) := r(s,a) + \gamma\mathbb{E}_{s^\prime}\left[\max_{a^\prime\in\mathcal{A}}Q(s^\prime, a^\prime)\right].
\end{equation}
In offline RL setting, the online interaction is infeasible, and we can only have access to previously collected datasets $\mathcal{D}=\{(s_i,a_i,r_i,s_{i+1}^\prime,d_i)\}_{i=1}^N$, where $d$ is the done flag. We denote the behavior policy as $\mu(\cdot|s)$. The Bellman backup relies on actions sampled from the learned policy, $a^\prime\sim\pi(\cdot|s^\prime)$. However, $a^\prime$ can lie outside of the support of $\mu$ due to the distribution shift between $\pi$ and $\mu$. The value estimates upon $a^\prime$ can then be arbitrarily wrong, resulting in bad policy training. Unlike prior work, we \emph{actively} train OOD actions by constructing them pseudo target values. In this way, we retain pessimism while enjoying better generalization.

\section{Mildly Conservative $Q$-Learning}
%In this section, we first formally define the MCB operator and characterize its dynamic programming properties in the tabular MDP setting. We further move on and give a practical version of the MCB operator. We show that no erroneously overestimation will occur with the MCB operator. Finally, we incorporate the MCB operator into SAC \cite{Haarnoja2018SoftAO} and present our novel offline RL algorithm.
% In this section, we first formally define the MCB operator and characterize its dynamic programming properties in the tabular MDP setting. We show that the MCB operator outputs a less pessimistic policy that behaves at least as good as the behavior policy. We further move on to give a practical version of the MCB operator, and show that no erroneously overestimation will occur with the MCB operator. Finally, we incorporate the MCB operator into Soft Actor-critic (SAC) and present our novel offline RL algorithm.

In this section, we first formally define the MCB operator and characterize its dynamic programming properties in the tabular MDP setting. We further give a practical version of the MCB operator. We show that no erroneous overestimation will occur with the MCB operator. Finally, we incorporate the MCB operator with SAC \cite{Haarnoja2018SoftAO} and present our novel offline RL algorithm.

\subsection{Mildly Conservative Bellman (MCB) Operator}

\begin{definition}
\label{def:mcb}
The Mildly Conservative Bellman (MCB) operator is defined as 
\begin{equation}
    \mathcal{T}_{\mathrm{MCB}} Q(s,a) = (\mathcal{T}_1\mathcal{T}_2) Q(s,a),
\end{equation}
where
\begin{equation}
\label{eq:t1}
    \mathcal{T}_1 Q(s,a) = \left\{  
         \begin{array}{lr}  
         Q(s,a), & \mu(a|s) > 0. \\  
        \max_{a^\prime\sim{\rm Support}(\mu(\cdot|s))}Q(s,a^\prime) - \delta, & \text{else}.\\
         \end{array}  
\right.  
\end{equation}
\begin{equation}
\label{eq:t2}
    \mathcal{T}_2 Q(s,a) = \left\{  
         \begin{array}{lr}  
         r(s,a) + \gamma \mathbb{E}_{s^\prime} \left[\max_{a^\prime\in\mathcal{A}} Q(s^\prime,a^\prime)\right],  & \mu(a|s)>0,  \\  
         {\color{blue} r(s,a) + \gamma \mathbb{E}_{s^\prime} \left[ Q(s,a) \right]}, & \text{else}. \\
         \end{array}  
\right.  
\end{equation}
\end{definition}

{\color{blue} \textbf{Update 2024.02:} The authors were informed that there is a bug in Equation \ref{eq:t2} and made the above modifications to fix it. The authors would like to thank Jiayi Guan for pinpointing this. The above modification has marginal influence on theoretical results and no effect on practical MCQ algorithm.}

%The basic idea behind this novel operator is, if the learned policy outputs actions that lie in the support region of $\mu$, then we use it for backup; while if OOD actions are sampled, we manually replace its value estimation with the maximum $Q$ within the support of $\mu$. That is, different from standard bellman backup, we setup a \emph{checking procedure} of whether previous backup involves OOD actions for the update. Intrinsically, we actually construct pseudo target values for OOD actions.

The basic idea behind this novel operator is that if the learned policy outputs actions that lie in the support region of $\mu$, then we go for backup; while if OOD actions are generated, we deliberately replace their value estimates with $\max_{a^\prime\sim{\rm Support}(\mu(\cdot|s))}Q(s,a^\prime) - \delta$, where $\delta>0$ can be arbitrarily small. That is, different from standard Bellman backup, we set up a \emph{checking procedure} (i.e., $\mathcal{T}_1$) of whether the previous backup (i.e., $\mathcal{T}_2$) involves OOD actions for the update. Intrinsically, we construct pseudo target values for OOD actions. We subtract a small positive $\delta$ such that OOD actions will not be chosen when executing policy via $\arg\max_{a\in\mathcal{A}} Q(s,a)$. 

%For better understanding of the MCB operator, we theoretically analyze its dynamic programming properties in the tabular MDP setting. All missing proofs are deferred to Appendix A.

For a better understanding of the MCB operator, we theoretically analyze its dynamic programming properties in the tabular MDP setting. All proofs are deferred to Appendix \ref{sec:proof}.

\begin{proposition}
\label{prop:contraction}
In the support region of the behavior policy, i.e., ${\rm Support}(\mu)$, the MCB operator is a $\gamma$-contraction operator in the $\mathcal{L}_\infty$ norm, and any initial $Q$ function can converge to a unique fixed point by repeatedly applying $\mathcal{T}_{\mathrm{MCB}}$.
\end{proposition}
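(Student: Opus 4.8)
The plan is to show that, once we restrict attention to the support of $\mu$, the operator $\mathcal{T}_{\mathrm{MCB}}=\mathcal{T}_1\mathcal{T}_2$ collapses to an ordinary Bellman-optimality-type operator, and then to invoke the standard contraction estimate together with the Banach fixed-point theorem. First I would compute the composition explicitly on in-support pairs. For $(s,a)$ with $\mu(a|s)>0$, applying $\mathcal{T}_2$ yields $r(s,a)+\gamma\mathbb{E}_{s'}[\max_{a'\in\mathcal{A}}Q(s',a')]$, and since $\mathcal{T}_1$ leaves in-support entries untouched, we get $\mathcal{T}_{\mathrm{MCB}}Q(s,a)=r(s,a)+\gamma\mathbb{E}_{s'}[\max_{a'\in\mathcal{A}}Q(s',a')]$ for every $(s,a)\in{\rm Support}(\mu)$.

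Second, I would argue that the inner maximum over the full action set $\mathcal{A}$ may be replaced by a maximum over ${\rm Support}(\mu(\cdot|s'))$. The key observation is that $\mathcal{T}_1$ assigns to each OOD action the pseudo value $\max_{a'\in{\rm Support}(\mu(\cdot|s'))}Q(s',a')-\delta$ with $\delta>0$, which is strictly below the in-support maximum; hence, for any $Q$ lying in the image of $\mathcal{T}_{\mathrm{MCB}}$ (in particular every iterate after the first), the OOD entries never attain the maximum, so $\max_{a'\in\mathcal{A}}Q(s',a')=\max_{a'\in{\rm Support}(\mu(\cdot|s'))}Q(s',a')$. This decouples the support-region dynamics from the OOD region and shows that, restricted to ${\rm Support}(\mu)$, $\mathcal{T}_{\mathrm{MCB}}$ is a self-contained Bellman-optimality operator over the restricted action set.

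Third, I would run the standard contraction estimate on the space of bounded $Q$-functions on the support. For any two such functions $Q_1,Q_2$ and any in-support $(s,a)$, the reward terms cancel and
\[
|\mathcal{T}_{\mathrm{MCB}}Q_1(s,a)-\mathcal{T}_{\mathrm{MCB}}Q_2(s,a)|=\gamma\left|\mathbb{E}_{s'}\Big[\max_{a'}Q_1(s',a')-\max_{a'}Q_2(s',a')\Big]\right|.
\]
Combining $|\max_x f(x)-\max_x g(x)|\le\max_x|f(x)-g(x)|$ with $|\mathbb{E}[\cdot]|\le\mathbb{E}|\cdot|$ bounds the right-hand side by $\gamma\mathbb{E}_{s'}\max_{a'}|Q_1(s',a')-Q_2(s',a')|\le\gamma\|Q_1-Q_2\|_\infty$, where the norm is taken over the support. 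Taking the supremum over in-support $(s,a)$ gives $\|\mathcal{T}_{\mathrm{MCB}}Q_1-\mathcal{T}_{\mathrm{MCB}}Q_2\|_\infty\le\gamma\|Q_1-Q_2\|_\infty$. Since this function space with the $\mathcal{L}_\infty$ norm is complete and $\gamma\in[0,1)$, the Banach fixed-point theorem yields a unique fixed point together with convergence of the iterates from any initialization.

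I expect the main obstacle to be the second step: justifying that the $\max_{a'\in\mathcal{A}}$ inside $\mathcal{T}_2$ can legitimately be replaced by a maximum over the support. This is precisely where the $-\delta$ margin does the work, and it is what makes the phrase ``in the support region'' meaningful—without it, the differences of $Q_1$ and $Q_2$ on OOD actions would leak into the bound and break the self-containedness of the operator. I would therefore isolate this collapse as a short preliminary claim (valid for all iterates past the first, which suffices for the fixed-point analysis), after which the remaining contraction computation is entirely routine.
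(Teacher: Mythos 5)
Your proof is correct and takes essentially the same route as the paper's: reduce $\mathcal{T}_{\mathrm{MCB}}$ on in-support pairs to a Bellman-optimality backup and apply the standard estimate $|\max_x f(x)-\max_x g(x)|\le\max_x|f(x)-g(x)|$ together with the Banach fixed-point theorem. Your second step, isolating why the inner $\max_{a'\in\mathcal{A}}$ collapses to $\max_{a'\in{\rm Support}(\mu(\cdot|s'))}$ for all iterates past the first (via the $-\delta$ margin), is a slightly more careful treatment of a point the paper handles only implicitly through its case split on $a'$, and it is a worthwhile clarification rather than a different argument.
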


\begin{proposition}[Behave at least as well as behavior policy]
\label{prop:behaveatleastaswellas}
Denote $Q_{\mathrm{MCB}}$ as the unique fixed point acquired by the MCB operator, then in ${\rm Support}(\mu)$ we have: $Q_\mu \le Q_{\mathrm{MCB}}\le Q_{\mu^*}$, where $Q_\mu$ is the $Q$ function of the behavior policy and $Q_{\mu^*}$ is the $Q$ function of the optimal policy in the batch.
\end{proposition}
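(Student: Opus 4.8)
The plan is to exploit Proposition \ref{prop:contraction}, which guarantees that the fixed point $Q_{\mathrm{MCB}}$ exists and is unique, and then to identify precisely which equation $Q_{\mathrm{MCB}}$ satisfies on $\mathrm{Support}(\mu)$. First I would unfold the fixed-point relation $Q_{\mathrm{MCB}} = \mathcal{T}_1\mathcal{T}_2 Q_{\mathrm{MCB}}$ by writing $\tilde{Q} := \mathcal{T}_2 Q_{\mathrm{MCB}}$ and applying $\mathcal{T}_1$ to it. For an in-support pair the outer $\mathcal{T}_1$ acts as the identity, so $Q_{\mathrm{MCB}}(s,a) = r(s,a) + \gamma\,\mathbb{E}_{s^\prime}[\max_{a^\prime\in\mathcal{A}} Q_{\mathrm{MCB}}(s^\prime,a^\prime)]$; for an OOD pair, combining the two branches and using $\tilde Q(s,a^\prime)=Q_{\mathrm{MCB}}(s,a^\prime)$ for in-support $a^\prime$ gives $Q_{\mathrm{MCB}}(s,a^{\mathrm{ood}}) = \max_{a^\prime\in\mathrm{Support}(\mu(\cdot|s))} Q_{\mathrm{MCB}}(s,a^\prime) - \delta$.

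The crucial observation is that because every OOD value equals the in-support maximum minus the strictly positive $\delta$, it lies strictly below that maximum; hence $\max_{a^\prime\in\mathcal{A}} Q_{\mathrm{MCB}}(s^\prime,a^\prime)$ is always attained inside the support and equals $\max_{a^\prime\in\mathrm{Support}(\mu(\cdot|s^\prime))} Q_{\mathrm{MCB}}(s^\prime,a^\prime)$. Substituting this back shows that, restricted to $\mathrm{Support}(\mu)$, $Q_{\mathrm{MCB}}$ is a fixed point of the in-support Bellman optimality operator $(\mathcal{B}Q)(s,a):= r(s,a) + \gamma\,\mathbb{E}_{s^\prime}[\max_{a^\prime\in\mathrm{Support}(\mu(\cdot|s^\prime))} Q(s^\prime,a^\prime)]$. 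Since $Q_{\mu^*}$ is the unique fixed point of $\mathcal{B}$ (the value of the best policy supported by the batch), this identifies $Q_{\mathrm{MCB}}$ with $Q_{\mu^*}$ on the support and in particular delivers the upper bound $Q_{\mathrm{MCB}}\le Q_{\mu^*}$.

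For the lower bound I would compare $\mathcal{B}$ with the behavior backup $(\mathcal{T}^\mu Q)(s,a):= r(s,a) + \gamma\,\mathbb{E}_{s^\prime}\mathbb{E}_{a^\prime\sim\mu(\cdot|s^\prime)}[Q(s^\prime,a^\prime)]$, whose fixed point is $Q_\mu$. Because a maximum over the support dominates any $\mu$-average, $\mathcal{T}^\mu Q \le \mathcal{B}Q$ pointwise for every $Q$, so $Q_\mu = \mathcal{T}^\mu Q_\mu \le \mathcal{B}Q_\mu$. Both operators are monotone, hence applying $\mathcal{B}$ repeatedly preserves the inequality, giving $Q_\mu \le \mathcal{B}Q_\mu \le \mathcal{B}^2 Q_\mu \le \cdots$, and by Proposition \ref{prop:contraction} this dominated sequence converges to $Q_{\mathrm{MCB}}$, yielding $Q_\mu \le Q_{\mathrm{MCB}}$.

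I expect the main obstacle to be the self-consistency step in the first two paragraphs: the OOD value is defined through the in-support values, while the in-support update uses a maximum ranging over OOD actions as well, so the reasoning is mildly circular. The resolution is to invoke the \emph{existence} of the fixed point from Proposition \ref{prop:contraction} first and only then read off the OOD values, after which the strict gap created by $\delta>0$ cleanly collapses the global maximum onto the support. I would also state carefully that $Q_{\mu^*}$ denotes the optimal value among policies whose actions remain within $\mathrm{Support}(\mu)$, since the identification of $Q_{\mathrm{MCB}}$ with the fixed point of $\mathcal{B}$ relies on this definition.
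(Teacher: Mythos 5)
Your proof is correct and takes essentially the same route as the paper's: unfold the fixed-point relation on ${\rm Support}(\mu)$, identify it with the in-support Bellman optimality operator (so that $Q_{\rm MCB}=Q_{\mu^*}$ there), and compare against the behavior Bellman backup $\mathcal{T}^\mu$ for the lower bound. If anything, your write-up is more careful than the paper's in two places: you explicitly justify replacing $\max_{a^\prime\in\mathcal{A}}$ by $\max_{a^\prime\in{\rm Support}(\mu(\cdot|s^\prime))}$ via the strict $\delta$-gap forced on OOD values (the paper makes this substitution silently), and you derive $Q_\mu\le Q_{\rm MCB}$ by a genuine monotone fixed-point comparison $Q_\mu\le\mathcal{B}Q_\mu\le\mathcal{B}^2Q_\mu\le\cdots$ rather than the paper's one-step inequality between the two operators' defining equations.
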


%Proposition \ref{prop:behaveatleastaswellas} indicates that the policy induced by the MCB operator can behave at least as well as the behavior policy, and can approximate the optimal batch-constraint policy. Apart from this advantage, we further show that the MCB operator results in milder conservatism. We start by observing that value penalization method, like CQL \cite{Kumar2020ConservativeQF}, guarantees that the learned value function $\hat{Q}^\pi(s,a)$ is a lower bound of its true value $Q^\pi(s,a)$. It is also ensured that following such conservative update leads to a safe policy improvement, i.e., $J(\pi_{\rm CQL})\ge J(\mu) - \mathcal{O}(\frac{1}{(1-\gamma)^2})$ (Theorem 3.6 in \cite{Kumar2020ConservativeQF}). For explicit policy constraint methods, e.g., TD3\_BC \cite{Fujimoto2021AMA}, the learned policy $\pi_p$ mimics the behavior policy $\mu$, and can hardly behave significantly better than $\mu$. We show in proposition \ref{prop:milderpessimism} that explicit policy constraint methods also exhibit a safe policy improvement, $J(\pi_p)\ge J(\mu) - \mathcal{O}(\frac{1}{(1-\gamma)^2})$, while the MCB operator can consistently improve the policy with a tighter lower bound.

Proposition \ref{prop:behaveatleastaswellas} indicates that the policy induced by the MCB operator can behave at least as well as the behavior policy, and can approximate the optimal batch-constraint policy. Apart from this advantage, we further show that the MCB operator results in milder conservatism. We start by observing that value penalization method, like CQL \cite{Kumar2020ConservativeQF}, guarantees that the learned value function $\hat{Q}^\pi(s,a)$ is a lower bound of its true value $Q^\pi(s,a)$. It is also ensured that following such conservative update leads to a safe policy improvement, i.e., $J(\pi_{\rm CQL})\ge J(\mu) - \mathcal{O}(\frac{1}{(1-\gamma)^2})$ (Theorem 3.6 in \cite{Kumar2020ConservativeQF}). For explicit policy constraint methods, e.g., TD3+BC \cite{Fujimoto2021AMA}, the learned policy $\pi_p$ mimics the behavior policy $\mu$, and can hardly behave significantly better than $\mu$. We show in Proposition \ref{prop:milderpessimism} that explicit policy constraint methods also exhibit a safe policy improvement, $J(\pi_p)\ge J(\mu) - \mathcal{O}(\frac{1}{(1-\gamma)^2})$, while the MCB operator can consistently improve the policy with a tighter lower bound.

% \begin{proposition}[Milder Pessimism]
% \label{prop:milderpessimism}
% Suppose there exists an explicit policy constraint offline reinforcement learning algorithm such that the KL-divergence of the learned policy $\pi_p(\cdot|s)$ and the behavior policy $\mu(\cdot|s)$ is optimized to guarantee $\mathrm{KL}(\mu(\cdot|s), \pi_p(\cdot|s))\le\epsilon$, then
% \begin{equation}
%     J(\pi_p) \ge J(\mu) - \dfrac{r_{\rm max}}{\sqrt{2}(1-\gamma)^2}\sqrt{\epsilon},
% \end{equation}
% while for the policy $\pi_{\rm MCB}$ learned by applying the MCB operator, we have
% \begin{equation}
%     J(\pi_{\rm MCB}) \ge J(\mu).
% \end{equation}
% \end{proposition}

\begin{proposition}[Milder Pessimism]
\label{prop:milderpessimism}
Suppose there exists an explicit policy constraint offline reinforcement learning algorithm such that the KL-divergence of the learned policy $\pi_p(\cdot|s)$ and the behavior policy $\mu(\cdot|s)$ is optimized to guarantee $\max\left(\mathrm{KL}(\mu, \pi_p), \mathrm{KL}(\pi_p, \mu) \right)\le\epsilon,\,\forall s$. Denote $\epsilon_\mu^{\pi_p} = \max_s|\mathbb{E}_{a\sim\pi_p}A^\mu(s,a)|$, where $A^\mu(s,a)$ is the advantage function. Then
\begin{equation}
    J(\pi_p) \ge J(\mu) - \dfrac{\sqrt{2} \gamma\epsilon_\mu^{\pi_p}}{(1-\gamma)^2}\sqrt{\epsilon},
\end{equation}
while for the policy $\pi_{\rm MCB}$ learned by applying the MCB operator, we have
\begin{equation}
    J(\pi_{\rm MCB}) \ge J(\mu).
\end{equation}
\end{proposition}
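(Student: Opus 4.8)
The plan is to establish the two inequalities separately: the safe-improvement bound for the explicit policy-constraint policy $\pi_p$, and the unconditional improvement $J(\pi_{\rm MCB})\ge J(\mu)$. The first is a distribution-shift argument built on the performance difference lemma, whereas the second follows almost directly from Proposition~\ref{prop:behaveatleastaswellas}, so I would dispatch it quickly and spend the effort on the $\pi_p$ bound.

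For the $\pi_p$ bound, I would first invoke the performance difference lemma (Kakade--Langford), writing $J(\pi_p)-J(\mu)=\frac{1}{1-\gamma}\mathbb{E}_{s\sim d^{\pi_p}}\mathbb{E}_{a\sim\pi_p}[A^\mu(s,a)]$, where $d^{\pi_p}$ denotes the discounted state-visitation distribution induced by $\pi_p$. Writing $\bar A(s):=\mathbb{E}_{a\sim\pi_p}[A^\mu(s,a)]$, note that $|\bar A(s)|\le\epsilon_\mu^{\pi_p}$ by definition. I would then swap the on-policy visitation $d^{\pi_p}$ for the behavior visitation $d^\mu$ and split the gap into a surrogate term $\frac{1}{1-\gamma}\mathbb{E}_{s\sim d^\mu}[\bar A(s)]$ and a correction term $\frac{1}{1-\gamma}(\mathbb{E}_{d^{\pi_p}}-\mathbb{E}_{d^\mu})[\bar A]$. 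The surrogate term is nonnegative because the policy-constraint algorithm performs improvement against $\mu$ (the choice $\pi_p=\mu$ already yields zero), so it can be dropped from a lower bound. The correction term is controlled by H\"older's inequality together with a state-distribution-shift (simulation) bound $\|d^{\pi_p}-d^\mu\|_1\le\frac{2\gamma}{1-\gamma}\max_s D_{\rm TV}(\pi_p(\cdot|s),\mu(\cdot|s))$, after which Pinsker's inequality converts the KL constraint into $\max_s D_{\rm TV}\le\sqrt{\epsilon/2}$ (the symmetric two-sided KL assumption guarantees Pinsker applies regardless of direction). Chaining these gives $|\text{correction}|\le\frac{2\gamma\epsilon_\mu^{\pi_p}}{(1-\gamma)^2}\sqrt{\epsilon/2}=\frac{\sqrt2\gamma\epsilon_\mu^{\pi_p}}{(1-\gamma)^2}\sqrt\epsilon$, which is exactly the claimed bound.

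For $\pi_{\rm MCB}$, I would first argue that the greedy policy associated with the fixed point $Q_{\rm MCB}$ only ever selects in-support actions: since $\mathcal{T}_1$ resets every OOD value to $\max_{a'\sim{\rm Support}(\mu)}Q(s,a')-\delta$ with $\delta>0$, no OOD action can attain the maximum, so $\pi_{\rm MCB}(\cdot|s)$ is supported on ${\rm Support}(\mu(\cdot|s))$ and $Q_{\rm MCB}$ is its (batch-constrained optimal) value function. Then I would chain, for $s_0\sim\rho_0$, $J(\pi_{\rm MCB})=\mathbb{E}_{s_0}[\max_{a}Q_{\rm MCB}(s_0,a)]\ge\mathbb{E}_{s_0}\mathbb{E}_{a\sim\mu}[Q_{\rm MCB}(s_0,a)]\ge\mathbb{E}_{s_0}\mathbb{E}_{a\sim\mu}[Q_\mu(s_0,a)]=J(\mu)$, where the first inequality holds because $\mu$ is an in-support distribution that the maximum dominates, and the second is the lower bound $Q_\mu\le Q_{\rm MCB}$ from Proposition~\ref{prop:behaveatleastaswellas}.

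The routine pieces are Pinsker and the performance difference lemma; the main obstacle is the first part, specifically making the surrogate-term argument rigorous. The target bound is strictly tighter than what $|\bar A|\le\epsilon_\mu^{\pi_p}$ alone yields, so it genuinely relies on discarding the \emph{nonnegative} surrogate term, which means I must pin down precisely in what sense the assumed policy-constraint algorithm improves on $\mu$ so that $\mathbb{E}_{s\sim d^\mu}[\bar A(s)]\ge0$, and track the simulation-lemma constant $\frac{2\gamma}{1-\gamma}$ carefully, since that is where the $\gamma/(1-\gamma)^2$ factor and the $\sqrt2$ ultimately originate.
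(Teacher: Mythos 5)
Your proposal follows essentially the same route as the paper's proof: for $\pi_p$ it combines the performance difference lemma with the surrogate-plus-correction decomposition (the paper cites Achiam et al.'s Corollary~1 for the distribution-shift bound you rederive via the simulation lemma and H\"older), drops the nonnegative surrogate term under the same implicit improvement assumption, and converts KL to total variation via Pinsker to land on the identical constant $\frac{\sqrt{2}\gamma\epsilon_\mu^{\pi_p}}{(1-\gamma)^2}\sqrt{\epsilon}$; for $\pi_{\rm MCB}$ it invokes Proposition~\ref{prop:behaveatleastaswellas} together with the in-support greediness of the induced policy, exactly as the paper does (your intermediate step $\max_a Q_{\rm MCB}\ge\mathbb{E}_{a\sim\mu}Q_{\rm MCB}$ is in fact slightly more explicit than the paper's). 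The obstacle you flag --- justifying $\mathbb{E}_{s\sim d_\mu}\mathbb{E}_{a\sim\pi_p}[A^\mu(s,a)]\ge 0$ --- is treated no more rigorously in the paper itself, which simply asserts the bound by analogy to prior work.
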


\textbf{In summary,} the MCB operator benefits the offline learning in two aspects: (1) the operator is a contraction, and any initial $Q$ functions are guaranteed to converge to a unique fixed point; (2) the learned policy of the MCB operator is ensured to be better than the behavior policy, and reserve milder pessimism compared with policy constraint methods or CQL.

\subsection{Practical MCB Operator}
%In practice, it is intractable to acquire $\max_{a^\prime\sim\mu(\cdot|s^\prime)}Q(s^\prime,a^\prime)$ (in $\mathcal{T}_1$ of Eq. (\ref{eq:t1})) in continuous control domains, and the behavior policy is often unknown. We therefore fit an empirical behavior policy $\hat{\mu}$ with supervised learning based on the static dataset. The fake target values for the OOD actions are then computed by sampling $N$ actions from $\hat{\mu}$, and taking maximum over their value evaluation. Formally, we define the practical MCB operator below, accompanying with theoretical analysis.

In practice, it is intractable to acquire $\max_{a^\prime\sim{\rm Support}(\mu(\cdot|s))}Q(s,a^\prime)$ in $\mathcal{T}_1$ of Eq. (\ref{eq:t1}) in continuous control domains, and the behavior policy is often unknown. Thus, we fit an empirical behavior policy $\hat{\mu}$ with supervised learning based on the static dataset. The pseudo target values for the OOD actions are then computed by sampling $N$ actions from $\hat{\mu}$, and taking maximum over their value evaluation. Formally, we define the practical MCB operator below, accompanied by the theoretical analysis.

\begin{definition}
The practical Mildly Conservative Bellman (MCB) operator is defined as 
\begin{equation}
    \mathcal{\hat{T}}_{\mathrm{MCB}} Q(s,a) = (\mathcal{\hat{T}}_1\mathcal{T}_2) Q(s,a),
\end{equation}
where
\begin{equation}
\label{eq:practicalmcb}
    \mathcal{\hat{T}}_1 Q(s,a) = \left\{  
         \begin{array}{lr}  
         Q(s,a), & \mu(a|s) > 0. \\  
        \mathbb{E}_{\{a_i^\prime\}^N\sim\hat{\mu}(\cdot|s)}\left[\max_{a^\prime\sim\{a_i^\prime\}^N}Q(s,a^\prime)\right], & \text{else}.\\
         \end{array}  
\right.  
\end{equation}
\end{definition}
Compared with Eq. (\ref{eq:t1}), we make a small modification of $\mathcal{T}_1$, and keep $\mathcal{T}_2$ unchanged. There is no need to subtract $\delta$ here as generally $\mathbb{E}_{\{a_i^\prime\}^N\sim\hat{\mu}(\cdot|s)}\left[\max_{a^\prime\sim\{a_i^\prime\}^N}Q(s,a^\prime)\right]\le\max_{a^\prime\sim{\rm Support}(\mu)}Q(s,a^\prime)$. The practical MCB operator is much easier to implement in practice. We show that the practical MCB operator is still a $\gamma$-contraction in the support region of the behavior policy $\mu$.

\begin{proposition}
\label{pro:1}
Proposition \ref{prop:contraction} still holds for the practical MCB operator.
\end{proposition}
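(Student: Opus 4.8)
The plan is to establish that $\hat{\mathcal{T}}_{\mathrm{MCB}}$ is a $\gamma$-contraction in the $\mathcal{L}_\infty$ norm, after which the existence and uniqueness of the fixed point together with convergence of the iterates follow immediately from the Banach fixed-point theorem, exactly as in Proposition~\ref{prop:contraction}. Since $\hat{\mathcal{T}}_1$ differs from $\mathcal{T}_1$ only in the OOD branch and coincides with it (both being the identity) on $\mathrm{Support}(\mu)$, the argument can largely reuse the proof of Proposition~\ref{prop:contraction}; the only genuinely new work is to re-verify the contraction factor in the OOD branch, where the exact quantity $\max_{a^\prime\sim\mathrm{Support}(\mu)}Q(s,a^\prime)-\delta$ of Eq.~(\ref{eq:t1}) is replaced by an empirical expectation of a finite-sample maximum. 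Concretely, I would fix two arbitrary action-value functions $Q_1,Q_2$ and bound $|\hat{\mathcal{T}}_{\mathrm{MCB}}Q_1(s,a)-\hat{\mathcal{T}}_{\mathrm{MCB}}Q_2(s,a)|$ pointwise, splitting into the in-support and OOD cases before taking the supremum over $(s,a)$.

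For $(s,a)$ with $\mu(a|s)>0$, the outer operator $\hat{\mathcal{T}}_1$ acts as the identity, so $\hat{\mathcal{T}}_{\mathrm{MCB}}Q$ reduces to the Bellman optimality backup $r(s,a)+\gamma\mathbb{E}_{s^\prime}[\max_{a^\prime\in\mathcal{A}}Q(s^\prime,a^\prime)]$. Here I would invoke the standard non-expansiveness of the maximum, $|\max_{a^\prime}Q_1(s^\prime,a^\prime)-\max_{a^\prime}Q_2(s^\prime,a^\prime)|\le\max_{a^\prime}|Q_1(s^\prime,a^\prime)-Q_2(s^\prime,a^\prime)|\le\|Q_1-Q_2\|_\infty$, and pull the $\gamma$ out of the expectation to obtain $|\hat{\mathcal{T}}_{\mathrm{MCB}}Q_1(s,a)-\hat{\mathcal{T}}_{\mathrm{MCB}}Q_2(s,a)|\le\gamma\|Q_1-Q_2\|_\infty$. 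The OOD case, where $\mu(a|s)=0$, is where the main obstacle lies, because $\hat{\mathcal{T}}_{\mathrm{MCB}}Q(s,a)=\mathbb{E}_{\{a_i^\prime\}^N\sim\hat{\mu}(\cdot|s)}[\max_{a^\prime\in\{a_i^\prime\}^N}(\mathcal{T}_2Q)(s,a^\prime)]$ now involves both a finite-sample maximum and an outer expectation rather than a single clean max. The key observation I would exploit is that every sampled $a_i^\prime\sim\hat{\mu}(\cdot|s)$ lies in the support, so $\mathcal{T}_2Q$ evaluated at $a_i^\prime$ is again a genuine Bellman backup $r(s,a_i^\prime)+\gamma\mathbb{E}_{s^{\prime\prime}}[\max_{a^{\prime\prime}}Q(s^{\prime\prime},a^{\prime\prime})]$ and hence already carries the factor $\gamma$. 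I would then push the absolute value inside the expectation by Jensen's inequality, apply non-expansiveness of the finite-sample max, and bound each per-sample difference $|(\mathcal{T}_2Q_1)(s,a_i^\prime)-(\mathcal{T}_2Q_2)(s,a_i^\prime)|$ by $\gamma\|Q_1-Q_2\|_\infty$ using the in-support computation above; since this bound is uniform over the sample and independent of the random action set, the expectation preserves it, again yielding $|\hat{\mathcal{T}}_{\mathrm{MCB}}Q_1(s,a)-\hat{\mathcal{T}}_{\mathrm{MCB}}Q_2(s,a)|\le\gamma\|Q_1-Q_2\|_\infty$.

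Taking the supremum over all $(s,a)$ in both cases gives $\|\hat{\mathcal{T}}_{\mathrm{MCB}}Q_1-\hat{\mathcal{T}}_{\mathrm{MCB}}Q_2\|_\infty\le\gamma\|Q_1-Q_2\|_\infty$, which establishes the $\gamma$-contraction and, via Banach, the unique fixed point and convergence of the iterates in $\mathrm{Support}(\mu)$. I expect the only delicate point to be confirming that neither the empirical expectation nor the finite-sample max in the OOD branch inflates the contraction constant; both are non-expansions in $\mathcal{L}_\infty$, so they do not, which is precisely the reason that dropping the $\delta$ correction (as noted after Eq.~(\ref{eq:practicalmcb})) leaves the contraction property intact.
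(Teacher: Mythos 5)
Your proposal is correct and follows essentially the same route as the paper's proof: bound the difference pointwise, push the absolute value inside the expectations via Jensen, use non-expansiveness of the (finite-sample) maximum so that only the $\gamma$ from the Bellman backup survives, and conclude via Banach. The only cosmetic difference is where you place the case split (on the current $(s,a)$, reading the composition $\hat{\mathcal{T}}_1\mathcal{T}_2$ literally) versus the paper (on the next-state action $a^\prime$), but both versions rest on the same two facts — $\max_{a^\prime\in\{a_i^\prime\}^N}Q_1-\max_{a^\prime\in\{a_i^\prime\}^N}Q_2\le\|Q_1-Q_2\|_\infty$ and that expectation preserves a uniform bound — so the arguments are interchangeable.
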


%Since we fit empirical distribution $\hat{\mu}$ of the behavior policy $\mu$, there may exist shift between $\hat{\mu}$ and $\mu$, especially when we represent the policy via neural networks. That suggests that OOD actions $a^\prime$ can still be sampled from $\hat{\mu}$ such that $a^\prime\notin {\rm Support}(\mu(\cdot|s))$. Our last main result reveals that \emph{erroneously overestimation issue will never occur} with the aid of the practical MCB operator.

Since we fit the empirical distribution $\hat{\mu}$ of the behavior policy $\mu$, there may exist a shift between $\hat{\mu}$ and $\mu$, especially when we represent the policy via neural networks. That suggests that OOD actions $a^\prime$ can still be sampled from $\hat{\mu}$ such that $a^\prime\notin {\rm Support}(\mu(\cdot|s))$. Our last main result reveals that \emph{erroneous overestimation issue will not occur} with the aid of the practical MCB operator.

% \begin{proposition}[No Erroneously Overestimation Will Hapen]
% \label{pro:2}
% Assume that $\sup_s D_{\rm TV}( \hat \mu( \cdot|s) \parallel \mu( \cdot|s)) \leq \epsilon $, then we have
% \begin{align*}
%     \mathbb{E}_{\{a_i^\prime\}^N\sim \hat{\mu}(\cdot|s)} \left[\max_{a^\prime\in\{a_i^\prime\}^N}Q(s,a^\prime)\right] \leq \max_{a^\prime \in  \operatorname{Support}(\mu(\cdot|s))} Q(s,a^\prime) + (1 - (1 - 2\epsilon)^N) \dfrac{r_{\rm max}}{1-\gamma}.
% \end{align*}
% \end{proposition}

\begin{proposition}[No erroneous overestimation will occur]
\label{pro:2}
Assuming that $\sup_s D_{\rm TV}( \hat \mu( \cdot|s) \parallel \mu( \cdot|s)) \leq \epsilon < \frac{1}{2}$, we have
\begin{align*}
    \mathbb{E}_{\{a_i^\prime\}^N\sim \hat{\mu}(\cdot|s)} \left[\max_{a^\prime\in\{a_i^\prime\}^N}Q(s,a^\prime)\right] \leq \max_{a^\prime \in  \operatorname{Support}(\mu(\cdot|s))} Q(s,a^\prime) + (1 - (1 - 2\epsilon)^N) \dfrac{r_{\rm max}}{1-\gamma}.
\end{align*}
\end{proposition}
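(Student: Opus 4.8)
The plan is to split the random batch $\{a_i'\}^N$ according to whether \emph{every} one of the $N$ draws falls inside $\operatorname{Support}(\mu(\cdot|s))$ or at least one strays outside it. Throughout, write $M(s) := \max_{a' \in \operatorname{Support}(\mu(\cdot|s))} Q(s,a')$ for the quantity we compare against, and recall that the boundedness assumption $|r(s,a)| \le r_{\rm max}$ yields the uniform envelope $|Q(s,a)| \le \sum_{t=0}^\infty \gamma^t r_{\rm max} = \frac{r_{\rm max}}{1-\gamma}$ for every $(s,a)$. The point is that on the ``good'' event the empirical maximum never exceeds $M(s)$, whereas on the ``bad'' event it can reach the envelope, but that event is rare.

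First I would control the per-sample probability of drawing an out-of-support action. Let $B_s = \{a : a \notin \operatorname{Support}(\mu(\cdot|s))\}$, so that $\mu(B_s) = 0$ by definition of the support. Expressing the total variation through the $L_1$ distance of densities, $D_{\rm TV}(\hat\mu(\cdot|s)\|\mu(\cdot|s)) = \tfrac12\int |\hat\mu(a|s) - \mu(a|s)|\,da$, I would bound
\begin{align*}
    \Pr_{a\sim\hat\mu(\cdot|s)}(a \in B_s) &= \int_{B_s} \hat\mu(a|s)\,da = \int_{B_s} |\hat\mu(a|s) - \mu(a|s)|\,da \\
    &\le \int_{\mathcal A} |\hat\mu(a|s) - \mu(a|s)|\,da = 2 D_{\rm TV}(\hat\mu\|\mu) \le 2\epsilon,
\end{align*}
where the middle equality uses $\mu \equiv 0$ on $B_s$ and the inequality enlarges the domain of integration from $B_s$ to $\mathcal A$. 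This slightly lossy step is exactly what produces the factor $2\epsilon$ rather than $\epsilon$, and it makes the hypothesis $\epsilon < \tfrac12$ the natural condition guaranteeing $1-2\epsilon > 0$. Since the $N$ actions are i.i.d.\ draws from $\hat\mu(\cdot|s)$, the probability that all of them land in the support is at least $(1-2\epsilon)^N$; equivalently, letting $\mathcal E$ be the event ``at least one $a_i' \in B_s$,'' we have $\Pr(\mathcal E) \le 1 - (1-2\epsilon)^N$.

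Then I would assemble the bound by conditioning on $\mathcal E$. On $\mathcal E^c$ every sampled action lies in $\operatorname{Support}(\mu(\cdot|s))$, so $\max_{a'\in\{a_i'\}^N} Q(s,a') \le M(s)$; on $\mathcal E$ I only use the crude envelope $\max_{a'\in\{a_i'\}^N} Q(s,a') \le \frac{r_{\rm max}}{1-\gamma}$. Taking expectations gives
\begin{align*}
    \mathbb E\left[\max_{a'\in\{a_i'\}^N} Q(s,a')\right] &\le M(s)\,\Pr(\mathcal E^c) + \frac{r_{\rm max}}{1-\gamma}\,\Pr(\mathcal E) \\
    &\le M(s) + \frac{r_{\rm max}}{1-\gamma}\,\Pr(\mathcal E) \le M(s) + \left(1-(1-2\epsilon)^N\right)\frac{r_{\rm max}}{1-\gamma},
\end{align*}
which is the claim. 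The middle step uses $M(s)\Pr(\mathcal E^c) \le M(s)$, and the last step monotonically replaces $\Pr(\mathcal E)$ by its upper bound, valid because the coefficient $\frac{r_{\rm max}}{1-\gamma}$ is nonnegative.

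The main obstacle is the per-sample estimate in the second paragraph: one must recognize that ``sampling an OOD action'' is precisely an event on which $\mu$ places zero mass, so its $\hat\mu$-probability is controlled by the discrepancy between the two distributions, and then convert the total-variation hypothesis into an $L_1$ statement to extract the constant. Lifting this single-draw bound to the maximum over $N$ independent draws is then immediate. A minor technical caveat worth flagging is the step $M(s)\Pr(\mathcal E^c) \le M(s)$, which implicitly uses $M(s) \ge 0$; if one does not wish to assume nonnegative values, the envelope can be applied symmetrically on $\mathcal E^c$ as well, changing only constants and not the structure of the argument.
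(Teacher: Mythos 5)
Your proof is correct and follows essentially the same route as the paper's: bound the per-draw probability of leaving $\operatorname{Support}(\mu(\cdot|s))$ by $2\epsilon$ via the total-variation hypothesis and the fact that $\mu$ vanishes off its support, lift this to the event that all $N$ i.i.d.\ draws stay in-support, and split the expectation of the maximum accordingly using the envelope $|Q|\le r_{\rm max}/(1-\gamma)$ on the bad event. Your remark about the implicit use of $M(s)\ge 0$ in the step $M(s)\Pr(\mathcal E^c)\le M(s)$ applies equally to the paper's own argument.
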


\noindent\textbf{Remark:} This proposition generally requires a comparatively well-fitted empirical behavior policy $\hat{\mu}$. In practice, we model $\hat{\mu}$ with a CVAE. In most cases, CVAE can already fit the dataset well and guarantee a good performance. Whereas there may exist some situations, e.g., the dataset is highly multi-modal, then one can replace the CVAE as the conditional GAN (CGAN) to better capture the different modes in the dataset as depicted in \cite{Yang2022ARI}. We believe generative models like CGAN will be a good choice by then.

%Intuitively, the above conclusion says that if the empirical behavior policy $\mu$ well fits $\mu$, i.e., $\epsilon$ is small enough, then regardless of whether $\{a_i^\prime\}^N$ contain OOD actions, the fake target value will approximate the maximum Q-value within the dataset's support. Even if there is no guarantee of small $\epsilon$, the deviation between the fake value and the maximum Q from dataset is at least bounded by $\frac{r_{\rm max}}{1-\gamma}$. Therefore, the overestimation bias is actually controllable, demonstrating the advantages of utilizing the MCB operator in offline RL.

Intuitively, the above conclusion says that if the empirical behavior policy $\hat{\mu}$ well fits $\mu$, i.e., $\epsilon$ is small enough, then regardless of how $\{a_i^\prime\}^N$ are sampled, the pseudo target value will approximate the maximum $Q$-value within the dataset's support with high probability. The extrapolation error is under the scale of $(1 - (1-2\epsilon)^N) \frac{r_{\rm max}}{1-\gamma}$. We expect a good empirical behavior policy such that most of the actions sampled from it will be in-distribution. However, if $\epsilon$ is large, $N$ can act as a trade-off parameter. The smaller $N$ we use, the more conservative we are. Fortunately, we find empirically that our method performs well in a large interval of $N$ over different tasks (see Section~\ref{sec:parameterstudy}). Hence, it is safe to fix a $N$ in practice.

\subsection{Algorithm}
As aforementioned, we often cannot get prior information about the behavior policy $\mu$. Thus, we need to empirically fit a behavior policy $\hat{\mu}$ with supervised learning for applying the practical MCB operator. Our algorithm, Mildly Conservative $Q$-learning (MCQ), trains an additional generative model, which is also adopted by many prior work \cite{Fujimoto2019OffPolicyDR, Ghasemipour2021EMaQEQ, Kostrikov2021OfflineRL, Zhou2020PLASLA}. We build our novel offline algorithm upon an off-the-shelf off-policy online RL algorithm, Soft Actor-Critic (SAC) \cite{Haarnoja2018SoftAO}.

\noindent\textbf{Modelling the behavior policy with the CVAE.} We utilize a conditional variational autoencoder (CVAE) \cite{Kingma2014AutoEncodingVB,Sohn2015LearningSO,Fujimoto2019OffPolicyDR} to model the behavior policy $\mu$. Given a fixed logged dataset, the goal of the CVAE is to reconstruct actions conditioned on the states such that the reconstructed actions come from the same distribution as the actions in the dataset, i.e., $\mu(\cdot|s)$. That generally satisfies the assumption we make in Proposition \ref{pro:2}. As concerned by \cite{kostrikov2022offline}, training a generative model like CVAE still may produce out-of-dataset actions, which leads to extrapolation error since undefined $Q$ values can be possibly queried. Prior methods, like BCQ \cite{Fujimoto2019OffPolicyDR}, do not well address such issue. While for our algorithm, such concern is mitigated because overestimation error is actually under control as is guaranteed by Proposition \ref{pro:2}.

The CVAE $G_\omega(s)$ parameterized by $\omega$ is made up of an encoder $E_\xi(s,a)$ and a decoder $D_{\psi}(s,z)$ parameterized by $\xi,\psi$ respectively, $\omega=\{\xi,\psi\}$. The CVAE is optimized by maximizing its variational lower bound, which is equivalent to minimizing the following objective function.
\begin{equation}
    \label{eq:CVAE}
    \mathcal{L}_{\rm CVAE} = \mathbb{E}_{(s,a)\sim\mathcal{D},z\sim E_\xi(s,a)}\left[ (a - D_\psi(s,z))^2 + {\rm KL}\left(E_\xi(s,a), \mathcal{N}(0,{\bf I})\right) \right],
\end{equation}
where ${\rm KL}(p,q)$ denotes the KL-divergence between probability distribution $p(\cdot)$ and $q(\cdot)$, and $\bf I$ is the identity matrix. When sampling actions from the CVAE, we first sample a latent variable $z$ from the prior distribution, which is set to be multivariate normal distribution $\mathcal{N}(0, \bf I)$, and then pass it in conjunction with the state $s$ into the decoder $D_\psi(s,z)$ to get the desired decoded action.

It is also worth noting that we do not choose GAN \cite{Goodfellow2014GenerativeAN} as the generative model because it is known to suffer from training instability and mode collapse \cite{Srivastava2017VEEGANRM,Bau2019SeeingWA,Bang2021MGGANSM}. Also, GAN consumes much more time and memories to train compared with the CVAE. 

%\noindent\textbf{Loss functions.} In DRL, the $Q$ function is represented with a neural network parameterized by $\theta$ and is updated via minimizing the temporal difference (TD) loss $\mathbb{E}_{s,a,r,s^\prime}[(Q_\theta(s,a) - \mathcal{T}Q(s,a))^2]$. We actually are performing regression task $(s,a)\mapsto\mathcal{T}Q(s,a)$ to train the $Q$ function. The target value $\mathcal{T}Q(s,a)$ is usually computed by utilizing a lagging target network parameterized by $\theta^\prime$ without gradient propagation. As a typical actor-critic \cite{Konda1999ActorCriticT,Konda1999ActorCriticA,sutton2018reinforcement} algorithm, SAC uses its critic networks to perform value estimation and uses a seperate actor network for policy improvement. In order to incorporate the MCB operator with the off-the-shelf SAC algorithm, we need to check whether the sampled action $a^\prime\sim\pi(\cdot|s^\prime)$ lie outside of the behavior policy's support. Without loss of generality, that requires ${\mu}(a^\prime|s^\prime)>0$ holds, where $a^\prime\sim\pi(\cdot|s^\prime)$. However, such criterion is not reliable, because the true behavior policy $\mu$ is unknown and it is difficult to examine whether $\mu(a^\prime|s^\prime)>0$ in practice. It is also problematic if we rely on the empirical behavior policy $\hat{\mu}$ to check whether $a^\prime$ is OOD as $\hat{\mu}$ itself can produce OOD actions.

\noindent\textbf{Loss functions.} In deep RL, the $Q$ function is represented with a neural network parameterized by $\theta$ and is updated via minimizing the temporal difference (TD) loss $\mathbb{E}_{s,a,r,s^\prime}[(Q_\theta(s,a) - \mathcal{T}Q(s,a))^2]$. We actually are performing the regression task $(s,a)\mapsto\mathcal{T}Q(s,a)$ to train the $Q$ function. The target value $\mathcal{T}Q(s,a)$ is usually computed by utilizing a lagging target network parameterized by $\theta^\prime$ without gradient backpropagation. As a typical actor-critic \cite{Konda1999ActorCriticT,Konda1999ActorCriticA,sutton2018reinforcement} algorithm, SAC uses its critic networks to perform value estimation and uses a separate actor network for policy improvement. In order to incorporate the MCB operator with the off-the-shelf SAC algorithm, we need to check whether the sampled action $a^\prime \sim\pi(\cdot|s)$ lies outside of the behavior policy's support, i.e., whether ${\mu}(a^\prime|s)>0$. However, such a criterion is not reliable, because the true behavior policy $\mu$ is unknown and it is difficult to examine whether $\mu(a^\prime|s)>0$ in practice. It is also problematic if we rely on the empirical behavior policy $\hat{\mu}$ to check whether $a^\prime$ is OOD as $\hat{\mu}$ itself can produce OOD actions.

\begin{algorithm}[tb]
\caption{Mildly Conservative $Q$-learning (MCQ)}
\label{alg:algmcq}
\begin{algorithmic}[1] %[1] enables line numbers
\STATE Initialize CVAE $G_\omega$, critic networks $Q_{\theta_1}, Q_{\theta_2}$ and actor network $\pi_{\phi}$ with random parameters 
\STATE Initialize target networks $\theta_1^\prime \leftarrow \theta_1, \theta_2^\prime \leftarrow \theta_2$ and offline replay buffer $\mathcal{D}$.
\FOR{$t$ = 1 to $T$}
\STATE Sample a mini-batch $B = \{(s,a,r,s^\prime,d)\}$ from $\mathcal{D}$, where $d$ is the done flag
\STATE Train CVAE via minimizing Eq. (\ref{eq:CVAE})
\STATE Get target value: $y = r(s,a) + \gamma\left[\min_{i=1,2}Q_{\theta_i^\prime}(s^\prime,a^\prime) - \alpha \log\pi_\phi(a^\prime|s^\prime)\right], a^\prime\sim\pi_\phi(\cdot|s^\prime)$
\STATE Sample $N$ actions from $\pi$ based on each $s$ and $s^\prime$, set $s^{\rm in} = \{s,s^\prime\}$
\STATE Compute the target value for the OOD actions via Eq. (\ref{eq:faketarget})
\STATE Update critic $\theta_i$ with gradient descent via minimizing Eq. (\ref{eq:criticloss})
\STATE Update actor $\phi$ with gradient ascent via Eq. (\ref{eq:actorloss})
\STATE Update target networks: $\theta_i^\prime\rightarrow\tau\theta_i + (1-\tau)\theta_i^\prime,\, i=1,2$
\ENDFOR
\end{algorithmic}
\end{algorithm}

We then resort to constructing an auxiliary loss for OOD actions and integrating it with the standard Bellman error. Specifically, we sample $a^{\rm ood}$ from the learned policy $\pi(\cdot|s^{\rm in})$ based on the sampled state $s^{\rm in}\sim\mathcal{D}$ from the dataset and assign them pseudo target values based on the practical MCB operator. Note that the superscript $\rm ood$ is used to distinguish from the in-dataset real actions, and $a^{\rm ood}$ is not necessarily an OOD action. We remark that if $a^{\rm ood}\in\mathrm{Support}(\mu(\cdot|s))$, the pseudo $Q$ value will not negatively affect the evaluation upon it, because in-distribution actions are still trained to approximate the optimal batch-constraint $Q$ value. In this way, we \emph{actively} train both possible OOD actions and in-distribution actions simultaneously via \emph{OOD sampling}. The resulting objective function for the critic networks is presented in Eq. (\ref{eq:criticloss}).
\begin{equation}
\label{eq:criticloss}
    \mathcal{L}_{\rm critic} = \lambda \mathbb{E}_{(s,a,r,s^\prime)\sim \mathcal{D}}\left[ (Q_{\theta_i}(s,a) - y)^2 \right] + (1-\lambda){\color{blue} \mathbb{E}_{s^{\rm in}\sim\mathcal{D}, a^{\rm ood}\sim\pi}\left[ (Q_{\theta_i}(s^{\rm in}, a^{\rm ood}) - y^\prime)^2 \right]},
\end{equation}
where the target value for the in-distribution actions gives
\begin{equation}
    \label{eq:truetarget}
    y = r(s,a) + \gamma\left[\min_{i=1,2}Q_{\theta_i^\prime}(s^\prime,a^\prime) - \alpha \log\pi_\phi(a^\prime|s^\prime)\right], \alpha\in\mathbb{R}_+,
\end{equation}
which follows the standard target value of vanilla SAC. The hyperparameter $\lambda$ balances the in-distribution data training and OOD action training. Following the formulas of the practical MCB operator in Eq. (\ref{eq:practicalmcb}), the pseudo target value for the OOD action is computed by:
\begin{equation}
    \label{eq:faketarget}
    y^\prime = \min_{j=1,2}\mathbb{E}_{\{a_i'\}^N\sim\hat{\mu}}\left[ \max_{a^\prime\sim\{a_i'\}^N}Q_{\theta_j}(s^{\rm in},a^\prime) \right].
\end{equation}
%Note that we experimentally find that replacing the min operator with a mean operator does not raise much difference in performance. We hence take advantage of the min operator to fulfill the \emph{pseudo Clipped Double Q-learning (PCDQ)} for OOD actions.
Note that we experimentally find that replacing the min operator with a mean operator does not raise much difference in performance. We hence take advantage of the min operator to fulfill the pseudo clipped double $Q$-learning for OOD actions.

The policy is then optimized by solving the following optimization problem:
\begin{equation}
\label{eq:actorloss}
    \pi_\phi := \max_\phi \mathbb{E}_{s\sim\mathcal{D},a\sim\pi_\phi(\cdot|s)}\left[ \min_{i=1,2}Q_{\theta_i}(s,a) - \alpha \log\pi_\phi(\cdot|s) \right].
\end{equation}
We detail the learning procedure of our MCQ in Algorithm \ref{alg:algmcq}. Different from \cite{Fujimoto2021AMA}, our method does not require normalization over states or value functions. The only change we make to the vanilla SAC algorithm is an extra auxiliary loss term (blue term in Eq. (\ref{eq:criticloss})) such that OOD actions are actively and properly trained. The additional critic loss term can also be plugged into other off-policy online RL algorithms directly. As an evidence, we combine the MCB operator with TD3 \cite{Fujimoto2018AddressingFA}, yielding a deterministic version of MCQ. Please refer to Appendix \ref{sec:determcq} for more details.

\section{Experiments}
In this section, we first empirically demonstrate the effectiveness and advantages of our proposed MCQ algorithm on D4RL benchmarks \cite{Fu2020D4RLDF}. We then conduct a detailed parameter study to show the hyperparameter sensitivity of MCQ. We also experimentally illustrate that the value estimation of MCQ will not incur severe overestimation and pessimistic value estimates are witnessed in practice. Finally, we show the superior offline-to-online fine-tuning benefits of MCQ on some MuJoCo datasets.

\subsection{Results on MuJoCo Datasets}

We experimentally compare our MCQ against behavior cloning (BC), SAC, and several recent strong baseline methods, CQL \cite{Kumar2020ConservativeQF}, UWAC \cite{Wu2021UncertaintyWA}, TD3+BC \cite{Fujimoto2021AMA}, and IQL \cite{kostrikov2022offline}, on D4RL \cite{Fu2020D4RLDF} benchmarks. We choose these methods as they typically represent different categories of model-free offline RL, i.e., CQL is a value penalization method, TD3+BC involves explicit policy constraint (BC loss), UWAC relies on uncertainty estimation for training, and IQL learns without querying OOD samples. 

%We conduct experiments on MuJoCo locomotion tasks, which is made up of five types of datasets (random, medium, medium-replay, medium-expert, and expert), yielding a total of 15 datasets. We use the most recent released "-v2" datasets for performance evaluation. The results of BC and SAC are acquired by using our implemented code. The performance of CQL\footnote{https://github.com/aviralkumar2907/CQL} and UWAC\footnote{https://github.com/apple/ml-uwac} are obtained by running their official codes as the reported normalized scores in their vanilla papers are not obtained on MuJoCo "-v2" datasets. We take the results of TD3+BC from its original paper (Table 7 in \cite{Fujimoto2021AMA}). Since IQL paper does not report its performance on MuJoCo \emph{random} and \emph{expert} datasets, we run IQL using the official codebase\footnote{https://github.com/ikostrikov/implicit\_q\_learning} on them and take the results on medium, medium-replay, medium-expert datasets from its original paper directly. All methods are run for 1M gradient steps. 

We conduct experiments on MuJoCo locomotion tasks, which are made up of five types of datasets (random, medium, medium-replay, medium-expert, and expert), yielding a total of 15 datasets. We use the most recently released "-v2" datasets for performance evaluation. The results of BC and SAC are acquired by using our implemented code. The results of CQL and UWAC are obtained by running their official codes, because the reported scores in their papers are not obtained on MuJoCo "-v2" datasets. We take the results of TD3+BC from its original paper (Table 7 in \cite{Fujimoto2021AMA}). Since the IQL paper does not report its performance on MuJoCo \emph{random} and \emph{expert} datasets, we run IQL using the official codebase on them and take the results on medium, medium-replay, medium-expert datasets from its original paper directly. All methods are run for 1M gradient steps.

\begin{table}[htb]
  \caption{Normalized average score comparison of MCQ against baseline methods on D4RL benchmarks over the final 10 evaluations. 0 corresponds to a random policy and 100 corresponds to an expert policy. The experiments are run on MuJoCo "-v2" datasets over 4 random seeds. r = random, m = medium, m-r = medium-replay, m-e = medium-expert, e = expert. We \textbf{bold} the highest mean.}
  \label{tab:scorepaper}
  \small
  \centering
  \setlength{\tabcolsep}{3.5pt}
  \begin{tabular}{@{}lllllllll@{}}
    \toprule
    Task Name  & BC & SAC & CQL & UWAC & TD3+BC & IQL & MCQ (ours) \\
    \midrule
    halfcheetah-r & 2.2$\pm$0.0 & \textbf{29.7}$\pm$1.4 & 17.5$\pm$1.5 & 2.3$\pm$0.0 & 11.0$\pm$1.1 & 13.1$\pm$1.3 & 28.5$\pm$0.6 \\
    hopper-r & 3.7$\pm$0.6 & 9.9$\pm$1.5 & 7.9$\pm$0.4 & 2.7$\pm$0.3 & 8.5$\pm$0.6 & 7.9$\pm$0.2 & \textbf{31.8}$\pm$0.5 \\
    walker2d-r & 1.3$\pm$0.1 & 0.9$\pm$0.8 & 5.1$\pm$1.3 & 2.0$\pm$0.4 & 1.6$\pm$1.7 & 5.4$\pm$1.2 & \textbf{17.0}$\pm$3.0 \\
    halfcheetah-m & 43.2$\pm$0.6 & 55.2$\pm$27.8 & 47.0$\pm$0.5 & 42.2$\pm$0.4 & 48.3$\pm$0.3 & 47.4$\pm$0.2 & \textbf{64.3}$\pm$0.2 \\
    hopper-m & 54.1$\pm$3.8 & 0.8$\pm$0.0 & 53.0$\pm$28.5 & 50.9$\pm$4.4 & 59.3$\pm$4.2 & 66.2$\pm$5.7 &  \textbf{78.4}$\pm$4.3 \\
    walker2d-m & 70.9$\pm$11.0 & -0.3$\pm$0.2 & 73.3$\pm$17.7 & 75.4$\pm$3.0 & 83.7$\pm$2.1 & 78.3$\pm$8.7 & \textbf{91.0}$\pm$0.4 \\
    halfcheetah-m-r & 37.6$\pm$2.1 & 0.8$\pm$1.0 & 45.5$\pm$0.7 & 35.9$\pm$3.7 & 44.6$\pm$0.5 & 44.2$\pm$1.2 & \textbf{56.8}$\pm$0.6 \\
    hopper-m-r & 16.6$\pm$4.8 & 7.4$\pm$0.5 & 88.7$\pm$12.9 & 25.3$\pm$1.7 & 60.9$\pm$18.8 & 94.7$\pm$8.6 & \textbf{101.6}$\pm$0.8 \\
    walker2d-m-r & 20.3$\pm$9.8 & -0.4$\pm$0.3 & 81.8$\pm$2.7 & 23.6$\pm$6.9 & 81.8$\pm$5.5 & 73.8$\pm$7.1 & \textbf{91.3}$\pm$5.7 \\
    halfcheetah-m-e & 44.0$\pm$1.6 & 28.4$\pm$19.4 & 75.6$\pm$25.7 & 42.7$\pm$0.3 & \textbf{90.7}$\pm$4.3 & 86.7$\pm$5.3 & 87.5$\pm$1.3 \\
    hopper-m-e & 53.9$\pm$4.7 & 0.7$\pm$0.0 & 105.6$\pm$12.9 & 44.9$\pm$8.1 & 98.0$\pm$9.4 & 91.5$\pm$14.3 & \textbf{111.2}$\pm$0.1 \\
    walker2d-m-e & 90.1$\pm$13.2 & 1.9$\pm$3.9 & 107.9$\pm$1.6 & 96.5$\pm$9.1 & 110.1$\pm$0.5 & 109.6$\pm$1.0 & \textbf{114.2}$\pm$0.7 \\
    \midrule
    Average Above& 36.5 & 11.3 & 59.1 & 37.0 & 58.2 & 59.9 & \textbf{72.8} \\
    \midrule
    halfcheetah-e & 91.8$\pm$1.5 & -0.8$\pm$1.8 & 96.3$\pm$1.3 & 92.9$\pm$0.6 & \textbf{96.7}$\pm$1.1 & 95.0$\pm$0.5 & 96.2$\pm$0.4 \\
    hopper-e & 107.7$\pm$0.7 & 0.7$\pm$0.0 & 96.5$\pm$28.0 & 110.5$\pm$0.5 & 107.8$\pm$7 & 109.4$\pm$0.5 & \textbf{111.4}$\pm$0.4 \\
    walker2d-e & 106.7$\pm$0.2 & 0.7$\pm$0.3 & 108.5$\pm$0.5 & 108.4$\pm$0.4 & \textbf{110.2}$\pm$0.3 & 109.9$\pm$1.2 & 107.2$\pm$1.1 \\
    \midrule
    Total Average & 49.6 & 9.0 & 67.3 & 50.4 & 67.6 & 68.9 & \textbf{79.2} \\
    \bottomrule
  \end{tabular}
\end{table}

In our experiments, we set the number of sampled actions $N=10$ by default and tune the weighting coefficient $\lambda$. We report the $\lambda$ used for all tasks in Appendix \ref{sec:detail}, along with details on the experiments and implementation. We summarize the normalized average score comparison of MCQ against recent baselines in Table \ref{tab:scorepaper}. Unsurprisingly, we observe that MCQ behaves better than BC on all of the tasks, which is consistent with our theoretical analysis in Proposition \ref{prop:behaveatleastaswellas} and \ref{prop:milderpessimism}. MCQ also significantly outperforms the base SAC algorithm. Prior offline RL methods struggle for good performance on non-expert datasets like random and medium-replay, while MCQ surpasses them with a remarkable margin on many non-expert datasets. We attribute the less satisfying performance of prior offline RL methods to their \emph{strict conservatism}, which restricts their generalization beyond the support of the dataset and leads to limited performance. The results, therefore, validate our claim that milder pessimism is more we need for offline learning. Furthermore, MCQ is also competitive to baselines on expert datasets. MCQ achieves the best performance on 11 out of 15 datasets, yielding a total average score of \textbf{72.8} on non-expert datasets, and an average score of \textbf{79.2} on all 15 datasets. Whereas the second best method, IQL, has an average score of 59.9 on non-expert datasets and a total average score of 68.9 across all tasks. 
\begin{figure}
    \centering
    \subfigure[Effects of $\lambda$]{
    \label{fig:halfcheetahlambda}
    \includegraphics[width=0.32\textwidth]{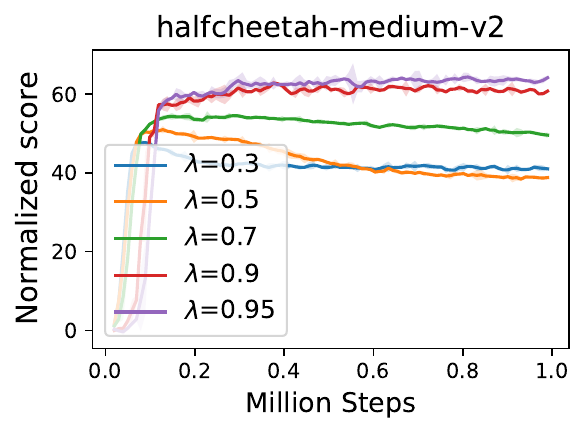}
    }\hspace{-2mm}
    \subfigure[Effects of $\lambda$]{
    \label{fig:hopperlambda}
    \includegraphics[width=0.32\textwidth]{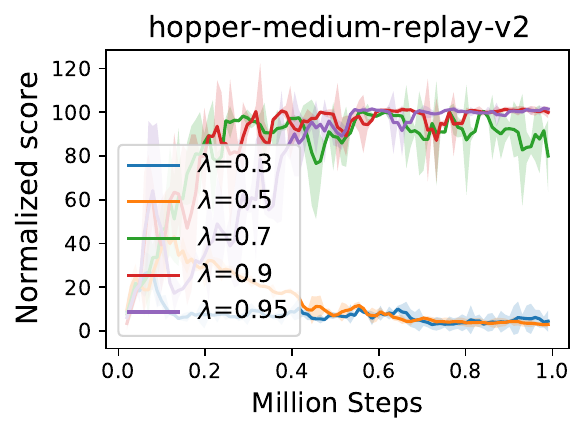}
    }\hspace{-2mm}
    \subfigure[$Q$ value w.r.t. $\lambda$]{
    \label{fig:hopperlambdaq}
    \includegraphics[width=0.32\textwidth]{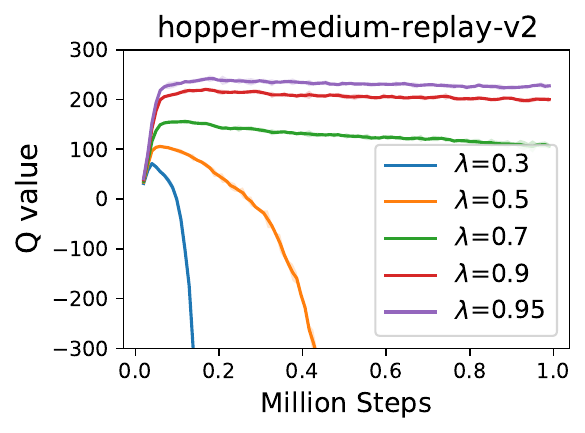}
    }\hspace{-2mm}
    \subfigure[Effects of $N$]{
    \label{fig:halfcheetahN}
    \includegraphics[width=0.32\textwidth]{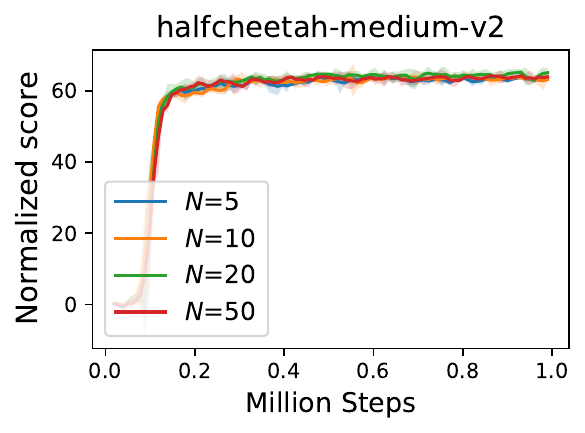}
    }\hspace{-2mm}
    \subfigure[Effects of. $N$]{
    \label{fig:hopperN}
    \includegraphics[width=0.32\textwidth]{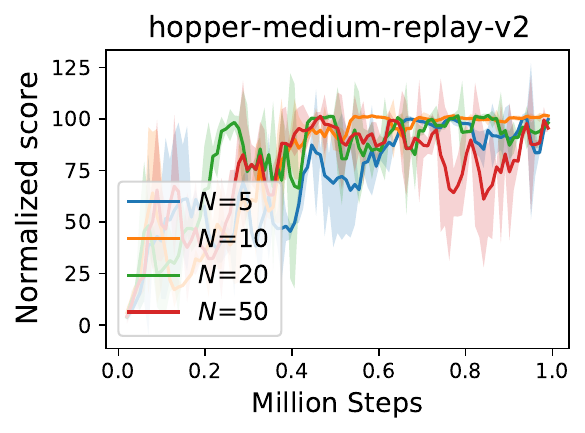}
    }\hspace{-2mm}
    \subfigure[$Q$ value w.r.t. $N$]{
    \label{fig:hopperNq}
    \includegraphics[width=0.32\textwidth]{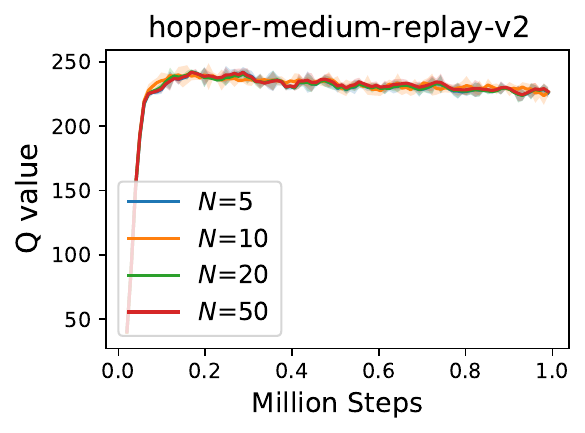}
    }\hspace{-2mm}
    \caption{Parameter study and $Q$ function estimation on halfcheetah-medium-v2 and hopper-medium-replay-v2. The shaded region captures the standard deviation.}
    \label{fig:parameterstudy}
    \vspace{-2mm}
\end{figure}

\subsection{Parameter Study}
\label{sec:parameterstudy}
In this subsection, we conduct a detailed parameter study on MCQ. MCQ generally contains two hyperparameters, weighting coefficient $\lambda$ and number of sampled actions $N$. To demonstrate the parameter sensitivity of MCQ, we choose two datasets from MuJoCo locomotion tasks and conduct experiments on them, halfcheetah-medium-v2, and hopper-medium-replay-v2. The experiments are run for 1M gradient steps over 4 different random seeds.

\textbf{Weighting coefficient $\lambda$.} The weighting coefficient $\lambda$ is a critical hyperparameter for MCQ, which directly controls the balance between in-distribution actions training and OOD actions training. If we set $\lambda=1$, then MCQ degenerates into the base SAC algorithm. If $\lambda$ leans towards 0, the critics will be overwhelmed by OOD actions. Intuitively, one ought not to use small $\lambda$, because more weights are desired for standard Bellman error such that in-distribution state-action pairs can be well-trained. We observe significant performance drop with smaller $\lambda$ in Figure \ref{fig:halfcheetahlambda} and \ref{fig:hopperlambda}. Also, we find that choosing $0.7\le\lambda<1$ generally induces good performance.

\textbf{Number of sampled actions $N$.} $N$ works as a regularizer to control the potential extrapolation error. In case the behavior policy $\mu$ is known, we require $N$ to be as large as possible to better estimate the maximum $Q$ value. While in practice, we leverage the CVAE to approximate $\mu$, from which OOD actions can be sampled. $N$ then plays a role to balance pessimism and generalization. To see the influence of $N$, we fix $\lambda=0.95$ for the two datasets. Experimental results in Figure \ref{fig:halfcheetahN} and \ref{fig:hopperN} indicate that MCQ is insensitive to $N$ for a wide range of $N$. We therefore set $N=10$ by default.

\textbf{Value estimation.} We present the $Q$ value estimates with respect to (w.r.t.) $\lambda$ and $N$ in Figure \ref{fig:hopperlambdaq} and \ref{fig:hopperNq}. The $Q$ estimation is calculated via $\mathbb{E}_{i=1,2}\mathbb{E}_{(s,a)\sim\mathcal{D}}[Q_{\theta_i}(s,a)]$. The results illustrate that (1) smaller $\lambda$ will incur severe underestimation issue (as depicted by Figure \ref{fig:hopperlambdaq}, $Q$ values collapse with $\lambda=0.5$ or $\lambda=0.3$); (2) no overestimation is observed, even with a large $\lambda = 0.95$, which validates the theoretical result in Proposition \ref{pro:2}; (3) the $Q$ estimates resemble each other under different $N$. We conclude that MCQ ensures a stable and good value estimation with a proper $\lambda$.
%, where no overestimation will happen.

\subsection{Offline-to-online Fine-tuning}

We examine the offline-to-online fine-tuning capability of MCQ against some prior strong offline RL baselines, CQL \cite{Kumar2020ConservativeQF}, TD3+BC \cite{Fujimoto2021AMA}, IQL \cite{kostrikov2022offline}. We additionally compare against AWAC \cite{Nair2020AcceleratingOR}, which is designed intrinsically for offline-to-online adaptation. We conduct experiments on MuJoCo \emph{random} and \emph{medium-replay} datasets. It is challenging to train on these datasets for both offline and offline-to-online fine-tuning as they are non-expert, or even contain many bad transitions. We first train baselines and MCQ for 1M gradient steps offline and then perform online fine-tuning for another 100K gradient steps. Note that IQL paper \cite{kostrikov2022offline} adopts 1M steps for online fine-tuning. However, we argue that 1M steps of online interactions are even enough to train off-policy online RL algorithms from scratch to perform very well. We thus believe 100K steps is more reasonable for the online interaction. All methods are run over 4 random seeds. The results are shown in Figure \ref{fig:offline2online}, where the shaded region denotes the standard deviation. As expected, we observe that MCQ consistently outperforms prior offline RL methods as well as AWAC on all of the datasets, often surpassing all of them with a large margin. The mild pessimism of MCQ makes it adapt faster, or keep the offline good performance during online interactions. Other prior offline RL methods, unfortunately, fail in achieving satisfying performance during online interaction due to strict conservatism and lack of generalization ability. 
%Besides, we find that CQL collapses on hopper-medium-replay-v2, while MCQ does not.

\begin{figure}
    \centering
    \includegraphics[width=\textwidth]{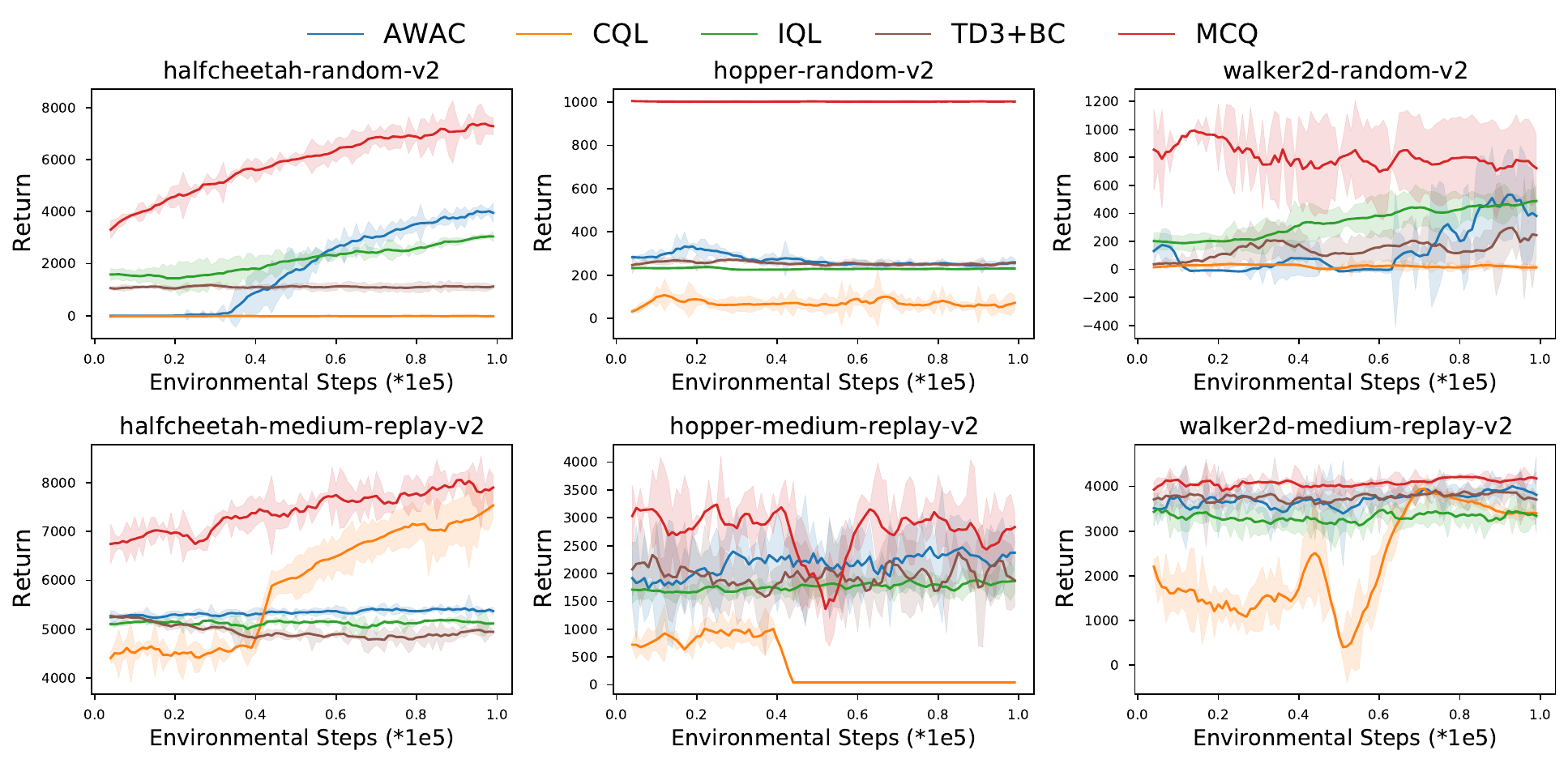}
    \vspace{-4mm}
    \caption{Offline-to-online fine-tuning results on 6 D4RL MuJoCo locomotion tasks.}
    \label{fig:offline2online}
    \vspace{-2mm}
\end{figure}

\section{Related Work}

%\noindent \textbf{Model-free offline RL.} Prior model-free offline RL methods are typically designed to restrict the learned policy from producing OOD actions. They usually achieve this by leveraging importance sampling \cite{Precup2001OffPolicyTD, Sutton2016AnEA, Liu2019OffPolicyPG, Nachum2019DualDICEBE, Gelada2019OffPolicyDR}, incorporating explicit policy constraints \cite{Kumar2019StabilizingOQ, Wu2019BehaviorRO, Ghasemipour2021EMaQEQ, Fujimoto2021AMA, Fakoor2021ContinuousDC}, learning latent actions \cite{Zhou2020PLASLA, ajay2021opal}, penalizing learned value functions such that low values are assigned to unseen actions \cite{Kumar2020ConservativeQF, Kostrikov2021OfflineRL, Ma2021ConservativeOD}, and uncertainty quantification \cite{Wu2021UncertaintyWA, Zanette2021ProvableBO, bai2022pessimistic}. Another line of the methods, instead, resort to learning without querying OOD actions \cite{Wang2018ExponentiallyWI, Chen2020BAILBI, kostrikov2022offline}. By doing so, they constrain the learning process within the datasets' support. Nevertheless, existing methods may induce unnecessarily over-pessimistic value functions, and their performance are largely confined by how well the behavior policy is \cite{Nair2020AcceleratingOR, Lee2021OfflinetoOnlineRL, bai2022pessimistic}. That partly explains why these methods are not satisfiable when trained on non-optimal datasets (e.g., random datasets). The MCB operator keeps milder conservatism and better generalization ability as OOD actions are \emph{actively} trained with a proper target.

\noindent \textbf{Model-free offline RL.} Prior model-free offline RL methods are typically designed to restrict the learned policy from producing OOD actions. They usually achieve this by leveraging importance sampling \cite{Precup2001OffPolicyTD, Sutton2016AnEA, Liu2019OffPolicyPG, Nachum2019DualDICEBE, Gelada2019OffPolicyDR}, incorporating explicit policy constraints \cite{Kumar2019StabilizingOQ, Wu2019BehaviorRO, Ghasemipour2021EMaQEQ, Fujimoto2021AMA, Fakoor2021ContinuousDC}, learning latent actions \cite{Zhou2020PLASLA, ajay2021opal}, penalizing learned value functions such that low values are assigned to unseen actions \cite{Kumar2020ConservativeQF, Kostrikov2021OfflineRL, Ma2021ConservativeOD}, using adaptive methods \cite{Ghosh2022OfflineRP}, and uncertainty quantification \cite{Wu2021UncertaintyWA, Zanette2021ProvableBO, bai2022pessimistic}. Another line of the methods, instead, resorts to learning without querying OOD actions \cite{Wang2018ExponentiallyWI, Chen2020BAILBI, kostrikov2022offline}. By doing so, they constrain the learning process within the support of the dataset. Nevertheless, existing methods may induce unnecessarily over-pessimistic value functions, and their performance is largely confined by how well the behavior policy is \cite{Nair2020AcceleratingOR, Lee2021OfflinetoOnlineRL, bai2022pessimistic}. That partly explains why these methods are not satisfiable when trained on non-expert datasets (e.g., random datasets). MCQ keeps milder conservatism and better generalization ability as OOD actions are \emph{actively} trained with proper targets.

%\noindent \textbf{Model-based offline RL.} Model-based offline RL methods, in contrast, learn the dynamics model in a supervised manner, and leverage the learned dynamics for policy optimization. Advances in this field include uncertainty quantification \cite{Ovadia2019CanYT, Yu2020MOPOMO, Kidambi2020MOReLM, Diehl2021UMBRELLAUM}, learning conservative value functions \cite{Yu2021COMBOCO}, representation learning \cite{lee2021representation, Rafailov2021OfflineRL}, constraining the learned policy with a behavior cloning loss \cite{matsushima2021deploymentefficient}, sequential modelling \cite{Chen2021DecisionTR, Janner2021ReinforcementLA, Meng2021OfflinePM}. However, there is no guarantee that the trained dynamics models are reliable, especially in complex high-dimensional environments \cite{Janner2019WhenTT}, e.g., poor transitions can be generated. Meanwhile, training dynamics models raises extra computation cost.

\noindent \textbf{Model-based offline RL.} Model-based offline RL methods, in contrast, learn the dynamics model in a supervised manner, and leverage the learned dynamics for policy optimization. Advances in this field include uncertainty quantification \cite{Ovadia2019CanYT, Yu2020MOPOMO, Kidambi2020MOReLM, Diehl2021UMBRELLAUM}, learning conservative value functions \cite{Yu2021COMBOCO}, representation learning \cite{lee2021representation, Rafailov2021OfflineRL}, constraining the learned policy with a behavior cloning loss \cite{matsushima2021deploymentefficient}, and sequential modelling \cite{Chen2021DecisionTR, Janner2021ReinforcementLA, Meng2021OfflinePM}. However, there is no guarantee that the trained dynamics models are reliable, e.g., poor transitions can be generated, especially in complex high-dimensional environments \cite{Janner2019WhenTT}. Meanwhile, training dynamics models raises extra computation costs.

%\noindent \textbf{Offline-to-online RL.} There are some efforts on accelerating online interactions with the aid of offline logged data, which is also referred as learning from demonstration \cite{Hester2018DeepQF, Kang2018PolicyOW, Rajeswaran2018LearningCD}. Offline-to-online RL, instead, aims at enhancing the well-trained offline policy via online interactions. To ensure a fast adaptation and stable policy improvement, many techniques are adopted, such as model ensemble \cite{Lee2021OfflinetoOnlineRL}, explicit policy constraints \cite{Nair2020AcceleratingOR, zhao2022adaptive}. Offline-to-online fine-tuning will be difficult if the trained value function or policy is overly pessimistic, which may possibly lead to a suboptimal policy. 

\noindent \textbf{Offline-to-online RL.} There are some efforts on accelerating online interactions with the aid of offline logged data, which is also referred to as learning from demonstration \cite{Hester2018DeepQF, Kang2018PolicyOW, Rajeswaran2018LearningCD}. Offline-to-online RL, instead, aims at enhancing the well-trained offline policy via online interactions. To ensure a fast adaptation and stable policy improvement, many techniques are adopted, such as model ensemble \cite{Lee2021OfflinetoOnlineRL}, explicit policy constraints \cite{Nair2020AcceleratingOR, zhao2022adaptive}. Offline-to-online fine-tuning will be difficult if the trained value function or policy is overly pessimistic, which may lead to a suboptimal policy. 

\section{Conclusion}

% In this paper, we propose Mildly Conservative $Q$-learning (MCQ) to alleviate the over pessimism in existing offline RL methods. MCQ actively train OOD actions by constructing them proper fake target values following the guidance of the practical Mildly Conservative Bellman (MCB) operator. We theoretically illustrate that the policy induced by the MCB operator behaves at least as well as the behavior policy, and no erroneously overestimation will ever occur for the practical MCB operator. Furthermore, we extensively compare MCQ against recent strong baseline methods on MuJoCo locomotion tasks. Experimental results show that MCQ surpasses these baseline methods with a large margin on many non-optimal datasets, and is competitive to baseline methods on optimal datasets. Moreover, we demonstrate the superior generalization capability of MCQ when transferring from offline to online stages. These altogether reveal that mild conservatism is critical for offline learning. We hope this work can promote the offline RL towards mild pessimism, and bring some new insights into the community.

% The major limitation of our work lies in the need of tuning weighting coefficient $\lambda$. However, we empirically find that $0.7\le\lambda<1$ can usually incur a satisfying performance. We leave the automatic tuning of $\lambda$ as future work.

In this paper, we propose Mildly Conservative $Q$-learning (MCQ) to alleviate the over pessimism in existing offline RL methods. MCQ actively train OOD actions by constructing them proper pseudo target values following the guidance of the practical Mildly Conservative Bellman (MCB) operator. We theoretically illustrate that the policy induced by the MCB operator behaves at least as well as the behavior policy, and no erroneous overestimation will occur for the practical MCB operator. Furthermore, we extensively compare MCQ against recent strong baselines on MuJoCo locomotion tasks. Experimental results show that MCQ surpasses these baselines with a large margin on many non-expert datasets, and is also competitive with baselines on expert datasets. Moreover, we demonstrate the superior generalization capability of MCQ when transferring from offline to online. These altogether reveal that mild conservatism is critical for offline learning. We hope this work can promote the offline RL towards mild pessimism, and bring new insights into the community.

One drawback of our current algorithm lies in the need of tuning the weighting coefficient $\lambda$. However, we empirically find that $0.7\le\lambda<1$ can usually induce satisfying performance. We leave the automatic tuning of $\lambda$ as future work.

\begin{ack}
This work was supported in part by the Science and Technology Innovation
2030-Key Project under Grant 2021ZD0201404, in part by the NSF China under Grant 61872009. The authors would like to thank the anonymous reviewers for their valuable comments and advice.
\end{ack}

\small
\bibliographystyle{abbrv}
\bibliography{neurips_2022.bib}

%%%%%%%%%%%%%%%%%%%%%%%%%%%%%%%%%%%%%%%%%%%%%%%%%%%%%%%%%%%%
\section*{Checklist}

\begin{enumerate}

\item For all authors...
\begin{enumerate}
  \item Do the main claims made in the abstract and introduction accurately reflect the paper's contributions and scope?
    \answerYes{}
  \item Did you describe the limitations of your work?
    \answerYes{}
  \item Did you discuss any potential negative societal impacts of your work?
    \answerNA{}
  \item Have you read the ethics review guidelines and ensured that your paper conforms to them?
    \answerYes{}
\end{enumerate}

\item If you are including theoretical results...
\begin{enumerate}
  \item Did you state the full set of assumptions of all theoretical results?
    \answerYes{}
        \item Did you include complete proofs of all theoretical results?
    \answerYes{}
\end{enumerate}

\item If you ran experiments...
\begin{enumerate}
  \item Did you include the code, data, and instructions needed to reproduce the main experimental results (either in the supplemental material or as a URL)?
    \answerYes{}
  \item Did you specify all the training details (e.g., data splits, hyperparameters, how they were chosen)?
    \answerYes{}
        \item Did you report error bars (e.g., with respect to the random seed after running experiments multiple times)?
    \answerYes{}
        \item Did you include the total amount of compute and the type of resources used (e.g., type of GPUs, internal cluster, or cloud provider)?
    \answerYes{}
\end{enumerate}

\item If you are using existing assets (e.g., code, data, models) or curating/releasing new assets...
\begin{enumerate}
  \item If your work uses existing assets, did you cite the creators?
    \answerYes{}
  \item Did you mention the license of the assets?
    \answerYes{}
  \item Did you include any new assets either in the supplemental material or as a URL?
    \answerNo{}
  \item Did you discuss whether and how consent was obtained from people whose data you're using/curating?
    \answerNo{}
  \item Did you discuss whether the data you are using/curating contains personally identifiable information or offensive content?
    \answerNA{}
\end{enumerate}

\item If you used crowdsourcing or conducted research with human subjects...
\begin{enumerate}
  \item Did you include the full text of instructions given to participants and screenshots, if applicable?
    \answerNA{}
  \item Did you describe any potential participant risks, with links to Institutional Review Board (IRB) approvals, if applicable?
    \answerNA{}
  \item Did you include the estimated hourly wage paid to participants and the total amount spent on participant compensation?
    \answerNA{}
\end{enumerate}

\end{enumerate}

\clearpage

\appendix

\section{Missing Proofs}
\label{sec:proof}
We recall the definition of the MCB operator below.
\begin{definition}
The Mildly Conservative Bellman (MCB) operator is defined as 
\begin{equation}
    \mathcal{T}_{\mathrm{MCB}} Q(s,a) = (\mathcal{T}_1\mathcal{T}_2) Q(s,a),
\end{equation}
where
\begin{equation}
    \mathcal{T}_1 Q(s,a) = \left\{  
         \begin{array}{lr}  
         Q(s,a), & \mu(a|s) > 0. \\  
        \max_{a^\prime\sim{\rm Support}(\mu(\cdot|s))}Q(s,a^\prime) - \delta, & \text{else}.\\
         \end{array}  
\right.  
\end{equation}
\begin{equation}
    \mathcal{T}_2 Q(s,a) = \left\{  
         \begin{array}{lr}  
         r(s,a) + \gamma \mathbb{E}_{s^\prime} \left[\max_{a^\prime\in\mathcal{A}} Q(s^\prime,a^\prime)\right],  & \mu(a|s)>0,  \\  
         {\color{blue} r(s,a) + \gamma \mathbb{E}_{s^\prime} \left[ Q(s,a) \right]}, & \text{else}. \\
         \end{array}  
\right.  
\end{equation}
\end{definition}

\begin{proposition}
\label{prop:prop1}
In the support region of the behavior policy, i.e., ${\rm Support}(\mu)$, the MCB operator is a $\gamma$-contraction operator in the $\mathcal{L}_\infty$ norm, and any initial $Q$ function can converge to a unique fixed point by repeatedly applying $\mathcal{T}_{\mathrm{MCB}}$.
\end{proposition}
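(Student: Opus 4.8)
The plan is to establish the $\gamma$-contraction directly and then invoke the Banach fixed-point theorem for existence, uniqueness, and convergence. First I would unfold the composite operator on an in-support pair $(s,a)$ with $\mu(a|s)>0$: since $\mathcal{T}_1$ leaves in-support entries unchanged, we have $\mathcal{T}_{\mathrm{MCB}} Q(s,a) = (\mathcal{T}_2 Q)(s,a) = r(s,a) + \gamma \mathbb{E}_{s'}[\max_{a'\in\mathcal{A}} Q(s',a')]$. The immediate worry is the inner $\max_{a'\in\mathcal{A}}$, which ranges over \emph{all} actions, including OOD ones whose values are unconstrained before any backup and could in principle dominate the maximum.

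The key observation that resolves this is that the composition order together with the subtraction of $\delta$ decouples the support dynamics from the OOD values. After a single application of $\mathcal{T}_1$, every OOD entry has been overwritten by $\max_{a'\in{\rm Support}(\mu(\cdot|s))} Q(s,a') - \delta$, which (because $\delta>0$) sits strictly below the in-support maximum. Hence for every iterate $Q_k = \mathcal{T}_{\mathrm{MCB}} Q_{k-1}$ with $k\ge1$ we have $\max_{a'\in\mathcal{A}} Q_k(s',a') = \max_{a'\in{\rm Support}(\mu(\cdot|s'))} Q_k(s',a')$. Therefore, restricted to the support, the update reduces to a support-constrained Bellman optimality backup $r(s,a) + \gamma \mathbb{E}_{s'}[\max_{a'\in{\rm Support}(\mu(\cdot|s'))} Q(s',a')]$ that depends only on the in-support entries of $Q$, so the recursion on ${\rm Support}(\mu)$ is self-contained.

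With this reduction in hand, contraction follows by a standard argument. For two functions $Q_1,Q_2$ and any in-support pair $(s,a)$, I would write $|\mathcal{T}_{\mathrm{MCB}} Q_1(s,a) - \mathcal{T}_{\mathrm{MCB}} Q_2(s,a)| = \gamma\,|\mathbb{E}_{s'}[\max_{a'} Q_1(s',a') - \max_{a'} Q_2(s',a')]|$, with the maxima taken over ${\rm Support}(\mu(\cdot|s'))$, then apply $|\max_x f(x) - \max_x g(x)| \le \max_x |f(x) - g(x)|$ and take the supremum over in-support pairs. This yields $\|\mathcal{T}_{\mathrm{MCB}} Q_1 - \mathcal{T}_{\mathrm{MCB}} Q_2\|_\infty \le \gamma \|Q_1 - Q_2\|_\infty$, where the norm is restricted to ${\rm Support}(\mu)$. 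Since the space of bounded $Q$ functions under $\mathcal{L}_\infty$ is complete, Banach's fixed-point theorem gives a unique fixed point $Q_{\mathrm{MCB}}$ on the support and guarantees convergence of the iterates from any initialization.

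I expect the main obstacle to be the second step, namely making rigorous that the $\max$ over $\mathcal{A}$ inside $\mathcal{T}_2$ can be replaced by the $\max$ over ${\rm Support}(\mu)$. This requires carefully tracking that the iterates lie in the image of $\mathcal{T}_1$, so that OOD values sit a margin $\delta$ below the in-support maximum, and treating the first application separately, since the initial $Q_0$ need not already have this structure. Once the support recursion is shown to be closed, the remaining contraction and fixed-point arguments are routine.
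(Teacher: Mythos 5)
Your proof is correct and follows essentially the same route as the paper's: reduce the backup on in-support pairs to a support-constrained Bellman optimality update, apply $|\max_x f(x) - \max_x g(x)| \le \max_x |f(x)-g(x)|$ to obtain the $\gamma$-contraction, and invoke the Banach fixed-point theorem. You are in fact more explicit than the paper about why the $\max_{a'\in\mathcal{A}}$ inside $\mathcal{T}_2$ can be replaced by $\max_{a'\in{\rm Support}(\mu(\cdot|s'))}$ from the second iterate onward (the paper handles this only implicitly, via a case split on whether $a'$ is in support and by silently substituting the pseudo values), so your added care closes a small gap rather than departing from the paper's argument.
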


\begin{proof}
Let $Q_1$ and $Q_2$ be two arbitrary $Q$ functions. Since $a\in{\rm Support}(\mu(\cdot|s))$, then if $a^\prime\in{\rm Support}(\mu(\cdot|s^\prime))$, we have
\begin{equation*}
\begin{aligned}
    \| \mathcal{T}_{\rm MCB}Q_1 - \mathcal{T}_{\rm MCB}Q_2 \|_\infty &= \| \mathcal{T}_{2}Q_1 - \mathcal{T}_{2}Q_2 \|_\infty \\
    &=\max_{s,a}\left | \left( r(s,a) + \gamma \mathbb{E}_{s^\prime}\left[\max_{a^\prime\in\mathcal{A}}Q_1(s^\prime,a^\prime)\right] \right) - \left( r(s,a) + \gamma \mathbb{E}_{s^\prime}\left[\max_{a^\prime\in\mathcal{A}}Q_2(s^\prime,a^\prime)\right] \right) \right | \\
    &= \gamma \max_{s,a}\left| \mathbb{E}_{s^\prime}\left[ \max_{a^\prime\sim\mathcal{A}}Q_1(s^\prime,a^\prime) - \max_{a^\prime\sim\mathcal{A}}Q_2(s^\prime,a^\prime) \right]  \right| \\
    &\le \gamma \max_{s,a}\mathbb{E}_{s^\prime}\left| \max_{a^\prime\sim\mathcal{A}}Q_1(s^\prime,a^\prime) - \max_{a^\prime\sim\mathcal{A}}Q_2(s^\prime,a^\prime)  \right| \\
    &\le \gamma \max_{s,a} \|Q_1 - Q_2\|_\infty \\
    &= \gamma \|Q_1 - Q_2\|_\infty.
\end{aligned}
\end{equation*}
While if $a^\prime\notin{\rm Support}(\mu(\cdot|s^\prime))$, then $\mathcal{T}_1$ in the MCB operator will work and assign OOD actions pseudo target values. Similarly, if $a^\prime\notin{\rm Support}(\mu(\cdot|s^\prime))$, we have
\begin{equation*}
\begin{aligned}
    &\| \mathcal{T}_{\rm MCB}Q_1 - \mathcal{T}_{\rm MCB}Q_2 \|_\infty \\
    &= \| \mathcal{T}_{1}\mathcal{T}_2 Q_1 - \mathcal{T}_{1}\mathcal{T}_2 Q_2 \|_\infty \\
    &= \max_{s,a}\left | \left( r(s,a) + \gamma \mathbb{E}_{s^\prime}\left[\max_{a^\prime\in{\rm Support}(\mu(\cdot|s^\prime))}Q_1(s^\prime,a^\prime) - \delta\right] \right) - \left( r(s,a) + \gamma \mathbb{E}_{s^\prime}\left[\max_{a^\prime\in{\rm Support}(\mu(\cdot|s^\prime))}Q_2(s^\prime,a^\prime) - \delta\right] \right) \right | \\
    &= \gamma \max_{s,a}\left| \mathbb{E}_{s^\prime}\left[ \max_{a^\prime\in{\rm Support}(\mu(\cdot|s^\prime))}Q_1(s^\prime,a^\prime) - \max_{a^\prime\in{\rm Support}(\mu(\cdot|s^\prime))}Q_2(s^\prime,a^\prime) \right]  \right| \\
    &\le \gamma \max_{s,a}\mathbb{E}_{s^\prime}\left| \max_{a^\prime\in{\rm Support}(\mu(\cdot|s^\prime))}Q_1(s^\prime,a^\prime) - \max_{a^\prime\in{\rm Support}(\mu(\cdot|s^\prime))}Q_2(s^\prime,a^\prime)  \right| \\
    &\le \gamma \max_{s,a} \|Q_1 - Q_2\|_\infty \\
    &= \gamma \|Q_1 - Q_2\|_\infty.
\end{aligned}
\end{equation*}
Combining the results together, we conclude that the MCB operator is a $\gamma$-contraction in the support region of the behavior policy, which naturally leads to the conclusion that any initial $Q$ function will converge to a unique fixed point by repeatedly applying $\mathcal{T}_{\rm MCB}$.
\end{proof}
In order to show Proposition \ref{prop:prop2}, we first have the following observations.
\begin{observation}
\label{obs:1}
In the support region of $\mu$, $Q_\mu$ is the fixed point of the Bellman operator $\mathcal{T}^\mu Q(s,a) := r(s,a) + \gamma \mathbb{E}_{s^\prime}\mathbb{E}_{a^\prime\sim\mu(\cdot|s^\prime)}[Q(s^\prime,a^\prime)]$.
\end{observation}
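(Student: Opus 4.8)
The plan is to establish the fixed-point claim directly from the definition of the state-action value function, and then obtain uniqueness from a standard contraction argument. First I would recall that for a fixed policy $\mu$ the value function is
\begin{equation*}
Q_\mu(s,a) = \mathbb{E}\left[\sum_{t=0}^\infty \gamma^t r(s_t, a_t) \,\Big|\, s_0 = s,\, a_0 = a\right],
\end{equation*}
where the trajectory evolves according to $s_{t+1} \sim p(\cdot|s_t, a_t)$ and $a_t \sim \mu(\cdot|s_t)$ for $t \geq 1$.

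Next I would peel off the $t=0$ term and factor a single $\gamma$ out of the remaining tail, rewriting the sum as $r(s,a)$ plus $\gamma$ times the discounted return accumulated from the second step onward. By the Markov property and time-homogeneity of the MDP, the conditional expectation of that shifted tail, given the first transition $s' \sim p(\cdot|s,a)$ and the first subsequent action $a' \sim \mu(\cdot|s')$, is exactly $Q_\mu(s', a')$. Applying the tower rule over $s'$ and $a'$ then collapses the expression to
\begin{equation*}
Q_\mu(s,a) = r(s,a) + \gamma\, \mathbb{E}_{s'}\mathbb{E}_{a' \sim \mu(\cdot|s')}\left[Q_\mu(s', a')\right] = \mathcal{T}^\mu Q_\mu(s,a),
\end{equation*}
which shows $Q_\mu$ is a fixed point of $\mathcal{T}^\mu$.

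For uniqueness I would argue, exactly as in the proof of Proposition \ref{prop:prop1}, that $\mathcal{T}^\mu$ is a $\gamma$-contraction in the $\mathcal{L}_\infty$ norm: the reward terms cancel and, since $\mu(\cdot|s')$ integrates to one, $\|\mathcal{T}^\mu Q_1 - \mathcal{T}^\mu Q_2\|_\infty \leq \gamma \|Q_1 - Q_2\|_\infty$, so the Banach fixed point theorem gives a unique fixed point. Since this observation is the standard policy-evaluation identity, there is no real obstacle; the only point that needs a word is the restriction to $\mathrm{Support}(\mu)$, and that is immediate because every action $a' \sim \mu(\cdot|s')$ lies in the support by construction, so the recursion never leaves the region where the identity is asserted.
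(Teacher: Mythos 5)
Your proposal is correct and follows essentially the same route as the paper: both rest on the standard fact that $\mathcal{T}^\mu$ is a $\gamma$-contraction in the $\mathcal{L}_\infty$ norm and hence has a unique fixed point. You additionally verify explicitly, by unrolling the definition of $Q_\mu$ and applying the Markov property and tower rule, that $Q_\mu$ satisfies the fixed-point equation — a step the paper leaves implicit by appealing to the "well known" policy-evaluation identity — so your write-up is the same argument carried out in slightly more detail.
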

\begin{proof}
It is well known that the Bellman operator is a $\gamma$-contraction in the $\mathcal{L}_\infty$ norm, and hence for some arbitrarily initialized $Q$ function, it is guaranteed to converge to a unique fixed point by repeatedly applying $\mathcal{T}^\mu$. Hence, $Q_\mu$ is the $Q$ function of the behavior policy $\mu$ in ${\rm Support}(\mu)$.
\end{proof}
\begin{observation}
\label{obs:2}
In the support region of $\mu$, $Q_{\mu^*}$ is the fixed point of the Bellman optimal operator $\mathcal{T} Q(s,a) := r(s,a) + \gamma \mathbb{E}_{s^\prime}\left[\max_{a^\prime\sim\mu(\cdot|s^\prime)}Q(s^\prime,a^\prime)\right]$.
\end{observation}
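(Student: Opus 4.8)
The plan is to mirror the reasoning behind Observation~\ref{obs:1}: I would establish that the batch-constrained Bellman optimality operator $\mathcal{T}$ appearing in the statement is a $\gamma$-contraction in the $\mathcal{L}_\infty$ norm over $\operatorname{Support}(\mu)$, invoke the Banach fixed-point theorem to obtain a unique fixed point, and finally identify that fixed point with $Q_{\mu^*}$. The only structural difference from the ordinary Bellman optimality operator of Eq.~(\ref{eq:bellmanoptimal}) is that the inner maximization here ranges over $\operatorname{Support}(\mu(\cdot|s^\prime))$ rather than the whole action space $\mathcal{A}$. Because this restricted action set is fixed and does not depend on the $Q$ function being fed in, the classical contraction argument carries over essentially verbatim.

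Concretely, I would take two arbitrary $Q$ functions $Q_1,Q_2$, cancel the identical reward terms, and reduce to bounding
\[
\|\mathcal{T}Q_1 - \mathcal{T}Q_2\|_\infty = \gamma \max_{s,a}\left| \mathbb{E}_{s^\prime}\!\left[ \max_{a^\prime\in\operatorname{Support}(\mu(\cdot|s^\prime))} Q_1(s^\prime,a^\prime) - \max_{a^\prime\in\operatorname{Support}(\mu(\cdot|s^\prime))} Q_2(s^\prime,a^\prime) \right]\right|.
\]
The key inequality is the standard $|\max_x f(x) - \max_x g(x)| \le \max_x |f(x)-g(x)|$, applied with the maximum taken over the common index set $\operatorname{Support}(\mu(\cdot|s^\prime))$ (well-defined and attained in the tabular setting). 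Pushing the absolute value inside the expectation and bounding each term by $\|Q_1-Q_2\|_\infty$ then yields $\|\mathcal{T}Q_1 - \mathcal{T}Q_2\|_\infty \le \gamma\|Q_1-Q_2\|_\infty$. This is precisely the computation already performed in the $a^\prime\notin\operatorname{Support}(\mu)$ branch of the proof of Proposition~\ref{prop:prop1}, so it can be reused almost directly.

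Having a $\gamma$-contraction, the Banach fixed-point theorem guarantees a unique fixed point to which any initial $Q$ converges under repeated application of $\mathcal{T}$. The remaining, and main, obstacle is to certify that this fixed point is exactly $Q_{\mu^*}$, the $Q$ function of the optimal policy supported on $\mu$. I would argue this by viewing the restriction as defining a \emph{batch-constrained} MDP whose admissible action set at each state is $\operatorname{Support}(\mu(\cdot|s))$: within this MDP the displayed equation is literally the Bellman optimality equation, standard dynamic-programming theory furnishes a stationary deterministic optimal policy $\mu^*(s)\in\arg\max_{a\in\operatorname{Support}(\mu(\cdot|s))}Q(s,a)$ whose value function satisfies that equation, and uniqueness of the fixed point forces the two to coincide. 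The delicate point to state carefully is that constraining actions to $\operatorname{Support}(\mu)$ is a well-posed restriction and that the $\mu^*$ so defined is the very \emph{optimal policy in the batch} invoked by Proposition~\ref{prop:behaveatleastaswellas}, so that the two observations together pin down the lower and upper envelopes $Q_\mu \le Q_{\mathrm{MCB}} \le Q_{\mu^*}$.
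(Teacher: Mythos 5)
Your proposal is correct and follows essentially the same route as the paper: the paper's own proof simply states that the argument is ``similar to Observation~\ref{obs:1}'' (contraction in $\mathcal{L}_\infty$, hence a unique fixed point) and then identifies that fixed point as the $Q$ function of the optimal batch-constrained policy. You merely spell out the contraction computation and the batch-constrained-MDP identification that the paper leaves implicit, which is a more complete rendering of the same argument rather than a different one.
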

\begin{proof}
The proof is similar to Observation \ref{obs:1}. Note that the resulting fixed point $Q_{\mu^*}$ is the $Q$ function of the optimal policy in the support region of the behavior policy. In offline RL setting, we only have a static logged dataset, then $Q_{\mu^*}$ is the $Q$ function of the optimal batch-constraint policy.
\end{proof}

\begin{proposition}[Behave at least as well as behavior policy]
\label{prop:prop2}
Denote $Q_{\mathrm{MCB}}$ as the unique fixed point acquired by the MCB operator, then in ${\rm Support}(\mu)$ we have: $Q_\mu \le Q_{\mathrm{MCB}}\le Q_{\mu^*}$, where $Q_\mu$ is the $Q$ function of the behavior policy and $Q_{\mu^*}$ is the $Q$ function of the optimal policy in the batch.
\end{proposition}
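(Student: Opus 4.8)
The plan is to pin down the fixed point $Q_{\mathrm{MCB}}$ on the support region explicitly, rather than trying to sandwich $\mathcal{T}_{\mathrm{MCB}}$ between $\mathcal{T}^\mu$ and $\mathcal{T}$ by a pointwise operator inequality. The latter route is tempting but fails: on the support region the composition $\mathcal{T}_1\mathcal{T}_2$ acts as $r(s,a)+\gamma\mathbb{E}_{s'}[\max_{a'\in\mathcal{A}}Q(s',a')]$, and for an \emph{arbitrary} $Q$ the maximum over all of $\mathcal{A}$ dominates the maximum over $\mathrm{Support}(\mu)$, so $\mathcal{T}_{\mathrm{MCB}}Q\ge\mathcal{T}Q$ pointwise — the wrong direction for the upper bound. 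The resolution is that this optimism is exactly cancelled at the fixed point by the $-\delta$ correction in $\mathcal{T}_1$.

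First I would write down the two defining equations of the fixed point. For $(s,a)$ with $\mu(a|s)>0$ the composition leaves the $\mathcal{T}_2$-backup untouched, giving $Q_{\mathrm{MCB}}(s,a)=r(s,a)+\gamma\mathbb{E}_{s'}[\max_{a'\in\mathcal{A}}Q_{\mathrm{MCB}}(s',a')]$; for an OOD pair $(s,a)$ it gives $Q_{\mathrm{MCB}}(s,a)=\max_{a''\in\mathrm{Support}(\mu(\cdot|s))}Q_{\mathrm{MCB}}(s,a'')-\delta$. The key step is then to observe that, because every OOD action receives a value strictly below the in-support maximum (by $\delta>0$), the inner maximization collapses: $\max_{a'\in\mathcal{A}}Q_{\mathrm{MCB}}(s',a')=\max_{a'\in\mathrm{Support}(\mu(\cdot|s'))}Q_{\mathrm{MCB}}(s',a')$ at every $s'$. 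Substituting this back into the in-support equation shows that $Q_{\mathrm{MCB}}$, restricted to $\mathrm{Support}(\mu)$, satisfies exactly the batch-constrained Bellman optimality equation $\mathcal{T}Q=Q$ of Observation~\ref{obs:2}. By uniqueness of that fixed point we conclude $Q_{\mathrm{MCB}}=Q_{\mu^*}$ on the support, which in particular yields the upper bound $Q_{\mathrm{MCB}}\le Q_{\mu^*}$.

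For the lower bound I would compare the behavior-policy operator $\mathcal{T}^\mu$ of Observation~\ref{obs:1} with the batch-constrained optimality operator $\mathcal{T}$ of Observation~\ref{obs:2}. Since an expectation under $\mu(\cdot|s')$ never exceeds the maximum over $\mathrm{Support}(\mu(\cdot|s'))$, we have $\mathcal{T}^\mu Q\le\mathcal{T}Q$ pointwise for every $Q$; both operators are monotone $\gamma$-contractions, so the standard fixed-point comparison (iterate both from a common $Q_0$ and pass to the limit using monotonicity) gives $Q_\mu\le Q_{\mu^*}$. Combining with the previous paragraph, $Q_\mu\le Q_{\mu^*}=Q_{\mathrm{MCB}}$, and the two bounds together establish $Q_\mu\le Q_{\mathrm{MCB}}\le Q_{\mu^*}$.

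I expect the main obstacle to be the upper bound, and specifically the need to argue at the level of the fixed point rather than via a uniform operator inequality: one must verify that the $-\delta$ shift is precisely what guarantees OOD actions are never selected by the inner $\max_{a'\in\mathcal{A}}$, so that the apparent optimism of maximizing over the full action space disappears and the recursion closes within the support. Handling the case where several in-support actions tie for the maximum, and confirming that the collapse of the maximum holds for \emph{every} reachable $s'$ simultaneously, are the details that need care; everything else reduces to the contraction and monotonicity facts already in hand.
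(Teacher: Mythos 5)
Your proof is correct, and it follows the same basic outline as the paper's — identify the fixed point on ${\rm Support}(\mu)$ with the batch-constrained optimal value and compare against the behavior-policy backup — but your execution differs in two ways that are worth noting, both of which make the argument tighter than the one in the appendix. First, the paper's proof silently replaces the $\max_{a^\prime\in\mathcal{A}}$ appearing in the definition of $\mathcal{T}_2$ by $\max_{a^\prime\in{\rm Support}(\mu(\cdot|s^\prime))}$ when it writes out $\mathcal{T}_{\rm MCB}Q$ at the fixed point; you make this step explicit and justify it, observing that every OOD action receives the value $\max_{a^{\prime\prime}\in{\rm Support}(\mu(\cdot|s))}Q_{\rm MCB}(s,a^{\prime\prime})-\delta$, which is strictly below the in-support maximum, so the maximum over all of $\mathcal{A}$ collapses to the maximum over the support at every successor state. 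This is precisely the role of the $-\delta$ term and it deserves to be said. Second, for the lower bound the paper writes $Q_{\rm MCB}\ge \mathcal{T}^\mu Q_{\rm MCB}=Q_\mu$, which as stated conflates applying $\mathcal{T}^\mu$ once to $Q_{\rm MCB}$ with reaching its fixed point; you instead establish $Q_\mu\le Q_{\mu^*}$ by the standard comparison of two monotone $\gamma$-contractions ($\mathcal{T}^\mu Q\le\mathcal{T}Q$ pointwise, iterate and pass to the limit) and then combine it with $Q_{\rm MCB}=Q_{\mu^*}$ on the support, which you obtain from uniqueness of the fixed point of the batch-constrained optimality operator. Your route yields the slightly stronger conclusion $Q_{\rm MCB}=Q_{\mu^*}$ on the support (which the paper's Eq.~(\ref{eq:1}) also implicitly asserts), and it is the cleaner of the two arguments; the only thing you give up is the paper's explicit condition on how small $\delta$ must be for the OOD branch of the lower bound, which your approach renders unnecessary since the proposition only concerns the support region.
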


\begin{proof}
Denote $\mathcal{T}Q(s,a)$ as the Bellman optimal operator and $\mathcal{T}^\mu Q(s,a)$ as the Bellman operator. Since $Q_{\rm MCB}$ is the fixed point of the MCB operator, then if $a^\prime\in{\rm Support}(\mu)$, we have
\begin{equation}
\label{eq:1}
\begin{aligned}
    Q_{\rm MCB}=\mathcal{T}_{\rm MCB}Q(s,a) &= \mathcal{T}_2 Q(s,a) \\ &=r(s,a)+\gamma\mathbb{E}_{s^\prime}\left[ \max_{a^\prime\in{\rm Support}(\mu(\cdot|s^\prime))}Q(s^\prime,a^\prime) \right] \\
    &= \mathcal{T}Q(s,a) = Q_{\mu^*}(s,a).
\end{aligned}
\end{equation}
Furthermore, we have
\begin{equation}
\label{eq:2}
\begin{aligned}
    Q_{\rm MCB}=\mathcal{T}_{\rm MCB}Q(s,a) &= \mathcal{T}_2 Q(s,a) \\ &=r(s,a)+\gamma\mathbb{E}_{s^\prime}\left[ \max_{a^\prime\in{\rm Support}(\mu(\cdot|s^\prime))}Q(s^\prime,a^\prime) \right] \\
    &\ge r(s,a)+\gamma\mathbb{E}_{s^\prime} \mathbb{E}_{a^\prime\in{\rm Support}(\mu(\cdot|s^\prime))}[Q(s^\prime,a^\prime)] \\
    &= \mathcal{T}^\mu Q(s,a) = Q_{\mu}(s,a).
\end{aligned}
\end{equation}
If $a^\prime\notin {\rm Support}(\mu)$, we have
\begin{equation}
\label{eq:3}
\begin{aligned}
    Q_{\rm MCB}=\mathcal{T}_{\rm MCB}Q(s,a) &= \mathcal{T}_1 \mathcal{T}_2 Q(s,a) \\ &=r(s,a)+\gamma\mathbb{E}_{s^\prime}\left[ \max_{a^\prime\in{\rm Support}(\mu(\cdot|s^\prime))}Q(s^\prime,a^\prime) - \delta \right] \\
    &< \mathcal{T}Q(s,a) = Q_{\mu^*}(s,a).
\end{aligned}
\end{equation}
Moreover, by setting $0<\delta\le \mathbb{E}_{s^\prime}\left[ \max_{a^\prime \in {\rm Support}(\mu(\cdot|s^\prime))}Q(s^\prime,a^\prime) - \mathbb{E}_{a^\prime\sim{\rm Support}(\mu(\cdot|s^\prime))}[Q(s^\prime,a^\prime)] \right]$, we have
\begin{equation}
\label{eq:4}
\begin{aligned}
    Q_{\rm MCB}=\mathcal{T}_{\rm MCB}Q(s,a) &= \mathcal{T}_1 \mathcal{T}_2 Q(s,a) \\ &=r(s,a)+\gamma\mathbb{E}_{s^\prime}\left[ \max_{a^\prime\in{\rm Support}(\mu(\cdot|s^\prime))}Q(s^\prime,a^\prime) - \delta \right] \\
    &\ge r(s,a)+\gamma\mathbb{E}_{s^\prime} \mathbb{E}_{a^\prime\in{\rm Support}(\mu(\cdot|s^\prime))}[Q(s^\prime,a^\prime)] \\
    &= \mathcal{T}^\mu Q(s,a) = Q_{\mu}(s,a).
\end{aligned}
\end{equation}
Then by combining Eq. (\ref{eq:1}) and Eq. (\ref{eq:3}), the RHS holds. By combining Eq. (\ref{eq:2}) and Eq. (\ref{eq:4}), the LHS holds, which concludes the proof.
\end{proof}

\noindent\textbf{Remark:} Note that $\delta$ is a small positive number that can be arbitrarily small. As is explained in the main text (see Section 3.1), we subtract a small positive numer such that no OOD actions will be executed, considering the fact that the action is taken by following $\arg\max_{a\in\mathcal{A}}Q(s,a)$. We actively assign pseudo target values for the OOD actions, and if the allocated value is identical to the optimal $Q$ value, actions that lie outside of the span of the behavior policy $\mu$ may be chosen and executed, which may be unsafe. By subtracting a small $\delta$, we ensure that the OOD action training will not affect the learning for the optimal batch-constraint policy.

\begin{proposition}[Milder Pessimism]
\label{prop:prop3}
Suppose there exists an explicit policy constraint offline reinforcement learning algorithm such that the KL-divergence of the learned policy $\pi_p(\cdot|s)$ and the behavior policy $\mu(\cdot|s)$ is optimized to guarantee $\max\left(\mathrm{KL}(\mu, \pi_p), \mathrm{KL}(\pi_p, \mu) \right)\le\epsilon,\,\forall s$. Denote $\epsilon_\mu^{\pi_p} = \max_s|\mathbb{E}_{a\sim\pi_p}A^\mu(s,a)|$, where $A^\mu(s,a)$ is the advantage function. Then
\begin{equation}
    J(\pi_p) \ge J(\mu) - \dfrac{\sqrt{2} \gamma\epsilon_\mu^{\pi_p}}{(1-\gamma)^2}\sqrt{\epsilon},
\end{equation}
while for the policy $\pi_{\rm MCB}$ learned by applying the MCB operator, we have
\begin{equation}
    J(\pi_{\rm MCB}) \ge J(\mu).
\end{equation}
\end{proposition}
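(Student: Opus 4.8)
The plan is to treat the two inequalities with separate machinery, since the policy-constraint bound is a perturbation argument while the MCB bound is an immediate corollary of Proposition~\ref{prop:prop2}.

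For the bound on $J(\pi_p)$, I would start from the performance difference lemma,
\[
J(\pi_p)-J(\mu)=\frac{1}{1-\gamma}\,\mathbb{E}_{s\sim d^{\pi_p}}\mathbb{E}_{a\sim\pi_p}\big[A^\mu(s,a)\big],
\]
where $d^{\pi_p}$ denotes the normalized discounted state-visitation distribution of $\pi_p$. Taken at face value this identity only produces a $\tfrac{1}{1-\gamma}$ prefactor, whereas the target has $\tfrac{1}{(1-\gamma)^2}$ and an explicit $\sqrt{\epsilon}$, so the key idea is to replace $d^{\pi_p}$ by $d^\mu$ and pay for the swap. Writing $g(s):=\mathbb{E}_{a\sim\pi_p}[A^\mu(s,a)]$ so that $\|g\|_\infty\le\epsilon_\mu^{\pi_p}$ by definition, I would decompose
\[
J(\pi_p)-J(\mu)=\frac{1}{1-\gamma}\mathbb{E}_{s\sim d^{\mu}}[g(s)]+\frac{1}{1-\gamma}\Big(\mathbb{E}_{s\sim d^{\pi_p}}[g(s)]-\mathbb{E}_{s\sim d^{\mu}}[g(s)]\Big),
\]
bound the second (mismatch) term by $\tfrac{1}{1-\gamma}\|g\|_\infty\|d^{\pi_p}-d^\mu\|_1$, and invoke the standard visitation-shift inequality $\|d^{\pi_p}-d^\mu\|_1\le\frac{2\gamma}{1-\gamma}\max_s D_{\rm TV}(\pi_p(\cdot|s)\,\|\,\mu(\cdot|s))$.

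To finish, I would convert total variation into the assumed KL bound via Pinsker's inequality, $D_{\rm TV}(\pi_p\|\mu)\le\sqrt{\tfrac12\mathrm{KL}(\pi_p,\mu)}\le\sqrt{\epsilon/2}$; substituting gives the mismatch term a magnitude of $\frac{2\gamma\epsilon_\mu^{\pi_p}}{(1-\gamma)^2}\sqrt{\epsilon/2}=\frac{\sqrt2\,\gamma\epsilon_\mu^{\pi_p}}{(1-\gamma)^2}\sqrt{\epsilon}$, matching the target constant. The leading surrogate term $\tfrac{1}{1-\gamma}\mathbb{E}_{s\sim d^\mu}[g(s)]$ I would argue is nonnegative, since $\pi_p$ arises from a policy-improvement step kept close to $\mu$ and $\mathbb{E}_{a\sim\mu}[A^\mu(s,a)]=0$, so it may be dropped from the lower bound. \emph{I expect this surrogate term to be the main obstacle}: justifying its sign is precisely where the hypothesis ``there exists an explicit policy-constraint algorithm'' must be used, and simply discarding it without argument would be the easiest way to get the bound wrong.

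For $J(\pi_{\rm MCB})\ge J(\mu)$ the argument is short and rides entirely on Proposition~\ref{prop:prop2}. That result gives $Q_{\rm MCB}(s,a)\ge Q_\mu(s,a)$ on $\mathrm{Support}(\mu)$, and its proof also shows $Q_{\rm MCB}$ is the fixed point equal to $Q^{\pi_{\rm MCB}}$ for the greedy policy $\pi_{\rm MCB}$. I would then write $J(\pi_{\rm MCB})=\mathbb{E}_{s_0\sim\rho_0}\mathbb{E}_{a\sim\pi_{\rm MCB}}[Q_{\rm MCB}(s_0,a)]$, use greediness of $\pi_{\rm MCB}$ (whose support sits inside that of $\mu$) to lower-bound $\mathbb{E}_{a\sim\pi_{\rm MCB}}[Q_{\rm MCB}(s_0,a)]\ge\mathbb{E}_{a\sim\mu}[Q_{\rm MCB}(s_0,a)]$, and finally apply the pointwise inequality $Q_{\rm MCB}\ge Q_\mu$ to conclude $J(\pi_{\rm MCB})\ge\mathbb{E}_{s_0\sim\rho_0}\mathbb{E}_{a\sim\mu}[Q_\mu(s_0,a)]=J(\mu)$. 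No perturbation estimates are needed for this half.
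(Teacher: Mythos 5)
Your proposal is correct and follows essentially the same route as the paper: for the first bound the paper invokes Corollary~1 of Achiam et al.\ (which is exactly the distribution-swap decomposition plus visitation-shift inequality you re-derive), applies Pinsker to convert total variation to the assumed KL bound with the same $\sqrt{2}$ constant, and drops the surrogate term $\frac{1}{1-\gamma}\mathbb{E}_{s\sim d_\mu}\mathbb{E}_{a\sim\pi_p}[A^\mu(s,a)]$ by asserting it is nonnegative (citing Proposition~2 of Wang et al.\ rather than arguing it in detail, so the step you flag as the main obstacle is handled no more rigorously in the paper than in your sketch); for the second bound the paper likewise writes $J(\pi_{\rm MCB})$ as an expectation of $Q_{\rm MCB}$ over in-support actions and applies $Q_{\rm MCB}\ge Q_\mu$ from Proposition~\ref{prop:prop2}.
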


\begin{proof}
By utilizing the performance difference lemma (Lemma 6.1 in \cite{Kakade2002ApproximatelyOA}), the return difference between two policies $\pi^\prime$ and $\pi$ gives:
\begin{equation}
    \label{eq:performancedifference}
    J(\pi^\prime) - J(\pi) = \dfrac{1}{1-\gamma} \sum_{s} d_{\pi^\prime}(s) \sum_{a}\left[\pi^\prime(a|s)A^{\pi}(s,a)\right],
\end{equation}
where $d_{\pi^\prime}(s)$ is the probability distribution underlying the states present in the static dataset $\mathcal{D}$. Furthermore, based on the Corollary 1 in \cite{Achiam2017ConstrainedPO}, we have:
\begin{equation}
    \label{eq:performancediff}
    J(\pi^\prime) - J(\pi) \ge \dfrac{1}{1-\gamma} \sum_{s} d_{\pi}(s) \sum_{a}\left[\pi^\prime(a|s)A^{\pi}(s,a)\right] - \dfrac{2\gamma\epsilon_\pi^{\pi^\prime}}{(1-\gamma)^2}D_{\rm TV}^{d_\pi}(\pi^\prime, \pi),
\end{equation}
where $\epsilon_\pi^{\pi^\prime} = \max_s|\mathbb{E}_{a\sim\pi^\prime}A^\pi(s,a)|$ and $D_{\rm TV}^{d_\pi}(p,q)$ denotes the total variation between two distributions $p(\cdot)$ and $q(\cdot)$ over the distribution $d_\pi$, i.e., $D_{\rm TV}^{d_\pi}(p,q) = \frac{1}{2}\sum_{d_\pi}\sum_x |p(x) - q(x)|$. We use $D_{\rm TV}(p,q)$ to denote the total variation between two distributions $p(\cdot)$ and $q(\cdot)$, $D_{\rm TV}(p,q) = \frac{1}{2}\sum_x |p(x) - q(x)|$.

Following similar procedure as Proposition 2 in \cite{Wang2018ExponentiallyWI}, we have
\begin{equation}
\begin{aligned}
    J(\pi_p) - J(\mu) &\ge \dfrac{1}{1-\gamma} \sum_{s} d_{\mu}(s) \sum_{a}\pi_p(a|s)A^{\mu}(s,a) - \dfrac{2\gamma\epsilon_\mu^{\pi_p}}{(1-\gamma)^2}D_{\rm TV}^{d_\mu}(\pi_p, \mu) \\
    &\ge 0 - \dfrac{2\gamma\epsilon_\mu^{\pi_p}}{(1-\gamma)^2}D_{\rm TV}^{d_\mu}(\pi_p, \mu) \\
    &= - \dfrac{2\gamma\epsilon_\mu^{\pi_p}}{(1-\gamma)^2}D_{\rm TV}^{d_\mu}(\pi_p, \mu),
\end{aligned}
\end{equation}

Then, we have
\begin{equation}
    \label{eq:policyconstraint}
    \begin{aligned}
        J(\pi_p) - J(\mu) &\ge - \dfrac{2\gamma\epsilon_\mu^{\pi_p}}{(1-\gamma)^2}D_{\rm TV}^{d_\mu}(\pi_p, \mu) \\
        &= - \dfrac{2\gamma\epsilon_\mu^{\pi_p}}{(1-\gamma)^2}\sum_{s} d_{\mu}(s)\left[D_{\rm TV}(\pi_p(\cdot|s),\mu(\cdot|s))\right] \\
        & \ge -\dfrac{\sqrt{2}\gamma \epsilon_\mu^{\pi_p}}{(1-\gamma)^2} \sum_{s} d_{\mu}(s) \min\left(\sqrt{ {\rm KL}(\mu(\cdot|s), \pi_p(\cdot|s))}, \sqrt{ {\rm KL}(\pi_p(\cdot|s), \mu(\cdot|s))} \right) \\
        & \ge -\dfrac{\sqrt{2}\gamma \epsilon_\mu^{\pi_p}}{(1-\gamma)^2} \sum_{s} d_{\mu}(s) \max\left(\sqrt{ {\rm KL}(\mu(\cdot|s), \pi_p(\cdot|s))}, \sqrt{ {\rm KL}(\pi_p(\cdot|s), \mu(\cdot|s))} \right) \\
        &\ge -\dfrac{\sqrt{2}\gamma\epsilon_\mu^{\pi_p}}{(1-\gamma)^2} \sqrt{\epsilon}.
    \end{aligned}
\end{equation}
The third line above uses the Pinsker-Csiszatr inequality \cite{Csiszr2011InformationT}, and the fourth line holds due to the fact that $\max(a,b)\ge\min(a,b)$, and the final line holds due to the assumption we make, i.e., the KL-divergence between the learned policy and the behavior policy is optimized to guarantee $\max\left(\mathrm{KL}(\mu, \pi_p), \mathrm{KL}(\pi_p, \mu)\right)\le\epsilon,\,\forall s$. We hence conclude that the explicit policy constraint RL algorithms induce a safe policy improvement, i.e., $J(\pi_p) \ge J(\mu) - \mathcal{O}\left( \dfrac{1}{(1-\gamma)^2} \right)$.

For the policy induced by the MCB operator, however, we have
\begin{equation}
    \label{eq:performancemcb}
    \begin{aligned}
        J(\pi_{\rm MCB}) &= \mathbb{E}_{s\sim\mathcal{D}}\left[ V_{\pi_{\rm MCB}}(s) \right] \\
        & = \mathbb{E}_{s\sim\mathcal{D}}\mathbb{E}_{a\sim\pi_{\rm MCB}} \left[ Q_{\rm MCB}(s,a) \right] \\
        & = \mathbb{E}_{s\sim\mathcal{D}}\mathbb{E}_{a\sim{\rm Support}(\mu(\cdot|s))} \left[ Q_{\rm MCB}(s,a) \right] \\
        &\ge \mathbb{E}_{s\sim\mathcal{D}}\mathbb{E}_{a\sim{\rm Support}(\mu(\cdot|s))} \left[ Q_{\mu}(s,a) \right] \quad ({\rm By}\, {\rm using}\, {\rm Proposition}\, \ref{prop:prop2}) \\
        & = J(\mu).
    \end{aligned}
\end{equation}
The third line holds as the policy induced by the MCB operator will not execute actions that lie outside of the support region of behavior policy. That allows us to conclude the proof.
\end{proof}

\noindent\textbf{Remark:} Note that in Theorem 3.6 of \cite{Kumar2020ConservativeQF}, CQL also guarantees a safe policy improvement, i.e., $J(\pi_{\rm CQL})\ge J(\mu) - \mathcal{O}\left( \dfrac{1}{(1-\gamma)^2} \right)$, which is consistent with the lower bound of the explicit policy constraint methods. These methods are often overly pessimistic, while the policy learned by applying the MCB operator can improve the policy with a tighter lower bound.

\begin{proposition}
\label{pro:prop4}
Proposition \ref{prop:prop1} still holds for the practical MCB operator.
\end{proposition}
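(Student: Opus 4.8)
The plan is to mirror the contraction argument from Proposition \ref{prop:prop1}, isolating the only place where the practical operator $\mathcal{\hat{T}}_{\mathrm{MCB}}$ differs from the original, namely the second branch of $\mathcal{\hat{T}}_1$ in Eq. (\ref{eq:practicalmcb}). First I would fix two arbitrary $Q$ functions $Q_1, Q_2$ and consider a state-action pair with $a\in{\rm Support}(\mu(\cdot|s))$. As in the original proof, I would split into two cases according to whether the next-state action $a^\prime$ lies in ${\rm Support}(\mu(\cdot|s^\prime))$. The in-support case is verbatim identical to Proposition \ref{prop:prop1}, because $\mathcal{T}_2$ is unchanged and the argument reduces to the standard Bellman optimality backup, yielding the $\gamma$-contraction bound directly.

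The substantive case is when $a^\prime\notin{\rm Support}(\mu(\cdot|s^\prime))$, where $\mathcal{\hat{T}}_1$ replaces the pseudo target by $\mathbb{E}_{\{a_i^\prime\}^N\sim\hat{\mu}(\cdot|s^\prime)}\left[\max_{a^\prime\sim\{a_i^\prime\}^N}Q(s^\prime,a^\prime)\right]$. The key observation I would exploit is that the expectation over sampled action sets and the inner $\max$ are both nonexpansive maps in the $\mathcal{L}_\infty$ norm: for any fixed action set $\{a_i^\prime\}^N$, we have $\bigl|\max_{a^\prime\in\{a_i^\prime\}^N}Q_1(s^\prime,a^\prime) - \max_{a^\prime\in\{a_i^\prime\}^N}Q_2(s^\prime,a^\prime)\bigr|\le \|Q_1 - Q_2\|_\infty$ by the standard inequality that a max of bounded differences is bounded by the sup of differences. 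Taking expectation over $\{a_i^\prime\}^N\sim\hat{\mu}$ preserves this bound since it is an average of quantities each dominated by $\|Q_1-Q_2\|_\infty$. Then peeling off the reward term $r(s,a)$ (which cancels) and the discount factor $\gamma$, I would chain the inequalities exactly as in Eq. following Proposition \ref{prop:prop1} to obtain $\|\mathcal{\hat{T}}_{\rm MCB}Q_1 - \mathcal{\hat{T}}_{\rm MCB}Q_2\|_\infty \le \gamma\|Q_1-Q_2\|_\infty$.

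Combining the two cases establishes that $\mathcal{\hat{T}}_{\mathrm{MCB}}$ is a $\gamma$-contraction in ${\rm Support}(\mu)$, and the Banach fixed-point theorem then gives existence and uniqueness of the fixed point together with convergence from any initialization, completing the proof. The main obstacle — though a mild one — is verifying the nonexpansiveness of the composite map $\mathbb{E}_{\hat{\mu}}[\max_{\{a_i^\prime\}^N}(\cdot)]$; one must be careful that the expectation is over the \emph{random sampling} of the action set rather than over a fixed distribution of individual actions, but since for each realized set the max-difference bound holds pointwise and the bound $\|Q_1-Q_2\|_\infty$ is independent of the realization, the expectation of a constant-dominated quantity remains dominated by that same constant. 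No additional assumptions beyond those of Proposition \ref{prop:prop1} are needed, so the structure of the proof is essentially a drop-in substitution with this one lemma-like step handled carefully.
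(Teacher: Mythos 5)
Your proposal is correct and follows essentially the same route as the paper's proof: isolate the out-of-support branch, use the nonexpansiveness of $\max_{a^\prime\in\{a_i^\prime\}^N}$ in the $\mathcal{L}_\infty$ norm (which the paper proves via the $\arg\max$ trick $Q_1(s^\prime,\hat{a}) - Q_2(s^\prime,\hat{a}) \le \|Q_1-Q_2\|_\infty$), note that the expectation over sampled action sets preserves the bound, and conclude the $\gamma$-contraction. The one lemma-like step you flag as needing care is exactly the step the paper spells out explicitly.
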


\begin{proof}
The practical MCB operator only modifies $\mathcal{T}_1$, i.e., the target value for the OOD actions. We only discuss $a^\prime \notin {\rm Support}(\mu(\cdot|s^\prime))$ here because for $a^\prime\in {\rm Support}(\mu(\cdot|s^\prime))$, the proof is identical as Proposition \ref{prop:prop1}. For $a^\prime \notin {\rm Support}(\mu(\cdot|s^\prime))$, let $Q_1$ and $Q_2$ be two arbitrary $Q$ functions, then
\begin{equation*}
    \begin{aligned}
        &\|\mathcal{\hat{T}}_{\rm MCB} Q_1 - \mathcal{\hat{T}}_{\rm MCB} Q_2 \|_\infty \\
        & = \|\hat{\mathcal{T}}_1\mathcal{T}_2 Q_1 - \hat{\mathcal{T}_1}\mathcal{T}_{2} Q_2 \|_\infty \\
        &= \max_{s,a} \left| \left( r(s,a) + \gamma \mathbb{E}_{s^\prime\sim\mathcal{D}}\mathbb{E}_{\{a_i^\prime\}^N\sim{\rm Support}(\hat{\mu})}\left[ \max_{a^\prime\in\{a_i^\prime\}^N}Q_1(s^\prime,a^\prime) \right] \right) \right . \\
        &\qquad \qquad \qquad \left . - \left( r(s,a) + \gamma \mathbb{E}_{s^\prime\sim\mathcal{D}}\mathbb{E}_{\{a_i^\prime\}^N\sim{\rm Support}(\hat{\mu})}\left[ \max_{a^\prime\in\{a_i^\prime\}^N}Q_2(s^\prime,a^\prime) \right] \right)  \right| \\
        &= \gamma \max_{s,a}\left| \mathbb{E}_{s^\prime\sim\mathcal{D}}\mathbb{E}_{\{a_i^\prime\}^N\sim{\rm Support}(\hat{\mu})}\left[ \max_{a^\prime\in\{a_i^\prime\}^N}Q_1(s^\prime,a^\prime) - \max_{a^\prime\in\{a_i^\prime\}^N}Q_2(s^\prime,a^\prime) \right]  \right| \\
        & \le \gamma \max_{s,a}\mathbb{E}_{s^\prime\sim\mathcal{D}}\mathbb{E}_{\{a_i^\prime\}^N\sim{\rm Support}(\hat{\mu})} \left| \max_{a^\prime\sim\{a_i^\prime\}^N}Q_1(s^\prime,a^\prime) - \max_{a^\prime\sim\{a_i^\prime\}^N}Q_2(s^\prime,a^\prime) \right| \\
        & \le \gamma\max_{s,a}\mathbb{E}_{s^\prime\sim\mathcal{D}}\mathbb{E}_{\{a_i^\prime\}^N} \| Q_1 - Q_2\|_\infty \\
        &= \gamma \|Q_1 - Q_2\|_\infty,
    \end{aligned}
\end{equation*}
where we use a fact that $\max_{a^\prime\in\{a_i^\prime\}^N}Q_1(s^\prime,a^\prime) - \max_{a^\prime\in\{a_i^\prime\}^N}Q_2(s^\prime,a^\prime) \le \|Q_1 - Q_2\|_\infty$. Its proof is quite straightforward. Let $\hat{a} = \arg\max_{\{a_i^\prime\}^N}Q_1(s^\prime,a_i^\prime)$, then
\begin{equation*}
\begin{aligned}
    \max_{a^\prime\in\{a_i^\prime\}^N}Q_1(s^\prime,a^\prime) - \max_{a^\prime\in\{a_i^\prime\}^N}Q_2(s^\prime,a^\prime) &= Q_1(s^\prime,\hat{a}) - \max_{a^\prime\in\{a_i^\prime\}^N}Q_2(s^\prime,a^\prime) \\
    &\le Q_1(s^\prime,\hat{a}) - Q_2(s^\prime,\hat{a}) \le \|Q_1 - Q_2\|_\infty.
\end{aligned}
\end{equation*}
Therefore, we conclude that Proposition \ref{prop:prop1} still holds for the practical MCB operator.
\end{proof}

\begin{proposition}[No erroneous overestimation will occur]
\label{pro:prop5}
Assuming that $\sup_s D_{\rm TV}( \hat \mu( \cdot|s) \parallel \mu( \cdot|s)) \leq \epsilon < \frac{1}{2}$, we have
\begin{align*}
    \mathbb{E}_{\{a_i^\prime\}^N\sim \hat{\mu}(\cdot|s)} \left[\max_{a^\prime\in\{a_i^\prime\}^N}Q(s,a^\prime)\right] \leq \max_{a^\prime \in  \operatorname{Support}(\mu(\cdot|s))} Q(s,a^\prime) + (1 - (1 - 2\epsilon)^N) \dfrac{r_{\rm max}}{1-\gamma}.
\end{align*}
\end{proposition}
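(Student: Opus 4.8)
The plan is to bound the probability that the empirical maximum over $N$ samples from $\hat\mu$ falls outside $\operatorname{Support}(\mu(\cdot|s))$, and then control the magnitude of the overshoot by the crude uniform bound $|Q(s,a)|\le \frac{r_{\max}}{1-\gamma}$. First I would split the expectation into two events: the event $\mathcal{E}$ that at least one of the $N$ sampled actions lies in $\operatorname{Support}(\mu(\cdot|s))$, and its complement $\mathcal{E}^c$ that \emph{all} $N$ sampled actions are out of support. On $\mathcal{E}$, the maximum over the sample is at least the $Q$-value of the in-support sample, but more to the point the event where the empirical max can exceed $\max_{a'\in\operatorname{Support}(\mu)}Q(s,a')$ is exactly $\mathcal{E}^c$ (if one sample is in-support the max is still capped by the in-support maximum only when the out-of-support values do not exceed it, so I must be a little careful — see the obstacle below).

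The key quantitative input is that a single draw $a'\sim\hat\mu(\cdot|s)$ lands outside $\operatorname{Support}(\mu(\cdot|s))$ with probability at most $2\epsilon$. This follows from the total-variation assumption: since $\mu$ assigns zero mass to the complement of its own support, $\hat\mu(\operatorname{Support}(\mu)^c\mid s) = \hat\mu(\operatorname{Support}(\mu)^c\mid s) - \mu(\operatorname{Support}(\mu)^c\mid s) \le 2 D_{\rm TV}(\hat\mu(\cdot|s)\parallel\mu(\cdot|s)) \le 2\epsilon$, using the definition $D_{\rm TV}=\frac12\sum_x|\hat\mu(x)-\mu(x)|$ together with the variational characterization that the TV distance upper-bounds the difference in probability assigned to any single event up to the factor coming from the $\frac12$. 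Because the $N$ draws are i.i.d., the probability that all $N$ land out of support is at most $(2\epsilon)^N$, hence the probability that at least one lands in support is at least $1-(2\epsilon)^N$; equivalently $\Pr[\mathcal{E}^c]\le (2\epsilon)^N$. Then I would write
\begin{align*}
\mathbb{E}\Big[\max_{a'\in\{a_i'\}^N}Q(s,a')\Big]
&\le \Pr[\mathcal{E}]\cdot \max_{a'\in\operatorname{Support}(\mu(\cdot|s))}Q(s,a') + \Pr[\mathcal{E}^c]\cdot \frac{r_{\max}}{1-\gamma},
\end{align*}
and bound $\max_{a'\in\operatorname{Support}(\mu)}Q(s,a')$ trivially by $\frac{r_{\max}}{1-\gamma}$ only inside the $\mathcal{E}^c$ term while keeping it exact inside the $\mathcal{E}$ term, which after collecting terms and using $\Pr[\mathcal{E}]\le 1$ yields the stated bound $\max_{a'\in\operatorname{Support}(\mu)}Q + (1-(1-2\epsilon)^N)\frac{r_{\max}}{1-\gamma}$, once I reconcile $(2\epsilon)^N$ with the claimed $(1-2\epsilon)^N$.

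The main obstacle is precisely this last reconciliation: a naive single-draw bound gives out-of-support probability $\le 2\epsilon$ and thus $(2\epsilon)^N$ for the all-out event, whereas the statement features $(1-2\epsilon)^N$, i.e. it treats $1-2\epsilon$ as a lower bound on the \emph{in}-support probability of a single draw. The clean way to get there is to lower-bound $\hat\mu(\operatorname{Support}(\mu)\mid s)\ge \mu(\operatorname{Support}(\mu)\mid s) - D_{\rm TV} = 1-\epsilon$ and note the paper's factor of $2$ suggests they instead use $\hat\mu(\operatorname{Support}(\mu)\mid s)\ge 1-2\epsilon$, so that $\Pr[\text{at least one in support}] = 1-\hat\mu(\operatorname{Support}(\mu)^c)^N \ge 1-(2\epsilon)^N$ while the complementary ``all in support'' style counting gives the $(1-2\epsilon)^N$ factor multiplying the overshoot. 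I would therefore set up the decomposition so that $1-(1-2\epsilon)^N$ multiplies the error term $\frac{r_{\max}}{1-\gamma}$ — this is the probability that the maximizing sample is in-support being at least $(1-2\epsilon)^N$ under the stronger reading, making the excess mass at most $1-(1-2\epsilon)^N$. Getting the event definitions and the direction of each inequality to line up with the published constant is the delicate bookkeeping; the probabilistic content (i.i.d. draws, TV bounds single-event probability, uniform $Q$ bound) is otherwise routine.
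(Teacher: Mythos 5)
Your probabilistic ingredients are the right ones and match the paper's (the TV assumption gives a single draw from $\hat\mu(\cdot|s)$ an out-of-support probability of at most $2\epsilon$, the $N$ draws are i.i.d., and the overshoot is controlled by the uniform bound $|Q|\le r_{\max}/(1-\gamma)$), but the event decomposition you actually commit to is the wrong one, and the ``obstacle'' you flag at the end is not a bookkeeping issue to be reconciled — it is the symptom of that wrong choice. With $\mathcal{E}$ defined as ``at least one of the $N$ samples is in $\operatorname{Support}(\mu(\cdot|s))$,'' the inequality
\begin{equation*}
\mathbb{E}\Bigl[\max_{a'\in\{a_i'\}^N}Q(s,a')\Bigr]\le \Pr[\mathcal{E}]\cdot\max_{a'\in\operatorname{Support}(\mu(\cdot|s))}Q(s,a')+\Pr[\mathcal{E}^c]\cdot\frac{r_{\max}}{1-\gamma}
\end{equation*}
is simply false in general: on $\mathcal{E}$ the batch may still contain out-of-support samples whose $Q$-values exceed the in-support maximum, so the empirical max on $\mathcal{E}$ is not capped by $\max_{\operatorname{Support}(\mu)}Q$. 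You notice this yourself (``I must be a little careful'') but never repair it; instead you try to massage $(2\epsilon)^N$ into $(1-2\epsilon)^N$, which cannot work because $(2\epsilon)^N$ is the probability of the event ``all $N$ samples out of support,'' and that is not the event on which the crude bound is needed.

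The fix — which is exactly the paper's proof — is to swap the roles of the two events: let the good event be $\mathcal{A}=\bigcap_i\{a_i'\in\operatorname{Support}(\mu(\cdot|s))\}$, on which the empirical max genuinely satisfies $\max_i Q(s,a_i')\le\max_{a'\in\operatorname{Support}(\mu(\cdot|s))}Q(s,a')$, and let the bad event be its complement $\mathcal{A}^c=\bigcup_i\{a_i'\notin\operatorname{Support}(\mu(\cdot|s))\}$, on which you pay the uniform bound. Since each draw is in support with probability at least $1-2\epsilon$ and the draws are independent, $\Pr[\mathcal{A}]\ge(1-2\epsilon)^N$ and $\Pr[\mathcal{A}^c]\le 1-(1-2\epsilon)^N$, which produces the stated constant directly with no reconciliation needed. (Your side remark that the variational characterization would give the sharper single-draw bound $\hat\mu(\operatorname{Support}(\mu)^c\mid s)\le\epsilon$ is a fair observation — the paper deliberately uses the looser $2\epsilon$ obtained by restricting the sum $\sum_a|\hat\mu(a|s)-\mu(a|s)|$ to out-of-support actions — but that is orthogonal to the gap above.) As written, your argument does not constitute a proof; with the decomposition corrected as above it becomes the paper's argument verbatim.
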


\begin{proof}
The total variation divergence $D_{TV}(p\|q) = \frac{1}{2}\sum_{x}|p(x) - q(x)|$ for some distributions $p(\cdot)$ and $q(\cdot)$, then we have
\begin{equation}
\begin{aligned}
    1 > 2\epsilon  \geq 2\sup_s D_{\rm TV}( \hat \mu( \cdot | s) \parallel \mu( \cdot | s)) &\geq \sum_a |\hat \mu(a | s) - \mu(a | s)| \\
    &= \sum_{a\in{\rm Support}(\mu(\cdot|s))} |\hat \mu(a | s) - \mu(a | s)| + \sum_{a\notin{\rm Support}(\mu(\cdot|s))} |\hat \mu(a | s) - \mu(a | s)| \\
    &\geq  \sum_{a \notin \operatorname{Support}(\mu(\cdot| s))} \hat \mu(a | s),
\end{aligned}
\end{equation}
where we use the fact that $\mu(a|s)=0$ if $a\notin{\rm Support}(\mu(\cdot|s))$. Denote $Q_{\rm max}$ as the maximum value of the OOD actions, and $Q_{\rm max}\le \frac{r_{\rm max}}{1-\gamma}$. Thus, we have
\begin{align*}
    & \mathbb{E}_{\{a_i^\prime\}^N\sim \hat{\mu}(\cdot|s)} \left[\max_{a^\prime\in\{a_i^\prime\}^N}Q(s,a^\prime)\right] \\
    &\leq \mathbb{P}\left(\bigcap_{i} \left\{ a_i^\prime \in  \operatorname{Support}(\mu(\cdot| s))\right\}\right) 
    \max_{a^\prime \in  \operatorname{Support}(\mu(\cdot| s))} Q(s,a^\prime) + \mathbb{P}\left(\bigcup_i \left\{a_i^\prime \notin  \operatorname{Support}(\mu(\cdot| s)) \right\}\right) Q_{\rm max} \\
    &\leq  \max_{a^\prime \in  \operatorname{Support}(\mu(\cdot| s))} Q(s,a^\prime) + \left(1 - \mathbb{P}( a_1^\prime \in  \operatorname{Support}(\mu(\cdot| s))
    )^N\right) \dfrac{r_{\rm max}}{1-\gamma} \\
    &\leq  \max_{a^\prime \in  \operatorname{Support}(\mu(\cdot| s))} Q(s,a^\prime) + \left(1 - (1 - 2\epsilon)^N\right) \dfrac{r_{\rm max}}{1-\gamma}.
\end{align*}
\end{proof}

\noindent\textbf{Remark:} Note that we empirically fit a behavior policy $\hat{\mu}(\cdot|s)$ as we usually do not have prior knowledge about the true behavior policy $\mu$. The conclusion says that for the practical MCB operator and empirical behavior policy $\hat{\mu}(\cdot|s)$, even if OOD actions are sampled from the $\hat{\mu}(\cdot|s)$, the extrapolation error is under the scale of $\left(1-(1 - 2\epsilon)^N\right) \dfrac{r_{\rm max}}{1-\gamma}$, indicating the fact that no severe overestimation will occur. If the empirical behavior policy well fits the true behavior policy, then the error term will be arbitrarily small and the value estimate will well approximate the maximum $Q$ value in the batch. We assume that the total variation divergence between $\hat{\mu}(\cdot|s)$ and $\mu$ is bounded by $\frac{1}{2}$, which is easy to satisfy in practice, e.g., fit the empirical behavior policy via supervised learning.

\section{Deterministic MCQ}
\label{sec:determcq}
As mentioned in the main text, the practical MCB operator is pluggable and can be combined with wide range of off-policy online RL methods. In this section, we present a deterministic version of MCQ by incorporating the practical MCB operator with TD3 \cite{Fujimoto2018AddressingFA}. 

Similarly, we train a CVAE $G_\omega$ which is made up of an encoder $E_\xi(s,a)$ and a decoder $D_\psi(s,z)$ as the generative model, which is optimized by:
\begin{equation}
    \label{eq:appendixCVAE}
    \mathcal{L}_{\rm CVAE} = \mathbb{E}_{(s,a)\sim\mathcal{D},z\sim E_\xi(s,a)}\left[ (a - D_\psi(s,z))^2 + {\rm KL}\left(E_\xi(s,a), \mathcal{N}(0,{\bf I})\right) \right].
\end{equation}

The objective function for the critics gives:
\begin{equation}
\label{eq:deterministiccriticloss}
    \mathcal{L}_{\rm critic} = \lambda \mathbb{E}_{(s,a,r,s^\prime)\sim \mathcal{D}}\left[ (Q_{\theta_i}(s,a) - y)^2 \right] + (1-\lambda){\color{red} \mathbb{E}_{s^{\rm in}\sim\mathcal{D}}\left[ (Q_{\theta_i}(s^{\rm in}, \pi(s^{\rm in})) - y^\prime)^2 \right]},
\end{equation}
where $\theta_i,i=1,2$, is the parameter of the critic network, and the target value for the in-distribution action gives:
\begin{equation}
    \label{eq:deterministictruetarget}
    y = r(s,a) + \gamma\left[\min_{i=1,2}Q_{\theta_i^\prime}(s^\prime,\pi_{\phi^\prime}(s^\prime))\right],
\end{equation}
where $\theta_i^\prime, i=1,2$ is the parameter of the lagging target network, and $\phi^\prime$ is the parameter of the actor network. For OOD actions, the target value gives:
\begin{equation}
    \label{eq:deterministicfaketarget}
    y^\prime = \min_{j=1,2}\mathbb{E}_{\{a_i'\}^N\sim\hat{\mu}}\left[ \max_{a^\prime\sim\{a_i'\}^N}Q_{\theta_j}(s^{\rm in},a^\prime) \right].
\end{equation}
The actor is then optimized by $\max_{\phi} \mathbb{E}_{s\sim\mathcal{D}}\left[Q_{\theta_1}(s,\pi_{\phi}(s))\right]$. The full procedure of the deterministic MCQ is presented in Algorithm \ref{alg:algmcqdeerministic}.

\begin{algorithm}
\caption{Deterministic version of Mildly Conservative $Q$-learning (MCQ)}
\label{alg:algmcqdeerministic}
\begin{algorithmic}[1] %[1] enables line numbers
\STATE Initialize CVAE $G_\omega$, critic networks $Q_{\theta_1}, Q_{\theta_2}$ and actor network $\pi_{\phi}$ with random parameters 
\STATE Initialize target networks $\theta_1^\prime \leftarrow \theta_1, \theta_2^\prime \leftarrow \theta_2, \phi^\prime\leftarrow \phi$ and offline replay buffer $\mathcal{D}$.
\FOR{$t$ = 1 to $T$}
\STATE Sample a mini-batch $B = \{(s,a,r,s^\prime,d)\}$ from $\mathcal{D}$, where $d$ is the done flag
\STATE Train CVAE via minimizing Eq. (\ref{eq:appendixCVAE})
\STATE Get target value: $y = r(s,a) + \gamma\left[\min_{i=1,2}Q_{\theta_i^\prime}(s^\prime,\pi_{\phi^\prime}(s^\prime)\right]$
\STATE Set $s^{\rm in} = \{s,s^\prime\}$ and compute the target value for the OOD actions via Eq. (\ref{eq:deterministicfaketarget})
\STATE Update critic $\theta_i$ with gradient descent via minimizing Eq. (\ref{eq:deterministiccriticloss})
\STATE Update $\phi$ by policy gradient: $\nabla_\phi J(\phi) = |B|^{-1}\sum \nabla_a Q_{\theta_1}(s,a)|_{a = \pi_\phi(s)} \nabla_\phi \pi_\phi(s)$
\STATE Update target networks: $\phi^\prime \leftarrow \tau \phi + (1-\tau)\phi^\prime, \theta_i^\prime\leftarrow\tau\theta_i + (1-\tau)\theta_i^\prime,\, i=1,2$
\ENDFOR
\end{algorithmic}
\end{algorithm}

Next, we empirically examine the performance of the deterministic MCQ by conducting experiments on D4RL \cite{Fu2020D4RLDF} MuJoCo locomotion tasks. As we want to focus on the benefits of MCQ on non-expert datasets, and considering the fact that all baselines can achieve good performance on expert datasets, we only report the performance comparison on non-expert datasets (random, medium, medium-replay, and medium-expert). The results are summarized in Table \ref{tab:deterministicmcq}. For the deterministic MCQ, we adopt identical $\lambda$ and $N$ as vanilla MCQ (see Table \ref{tab:parameters}) except that we use $\lambda=0.4$ on hopper-medium-expert-v2. We observe that the deterministic MCQ has a good performance on many non-expert datasets, which is consistent with vanilla MCQ (which is built upon SAC). The deterministic MCQ has an average score of 68.4 on non-expert datasets and the vanilla MCQ has an average score of 72.8. Note that it is not suitable to compare BC against the deterministic MCQ (though deterministic MCQ still outperforms BC on all of the datasets), since the actor in our BC is not deterministic.

All these validate our claim that the practical MCB operator is pluggable and can be combined with any off-policy online RL algorithm to make it work in the offline setting. Note that it is also very interesting to report all the numerical results based on \cite{Agarwal2021DeepRL}, which we omit here.

\begin{table}[htb]
  \caption{Normalized average score comparison of MCQ against baseline methods on D4RL benchmarks over the final 10 evaluations. 0 corresponds to a random policy and 100 corresponds to an expert policy. The experiments are run on MuJoCo "-v2" datasets over 4 random seeds. r = random, m = medium, m-r = medium-replay, m-e = medium-expert, e = expert. MCQ (TD3) denotes the deterministic version of MCQ that built upon TD3.}
  \label{tab:deterministicmcq}
  \small
  \centering
  \setlength{\tabcolsep}{2.8pt}
  \begin{tabular}{@{}lllllllll@{}}
    \toprule
    Task Name  & BC & SAC & CQL & UWAC & TD3+BC & IQL & MCQ (TD3) \\
    \midrule
    halfcheetah-r & 2.2$\pm$0.0 & \colorbox{yellow}{29.7}$\pm$1.4 & 17.5$\pm$1.5 & 2.3$\pm$0.0 & 11.0$\pm$1.1 & 13.1$\pm$1.3 & \colorbox{yellow}{23.6}$\pm$0.8 \\
    hopper-r & 3.7$\pm$0.6 & 9.9$\pm$1.5 & 7.9$\pm$0.4 & 2.7$\pm$0.3 & 8.5$\pm$0.6 & 7.9$\pm$0.2 & \colorbox{yellow}{31.0}$\pm$1.7 \\
    walker2d-r & 1.3$\pm$0.1 & 0.9$\pm$0.8 & 5.1$\pm$1.3 & 2.0$\pm$0.4 & 1.6$\pm$1.7 & 5.4$\pm$1.2 & \colorbox{yellow}{10.3}$\pm$6.8 \\
    halfcheetah-m & 43.2$\pm$0.6 & \colorbox{yellow}{55.2}$\pm$27.8 & 47.0$\pm$0.5 & 42.2$\pm$0.4 & 48.3$\pm$0.3 & 47.4$\pm$0.2 & \colorbox{yellow}{58.3}$\pm$1.3 \\
    hopper-m & 54.1$\pm$3.8 & 0.8$\pm$0.0 & 53.0$\pm$28.5 & 50.9$\pm$4.4 & 59.3$\pm$4.2 & \colorbox{yellow}{66.2}$\pm$5.7 & \colorbox{yellow}{73.6}$\pm$10.3  \\
    walker2d-m & 70.9$\pm$11.0 & -0.3$\pm$0.2 & 73.3$\pm$17.7 & 75.4$\pm$3.0 & \colorbox{yellow}{83.7}$\pm$2.1 & 78.3$\pm$8.7 & \colorbox{yellow}{88.4}$\pm$1.3 \\
    halfcheetah-m-r & 37.6$\pm$2.1 & 0.8$\pm$1.0 & \colorbox{yellow}{45.5}$\pm$0.7 & 35.9$\pm$3.7 & 44.6$\pm$0.5 & 44.2$\pm$1.2 & \colorbox{yellow}{51.5}$\pm$0.2 \\
    hopper-m-r & 16.6$\pm$4.8 & 7.4$\pm$0.5 & 88.7$\pm$12.9 & 25.3$\pm$1.7 & 60.9$\pm$18.8 & \colorbox{yellow}{94.7}$\pm$8.6 & \colorbox{yellow}{99.5}$\pm$1.7 \\
    walker2d-m-r & 20.3$\pm$9.8 & -0.4$\pm$0.3 & \colorbox{yellow}{81.8}$\pm$2.7 & 23.6$\pm$6.9 & \colorbox{yellow}{81.8}$\pm$5.5 & 73.8$\pm$7.1 & \colorbox{yellow}{83.3}$\pm$1.9 \\
    halfcheetah-m-e & 44.0$\pm$1.6 & 28.4$\pm$19.4 & 75.6$\pm$25.7 & 42.7$\pm$0.3 & \colorbox{yellow}{90.7}$\pm$4.3 & \colorbox{yellow}{86.7}$\pm$5.3 & \colorbox{yellow}{85.4}$\pm$3.4 \\
    hopper-m-e & 53.9$\pm$4.7 & 0.7$\pm$0.0 & \colorbox{yellow}{105.6}$\pm$12.9 & 44.9$\pm$8.1 & 98.0$\pm$9.4 & 91.5$\pm$14.3 & \colorbox{yellow}{106.1}$\pm$2.3 \\
    walker2d-m-e & 90.1$\pm$13.2 & 1.9$\pm$3.9 & \colorbox{yellow}{107.9}$\pm$1.6 & 96.5$\pm$9.1 & \colorbox{yellow}{110.1}$\pm$0.5 & \colorbox{yellow}{109.6}$\pm$1.0 & \colorbox{yellow}{110.3}$\pm$0.1 \\
    \midrule
    Average & 36.5 & 11.3 & 59.1 & 37.0 & 58.2 & 59.9 & \colorbox{yellow}{68.4} \\
    \bottomrule
  \end{tabular}
\end{table}

\section{Experimental Details and Parameter Setup}
\label{sec:detail}
In this section, we first give a brief introduction to the D4RL benchmarks \cite{Fu2020D4RLDF}. Then we present our implementation details and experimental details.
\subsection{D4RL Benchmarks}
In the paper, we evaluate MCQ on the D4RL MuJoCo-Gym domain, which contains three environments (halfcheetah, hopper, walker2d), and five types of datasets (random, medium, medium-replay, medium-expert, expert). \textbf{Random} datasets are gathered by a random policy. \textbf{Medium} datasets contain experience from an early-stopped SAC policy. \textbf{Medium-replay} datasets are collected during the training process of the "medium" SAC policy. \textbf{Medium-expert} datasets are formed by combining the suboptimal samples and the expert samples. \textbf{Expert} datasets are made up of expert trajectories. The illustration of MuJoCo tasks is shown in Figure \ref{fig:mujocoviaual}.

\begin{figure}[!htb]
    \centering
    \subfigure[HalfCheetah]{
    \label{fig:halfcheetah}
    \includegraphics[scale=0.295]{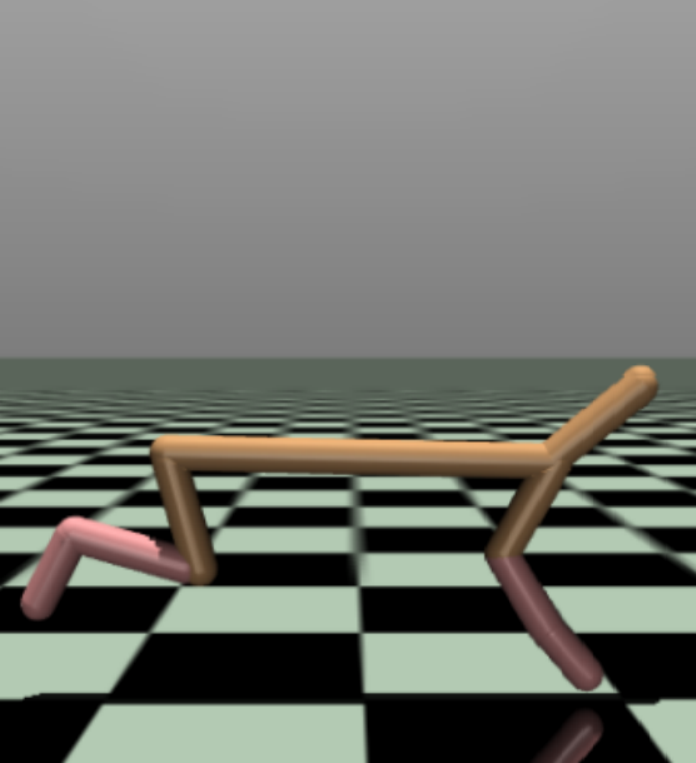}
    }
    \subfigure[Hopper]{
    \label{fig:hopper}
    \includegraphics[scale=0.3]{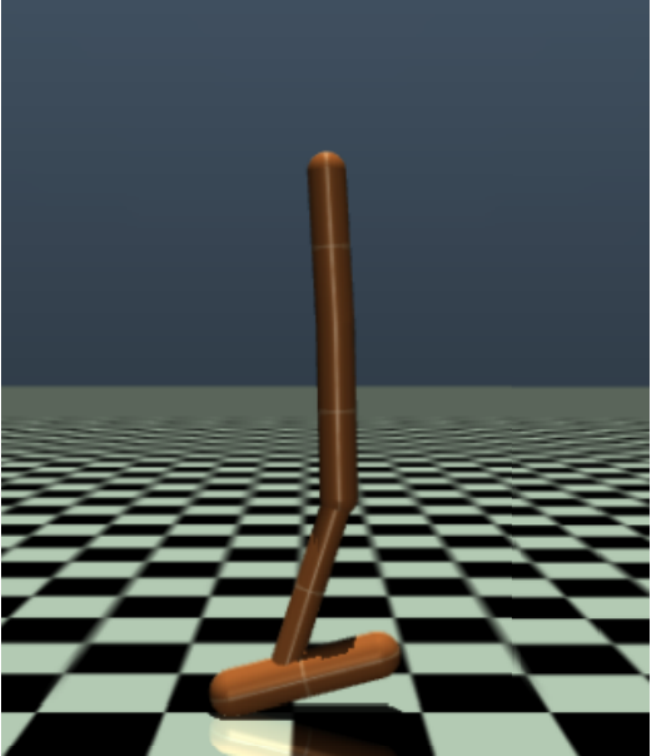}
    }
    \subfigure[Walker2d]{
    \label{fig:walker2d}
    \includegraphics[scale=0.3]{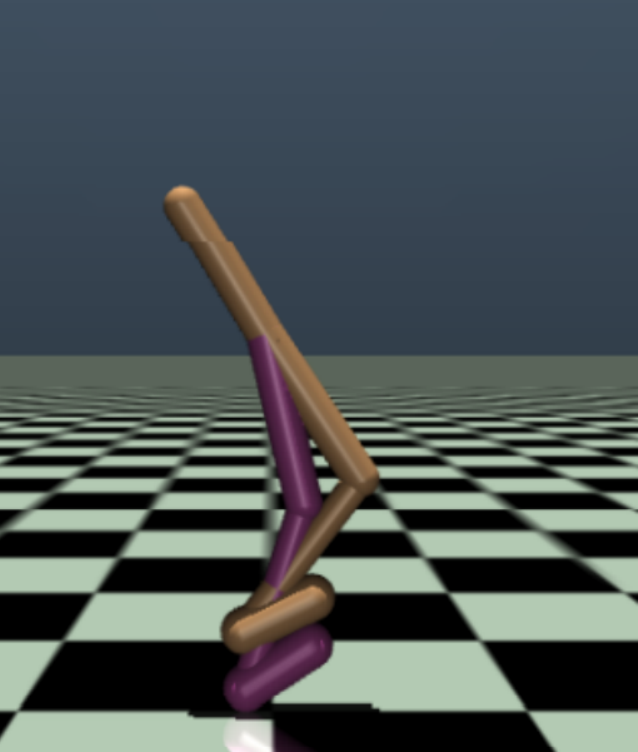}
    }
    \caption{MuJoCo datasets. We conduct experiments on halfcheetah, hopper, and walker2d tasks.}
    \label{fig:mujocoviaual}
\end{figure}

D4RL offers a metric, normalized score, to evaluate the performance of the offline RL algorithm, which is calculated by:
\begin{equation*}
    \mathrm{score} = \dfrac{\mathrm{average}\,\mathrm{return}-\mathrm{return}\,\mathrm{of}\,\mathrm{the}\,\mathrm{random}\,\mathrm{policy}}{\mathrm{return}\,\mathrm{of}\,\mathrm{the}\,\mathrm{expert}\,\mathrm{policy} - \mathrm{return}\,\mathrm{of}\,\mathrm{the}\,\mathrm{random}\,\mathrm{policy}}\times 100.
\end{equation*}

Note that if the normalized score equals to 0, that indicates that the learned policy has a similar performance as the random policy, while 100 corresponds to an expert policy. In D4RL, different types of datasets (e.g., random, medium, expert) share the identical reference minimum score and reference maximum score. We summarize the reference score for each environment in Table \ref{tab:referencescore}.

\begin{table}[!htb]
  \caption{The referenced min score and max score for the MuJoCo dataset in D4RL.}
  \label{tab:referencescore}
  \vspace{0.2cm}
  \centering
  \begin{tabular}{llcc}
    \toprule
    Domain & Task Name   & Reference Min Score & Reference Max Score \\
    \midrule
    MuJoCo & halfcheetah & $-$280.18 & 12135.0 \\
    MuJoCo & hopper & $-$20.27 & 3234.3 \\
    MuJoCo & Walker2d & 1.63 & 4592.3 \\
    \bottomrule
  \end{tabular}
\end{table}

\subsection{Implementation Details}
\label{sec:implementation}

\noindent\textbf{Baselines.} We conduct experiments on the recently released MuJoCo "-v2" datasets. We adopt behavior cloning (BC), Soft Actor-Critic (SAC) \cite{Haarnoja2018SoftAO}, and recently proposed offline RL methods, CQL \cite{Kumar2020ConservativeQF}, UWAC \cite{Wu2021UncertaintyWA}, TD3+BC \cite{Fujimoto2021AMA}, and IQL \cite{kostrikov2022offline} as baseline methods. The results of CQL are obtained by running the official codebase (\href{https://github.com/aviralkumar2907/CQL}{https://github.com/aviralkumar2907/CQL}). For UWAC, we rerun it on MuJoCo "-v2" datasets using the official codebase (\href{https://github.com/apple/ml-uwac}{https://github.com/apple/ml-uwac}) since the results reported in its original paper are not obtained on "-v2" datasets. For TD3+BC, we actually can reproduce its results, and thus we take the results reported in its original paper (Table 7 in the appendix) directly. For IQL, we take its results on medium, medium-replay, and medium-expert from its original paper directly. We run IQL on random and expert datasets with its official codebase (\href{https://github.com/ikostrikov/implicit_q_learning}{https://github.com/ikostrikov/implicit\_q\_learning}). All methods are run over 4 different random seeds and the normalized average scores are reported in the main text.

\noindent\textbf{MCQ implementation details.} We build the practical MCB operator upon Soft Actor-Critic (SAC), and modify its critic objective function by adding an auxiliary loss such that OOD actions can be trained. For some datasets, we subtract the pseudo target value a small positive number (as in Definition \ref{def:mcb}) since we find it behaves better. We also stop overly estimated value from backpropagating for some of the datasets. We additionally train a generative model, CVAE. For each training epoch, we sample a mini-batch from the logged dataset, and train the CVAE. Next, we leverage the critic networks for estimating the $Q$-values and their corresponding target values upon in-distribution state-action pairs. Furthermore, we sample $N$ actions from the CVAE based on the current state and the next state, and sample $N$ actions from the learned policy. We then construct the pseudo target values for OOD actions. Finally, we optimize the learned policy with the standard way SAC does. We detail the default parameter setup for SAC and CVAE in Table \ref{tab:hyperparameter}.

\begin{table*}
\caption{Hyperparameters setup for MCQ.}
\label{tab:hyperparameter}
\centering
\begin{tabular}{lc}
\toprule
\textbf{Hyperparameter} & \textbf{Value} \\
\midrule
SAC  & \\
\qquad Actor network  & $(400,400)$ \\
\qquad Critic network & $(400,400)$ \\
\qquad Batch size & $256$ \\
\qquad Critic learning rate & $3\times 10^{-4}$ \\
\qquad Actor learning rate & $3\times 10^{-4}$ \\
\qquad Optimizer & Adam \cite{Kingma2015AdamAM} \\
\qquad Discount factor & $0.99$ \\
\qquad Reward scale & $1$ \\
\qquad Maximum log std & $2$ \\
\qquad Minimum log std & $-20$ \\
\qquad Use automatic entropy tuning & Yes \\
\qquad Target update rate & $5\times 10^{-3}$ \\
\midrule
CVAE & \\
\qquad Encoder hidden dimension & $750$ \\
\qquad Decoder hidden dimension & $750$ \\
\qquad Hidden layers & $2$ \\
\qquad CVAE learning rate & $1\times 10^{-3}$ \\
\qquad Batch size & $256$ \\
\qquad Latent dimension & $2\times$ action dimension \\
\bottomrule
\end{tabular}
\end{table*}

We set the number of sampled actions $N=10$ by default as we experimentally find that MCQ is insensitive to $N$ in a wide range. Across all of our experiments, we tune the weighting coefficient $\lambda$ across $\{0.3, 0.5, 0.6, 0.7, 0.8, 0.9, 0.95\}$ with grid search. We summarize the hyperparameters we use for running the MuJoCo datasets in Table \ref{tab:parameters}, where we find that generally $0.7\le\lambda<1$ is suitable for most of the datasets. For some datasets with narrow distributions, e.g., expert datasets and medium-expert datasets, a comparatively small $\lambda$ is better because actions sampled from the learned policy are more likely to lie in the OOD region in these tasks, and larger weights are expected to be assigned to OOD actions training loss. Again, as we have explained in the main text, one ought not to use too small $\lambda$ (and absolutely cannot set $\lambda=0$) as our first priority is always training the in-distribution data in a good manner.

\noindent\textbf{Offline-to-Online details.} For offline-to-online experiments, we directly put the online samples into the offline replay buffer after offline initialization. Both offline transitions and online transitions are sampled equally during online interaction and training. The hyperparameters of all methods are kept unchanged on both offline stage and online stage. The results of baseline methods are acquired by running their official codebases, i.e., CQL (\href{https://github.com/aviralkumar2907/CQL}{https://github.com/aviralkumar2907/CQL}), TD3+BC (\href{https://github.com/sfujim/TD3\_BC}{https://github.com/sfujim/TD3\_BC}), IQL (\href{https://github.com/ikostrikov/implicit\_q\_learning}{https://github.com/ikostrikov/implicit\_q\_learning}), AWAC (\href{https://github.com/vitchyr/rlkit}{https://github.com/vitchyr/rlkit}). For those involve normalization over states, we normalize the online samples with the mean and standard deviation calculated based on the offline samples. All methods are run over 4 different random seeds. We choose a subset of tasks for offline-to-online fine-tuning different from IQL and AWAC to ensure that our empirical experiments on offline-to-online fine-tuning are consistent to the offline experiments on MuJoCo datasets.

\begin{table}[htb]
  \caption{Detailed hyperparameters used in MCQ, where we conduct experiments on D4RL MuJoCo-Gym "-v2" datasets.}
  \label{tab:parameters}
  \small
  \centering
  \begin{tabular}{l|c|c}
    \toprule
    Task Name  & weighting coefficient $\lambda$ & number of sampled actions $N$ \\
    \midrule
    halfcheetah-random-v2 & 0.95 & 10 \\
    hopper-random-v2 & 0.6 & 10 \\
    walker2d-random-v2 & 0.8 & 10 \\
    halfcheetah-medium-v2 & 0.95 & 10 \\
    hopper-medium-v2 & 0.7 & 10 \\
    walker2d-medium-v2 & 0.9 & 10 \\
    halfcheetah-medium-replay-v2 & 0.95 & 10 \\
    hopper-medium-replay-v2 & 0.9 & 10 \\
    walker2d-medium-replay-v2 & 0.9 & 10 \\
    halfcheetah-medium-expert-v2 & 0.7 & 10 \\
    hopper-medium-expert-v2 & 0.5 & 10 \\
    walker2d-medium-expert-v2 & 0.8 & 10 \\
    halfcheetah-expert-v2 & 0.5 & 10 \\
    hopper-expert-v2 & 0.5 & 10 \\
    walker2d-expert-v2 & 0.3 & 10 \\
    \bottomrule
  \end{tabular}
\end{table}

\section{Comparison of MCQ against More Baselines}
In this section, we extensively compare MCQ against severe recent strong offline RL methods on D4RL MuJoCo datasets, including BCQ \cite{Fujimoto2019OffPolicyDR}, BEAR \cite{Kumar2019StabilizingOQ}, MOPO \cite{Yu2020MOPOMO}, Decision Transformer (DT) \cite{Chen2021DecisionTR}, CDC \cite{Fakoor2021ContinuousDC}, and PBRL \cite{bai2022pessimistic}. BCQ aims at learning a batch-constraint policy and it also involves a generative model, i.e., CVAE. BEAR lies in the category of explicit policy constraint offline RL methods. MOPO is a well-known model-based offline RL algorithm that leverages the learned dynamics models for uncertainty quantification. Decision Transformer (DT) is also a recent model-based offline RL method, and it relies on the transformer for sequential modeling. CDC adds critic regularization for the critics and the explicit policy constraint for the actor. PBRL is an ensemble-based offline RL method, which also quantifies the estimation uncertainty. We conduct experiments on D4RL \cite{Fu2020D4RLDF} MuJoCo locomotion datasets. Note that the results of BCQ is obtained by running its official codebase (\href{https://github.com/sfujim/BCQ}{https://github.com/sfujim/BCQ}), and the results of BEAR, MOPO, and PBRL are taken directly from \cite{bai2022pessimistic}. The results of Decision Transformer are taken from \cite{Fujimoto2021AMA} (Table 7 in the appendix). Since CDC does not report standard deviation in its original paper, we omit them and directly report its mean performance. We summarize the normalized average score of each method in Table \ref{tab:morebaselines}.

As shown, MCQ outperforms these baseline methods on most of the non-expert datasets, often by a large margin, and is competitive with them on expert datasets. Note that RBRL also leverages OOD sampling in its formulation, The main difference between MCQ and PBRL lies in the fact that PBRL adopts OOD sampling for penalizing the estimation uncertainty while we construct pseudo target values for OOD actions. Moreover, PBRL utilizes critic ensemble for uncertainty quantification while MCQ only adopts double critics. MCQ is much more computationally efficient than PBRL. One can also find that MCQ surpasses PBRL on many datasets, especially on non-expert datasets.

\begin{table}[htb]
  \caption{Normalized average score comparison of MCQ against baseline methods on D4RL benchmarks. 0 corresponds to a random policy and 100 corresponds to an expert policy. The experiments are run on MuJoCo "-v2" datasets over 4 random seeds. r = random, m = medium, m-r = medium-replay, m-e = medium-expert, e = expert.}
  \label{tab:morebaselines}
  \small
  \centering
  \setlength{\tabcolsep}{3.6pt}
  \begin{tabular}{@{}llllllll@{}}
    \toprule
    Task Name  & BCQ & BEAR & MOPO & DT & CDC & PBRL & MCQ (ours) \\
    \midrule
    halfcheetah-r & 2.2$\pm$0.0 & 2.3$\pm$0.0 & \colorbox{yellow}{35.9}$\pm$2.9 & - & 27.4 & 11.0$\pm$5.8 & \colorbox{yellow}{28.5}$\pm$0.6 \\
    hopper-r & 7.8$\pm$0.6 & 3.9$\pm$2.3 & 16.7$\pm$12.2 & - & 14.8 & 26.8$\pm$9.3 & \colorbox{yellow}{31.8}$\pm$0.5 \\
    walker2d-r & 4.9$\pm$0.1 & 12.8$\pm$10.2 & 4.2$\pm$5.7 & - & 7.2 & 8.1$\pm$4.4 & \colorbox{yellow}{17.0}$\pm$3.0 \\
    halfcheetah-m & 46.6$\pm$0.4 & 43.0$\pm$0.2 & \colorbox{yellow}{73.1}$\pm$2.4 & 42.6$\pm$0.1 & 46.1 & 57.9$\pm$1.5 & \colorbox{yellow}{64.3}$\pm$0.2 \\
    hopper-m & 59.4$\pm$8.3 & 51.8$\pm$4.0 & 38.3$\pm$34.9 & 67.6$\pm$1.0 & 60.4 & \colorbox{yellow}{75.3}$\pm$31.2 & \colorbox{yellow}{78.4}$\pm$4.3 \\
    walker2d-m & 71.8$\pm$7.2 & -0.2$\pm$0.1 & 41.2$\pm$30.8 & 74.0$\pm$1.4 & 82.1 & \colorbox{yellow}{89.6}$\pm$0.7 & \colorbox{yellow}{91.0}$\pm$0.4 \\
    halfcheetah-m-r & 42.2$\pm$0.9 & 36.3$\pm$3.1 & \colorbox{yellow}{69.2}$\pm$1.1 & 36.6$\pm$0.8 & 44.7 & 45.1$\pm$8.0 & \colorbox{yellow}{56.8}$\pm$0.6 \\
    hopper-m-r & 60.9$\pm$14.7 & 52.2$\pm$19.3 & 32.7$\pm$9.4 & 82.7$\pm$7.0 & 55.4 & \colorbox{yellow}{100.6}$\pm$1.0 & \colorbox{yellow}{101.6}$\pm$0.8 \\
    walker2d-m-r & 57.0$\pm$9.6 & 7.0$\pm$7.8 & 73.7$\pm$9.4 & 66.6$\pm$3.0 & 23.0 & 77.7$\pm$14.5 & \colorbox{yellow}{91.3}$\pm$5.7 \\
    halfcheetah-m-e & \colorbox{yellow}{95.4}$\pm$2.0 & 46.0$\pm$4.7 & 70.3$\pm$21.9 & 86.8$\pm$1.3 & 59.6 & \colorbox{yellow}{92.3}$\pm$1.1 & 87.5$\pm$1.3 \\
    hopper-m-e & 106.9$\pm$5.0 & 50.6$\pm$25.3 & 60.6$\pm$32.5 & 107.6$\pm$1.8 & 86.9 & \colorbox{yellow}{110.8}$\pm$0.8 & \colorbox{yellow}{111.2}$\pm$0.1 \\
    walker2d-m-e & \colorbox{yellow}{107.7}$\pm$3.8 & 22.1$\pm$44.9 & 77.4$\pm$27.9 & \colorbox{yellow}{108.1}$\pm$0.2 & 70.9 & \colorbox{yellow}{110.1}$\pm$0.3 & \colorbox{yellow}{114.2}$\pm$0.7 \\
    \midrule
    Average Above & 55.2 & 27.3 & 49.4 & - & 48.2 & \colorbox{yellow}{67.1} & \colorbox{yellow}{72.8} \\
    \midrule
    halfcheetah-e & 89.9$\pm$9.6 & \colorbox{yellow}{92.7}$\pm$0.6 & 81.3$\pm$21.8 & - & 82.1 & \colorbox{yellow}{92.4}$\pm$1.7 & \colorbox{yellow}{96.2}$\pm$0.4 \\
    hopper-e & \colorbox{yellow}{109.0}$\pm$4.0 & 54.6$\pm$21.0 & 62.5$\pm$29.0 & - & \colorbox{yellow}{102.8} & 110.5$\pm$0.4 & \colorbox{yellow}{111.4}$\pm$0.4 \\
    walker2d-e & \colorbox{yellow}{106.3}$\pm$5.0 & \colorbox{yellow}{106.6}$\pm$6.8 & 62.4$\pm$3.2 & - & 87.5 & \colorbox{yellow}{108.3}$\pm$0.3 & \colorbox{yellow}{107.2}$\pm$1.1 \\
    \midrule
    Total Average & 64.5 & 38.8 & 53.3 & - & 56.7 & \colorbox{yellow}{74.4} & \colorbox{yellow}{79.2} \\
    \bottomrule
  \end{tabular}
\end{table}

\section{Wider Evidence on Value Estimation of MCQ}
We first want to include the $Q$-value estimation with respect to the $\lambda$ and $N$ on halfcheetah-medium-v2, which we omit in the main text due to the space limit. The results are shown in Figure \ref{fig:valueestimationhalfcheetah}, where we can observe that the results resemble those on hopper-medium-replay-v2, e.g., the $Q$ value estimate collapses with smaller $\lambda$ and is comparatively insensitive to $N$ in a wide range.

\begin{figure}
    \centering
    \subfigure[$Q$ value w.r.t. $\lambda$]{
    \label{fig:halfcheetahlambdaq}
    \includegraphics[scale=0.6]{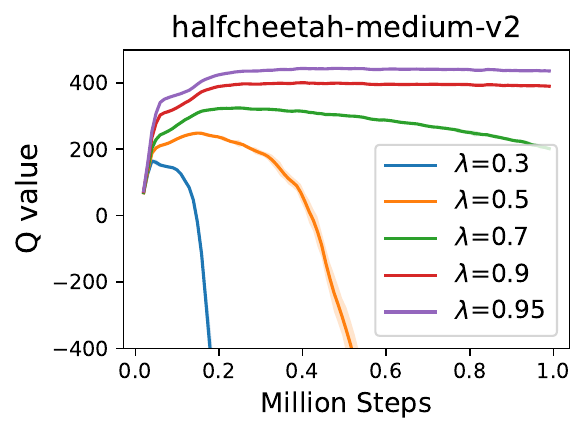}
    }
    \subfigure[$Q$ value w.r.t. $N$]{
    \label{fig:halfcheetahlambdan}
    \includegraphics[scale=0.6]{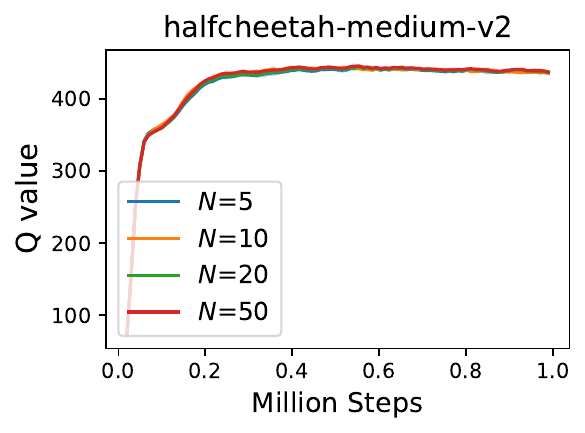}
    }
    \caption{Missing value estimation on halfcheetah-medium-v2.}
    \label{fig:valueestimationhalfcheetah}
\end{figure}

Furthermore, we include more evidence that MCQ induces a good and stable value estimation in Figure \ref{fig:valueestimation} across more datasets with the parameters reported in Section \ref{sec:implementation}. We estimate the in-distribution $Q$-value by $\mathbb{E}_{s,a\sim\mathcal{D}}\mathbb{E}_{i=1,2}\left[Q_{\theta_i}(s,a)\right]$, and estimate the $Q$-value for the OOD actions via $\mathbb{E}_{s\sim\mathcal{D},a\sim\pi(\cdot|s)}\mathbb{E}_{i=1,2}\left[Q_{\theta_i}(s,a)\right]$, and the target value estimates for the in-distribution samples are give by the standard way of calculating target value in SAC. We find that on all of the datasets, MCQ has a good value estimation upon both in-distribution samples and OOD samples. The $Q$ estimation curve is quite smooth and stable on most of them. Based on these observations, we can conclude that MCQ guarantees a good value estimation and will not incur severe overestimation.

\begin{figure}
    \centering
    \subfigure[In-distribution $Q$ value]{
    \label{fig:harandomqpred}
    \includegraphics[width=0.3\textwidth]{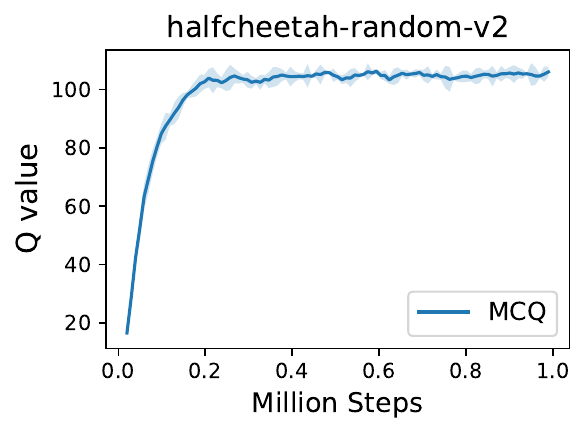}
    }
    \subfigure[OOD $Q$ value]{
    \label{fig:harandomqood}
    \includegraphics[width=0.3\textwidth]{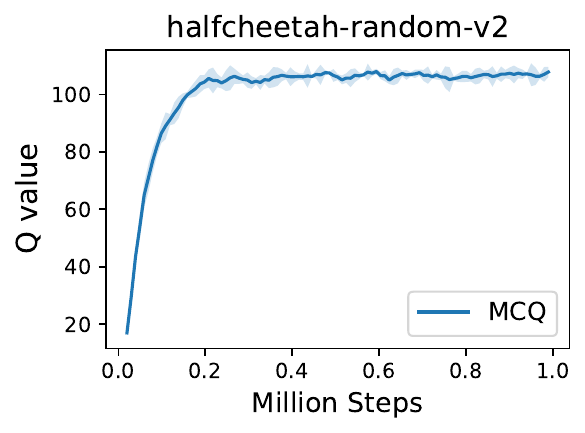}
    }
    \subfigure[In distribution target $Q$ value]{
    \label{fig:harandomqtarget}
    \includegraphics[width=0.3\textwidth]{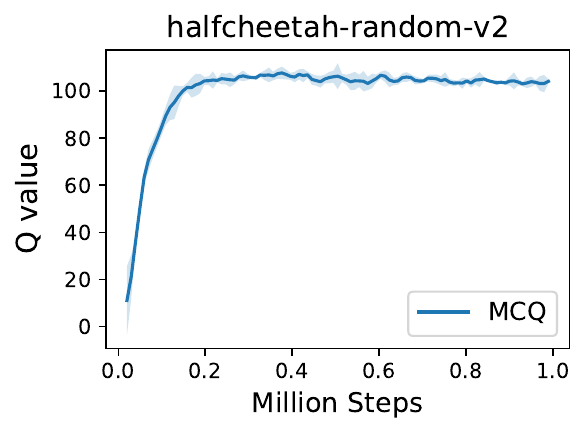}
    }
    \subfigure[In-distribution $Q$ value]{
    \label{fig:warandomqpred}
    \includegraphics[width=0.3\textwidth]{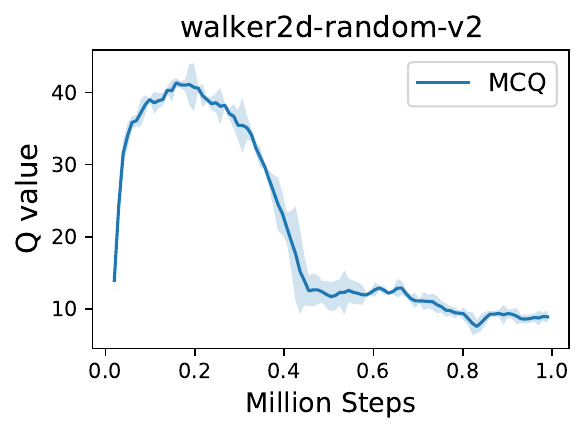}
    }
    \subfigure[OOD $Q$ value]{
    \label{fig:warandomqood}
    \includegraphics[width=0.3\textwidth]{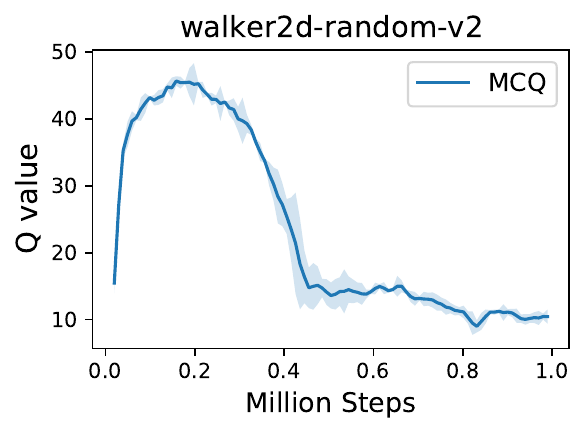}
    }
    \subfigure[In distribution target $Q$ value]{
    \label{fig:warandomqtarget}
    \includegraphics[width=0.3\textwidth]{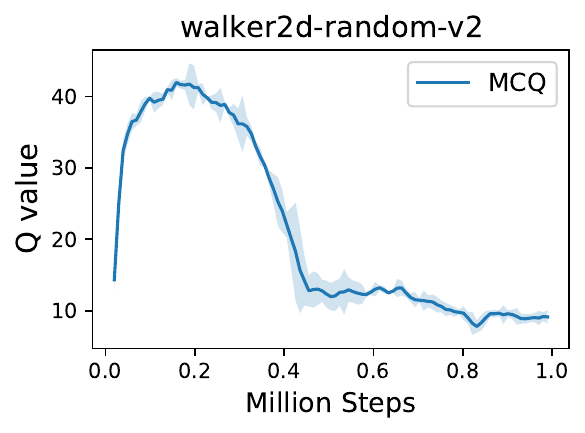}
    }
    \subfigure[In-distribution $Q$ value]{
    \label{fig:hamediumexpertqpred}
    \includegraphics[width=0.3\textwidth]{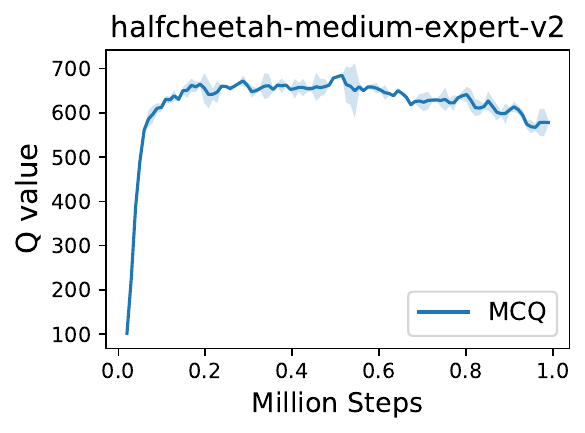}
    }
    \subfigure[OOD $Q$ value]{
    \label{fig:hamediumexpertqood}
    \includegraphics[width=0.3\textwidth]{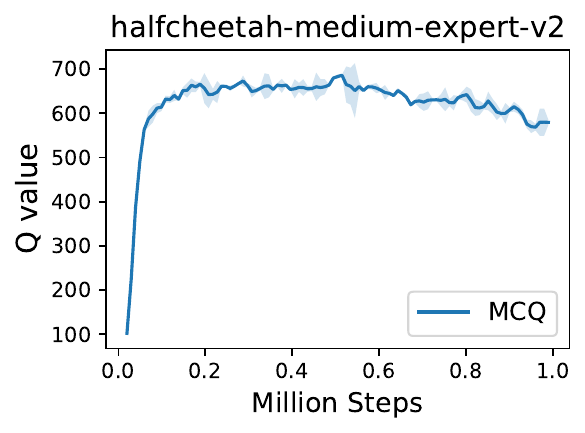}
    }
    \subfigure[In distribution target $Q$ value]{
    \label{fig:hamediumexpertqtarget}
    \includegraphics[width=0.3\textwidth]{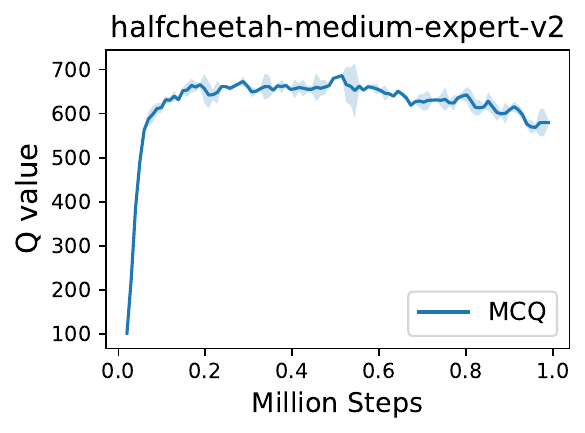}
    }
    \subfigure[In-distribution $Q$ value]{
    \label{fig:wamediumexpertqpred}
    \includegraphics[width=0.3\textwidth]{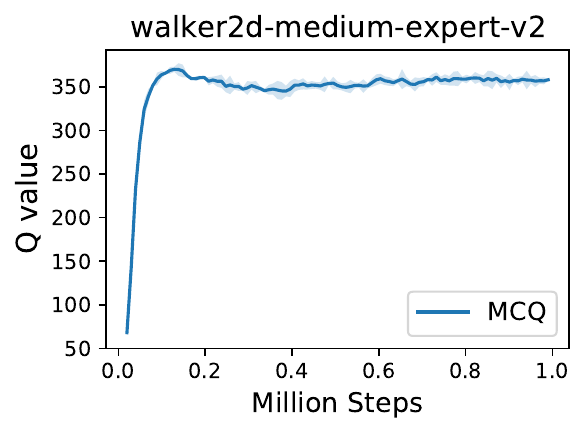}
    }
    \subfigure[OOD $Q$ value]{
    \label{fig:wamediumexpertqood}
    \includegraphics[width=0.3\textwidth]{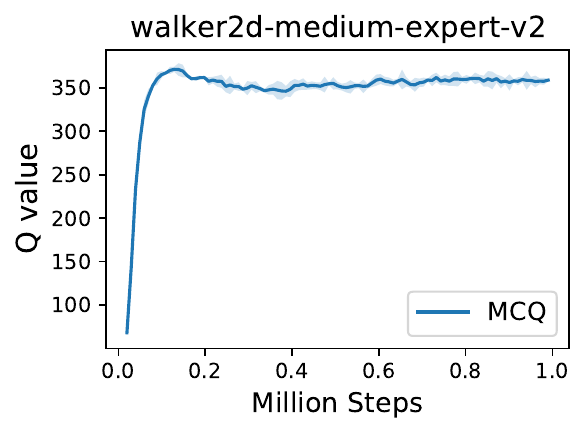}
    }
    \subfigure[In distribution target $Q$ value]{
    \label{fig:wamediumexpertqtarget}
    \includegraphics[width=0.3\textwidth]{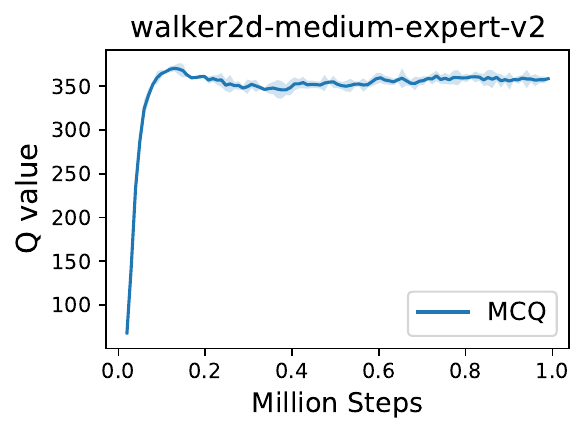}
    }
    \subfigure[In-distribution $Q$ value]{
    \label{fig:haexpertqpred}
    \includegraphics[width=0.3\textwidth]{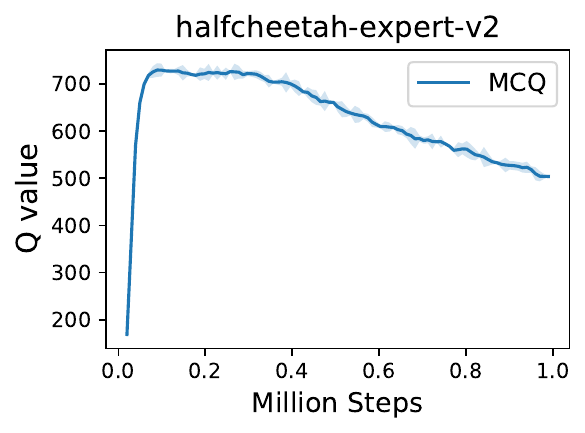}
    }
    \subfigure[OOD $Q$ value]{
    \label{fig:haexpertqood}
    \includegraphics[width=0.3\textwidth]{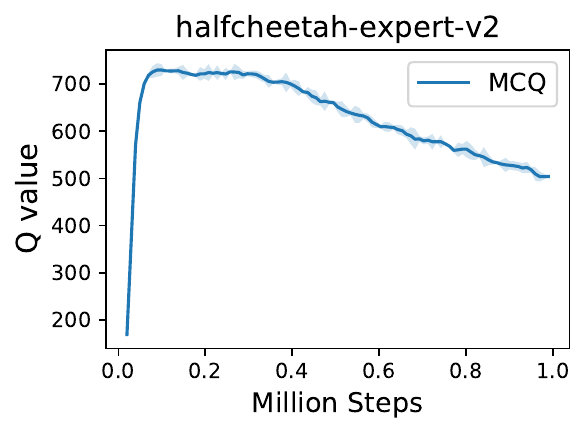}
    }
    \subfigure[In distribution target $Q$ value]{
    \label{fig:haexpertqtarget}
    \includegraphics[width=0.3\textwidth]{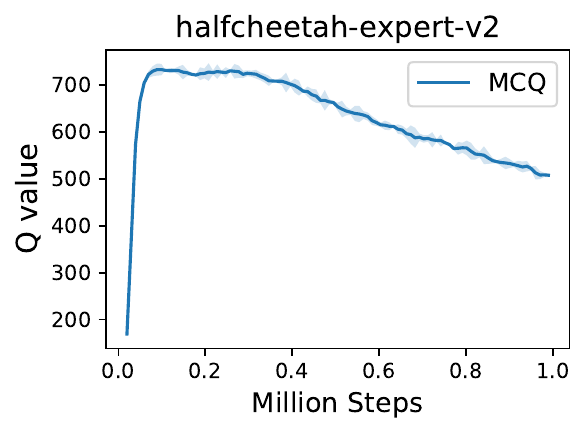}
    }
    \caption{$Q$ value estimation of MCQ on different types of datasets, where $Q$ estimates upon in-dataset samples and OOD samples, target $Q$ value for the in-dataset samples are depicted. The shaded region captures the standard deviation.}
    \label{fig:valueestimation}
\end{figure}

\section{Computation Cost and Compute Infrastructure}

As for the computational cost, MCQ generally consumes 8 to 12 hours to train on MuJoCo tasks due to the cost of OOD sampling. In contrast, CQL takes around 12 to 15 hours. We compare MCQ against CQL on halfcheetah-medium-v2 task in terms of runtime per epoch (1K gradient steps) and GPU memory during training. We conduct the experiments on a single RTX3090 GPU. We detail the computation cost comparison of MCQ against CQL in Table \ref{tab:computationcost}. As shown, MCQ is computationally more efficient than CQL and can achieve better performance with lower computational costs.

\begin{table}[htb]
\caption{Computation cost comparison of MCQ and CQL.}
\label{tab:computationcost}
\centering
\begin{tabular}{c|c|c}
\toprule
\textbf{Algorithm}  & \textbf{Runtime (s/epoch)} & \textbf{GPU Memory (GB)} \\
\midrule
CQL & 38.2 & 1.4 \\
MCQ & 30.3 & 1.2 \\
\bottomrule
\end{tabular}
\end{table}

In Table \ref{tab:computing}, we list the compute infrastructure that we use to run all of the baseline algorithms and MCQ experiments.

\begin{table}[htb]
\caption{Compute infrastructure.}
\label{tab:computing}
\centering
\begin{tabular}{c|c|c}
\toprule
\textbf{CPU}  & \textbf{GPU} & \textbf{Memory} \\
\midrule
AMD EPYC 7452  & RTX3090$\times$8 & 288GB \\
\bottomrule
\end{tabular}
\end{table}

\section{Omitted Background for VAE}
In this section, we give a brief introduction to the VAE, and CVAE \cite{Kingma2014AutoEncodingVB,Sohn2015LearningSO}. Given a dataset $\{x_i\}_{i=1}^N$, the VAE learns a generative model $p_\theta(X) = \prod_{i=1}^N p_\theta(x_i)$, where $X=\{x_i\}_{i=1}^N$ and $\theta$ is the parameter of the generative model. The goal of the VAE is to generate samples that come from the identical distribution as the samples in the dataset, which is equivalent to maximize $p_\theta(x_i)$ for all $x_i$. VAE achieves this by introducing a latent variable $z$ with a prior distribution $p(z)$ and modeling a decoder $p_\theta(X|z)$. The true posterior distribution $p_\theta(z|X)$ is approximated by training an encoder $q_\phi(z|X)$ parameterized by $\phi$. VAE then optimizes the following variational lower bound:
\begin{equation}
    \label{eq:vae}
    \log p_\theta (X) \ge \mathbb{E}_{z\sim q_\phi(\cdot|X)}\left[\log p_\theta(X|z)\right] - \mathrm{KL}(q_\phi(z|X),p(z)).
\end{equation}

Note that $\log p_\theta(X|z)$ denotes the reconstruction loss, and $\mathrm{KL}(p(\cdot),q(\cdot))$ represents the KL-divergence between two probability distribution $p(\cdot)$ and $q(\cdot)$. The prior distribution $p(z)$ is usually set to be a multivariate Gaussian distribution.

Conditional variational auto-encoder (CVAE) is a variant of the vanilla VAE, which aims to model $p_\theta(X|Y)$. The corresponding variational lower bound is given below.
\begin{equation}
    \label{eq:cvae}
    \log p_\theta (X|Y) \ge \mathbb{E}_{z\sim q_\phi(\cdot|X,Y)}\left[\log p_\theta(X|z,Y)\right] - \mathrm{KL}(q_\phi(z|X,Y),p(z|Y)).
\end{equation}

When sampling from the CVAE, we first sample from the prior distribution $p(z|Y)$ and pass it to the decoder to get the desired sample.

\section{Experimental Results on Other Datasets}

To further illustrate the effectiveness of the MCQ algorithm, we conduct more experiments on 4 maze2d "-v1" datasets and 8 Adroit "-v0" datasets from D4RL benchmarks. We compare MCQ against behavior cloning (BC), BEAR \cite{Kumar2019StabilizingOQ}, CQL \cite{Kumar2020ConservativeQF}, BCQ \cite{Fujimoto2019OffPolicyDR}, TD3+BC \cite{Fujimoto2021AMA}, and IQL \cite{kostrikov2022offline}. We take the results of BC, BEAR, CQL, BCQ on these datasets from \cite{Wang2021OfflineRL} directly. For the results of TD3+BC and IQL, we run them on these datasets with their official codebases. All methods are run over 4 different random seeds. The hyperparameters we adopt for these tasks are shown in Table \ref{tab:otherdatasets}. We observe that MCQ behaves fairly well on maze2d datasets, and is competitive to prior methods on Adroit tasks. BC and CQL behaves well on Adroit datasets. Note that we find that BC beats MCQ on hammer-human-v0 and pen-cloned-v0. The reason may be attributed to the high complexity and narrow distribution of the Adroit tasks which makes it hard for $Q$ networks to learn well, and the sampled action can easily lie outside of the dataset's support. Nevertheless, MCQ achieves the highest average score on these datasets.

\begin{table}[htb]
  \caption{Normalized score comparison of different baseline methods on D4RL benchmarks. 0 corresponds to a random policy and 100 corresponds to an expert policy.}
  \label{tab:mazeadroit}
  \small
  \centering
  \begin{tabular}{lllllllllll}
    \toprule
    Task Name & BC & BEAR & CQL & BCQ & TD3+BC & IQL & MCQ (ours) \\
    \midrule
    maze2d-umaze & -3.2 & 65.7 & 18.9 & 49.1 & 25.7$\pm$6.1 & 65.3$\pm$13.4 & \colorbox{yellow}{81.5}$\pm$23.7 \\
    maze2d-umaze-dense & -6.9 & 32.6 & 14.4 & 48.4 & 39.7$\pm$3.8 & 57.8$\pm$12.5 & \colorbox{yellow}{107.8}$\pm$3.2 \\
    maze2d-medium & -0.5 & 25.0 & 14.6 & 17.1 & 19.5$\pm$4.2 & 23.5$\pm$11.1 & \colorbox{yellow}{106.8}$\pm$38.4 \\
    maze2d-medium-dense & 2.7 & 19.1 & 30.5 & 41.1 & 54.9$\pm$6.4 & 28.1$\pm$16.8 & \colorbox{yellow}{112.7}$\pm$5.5\\
    \midrule
    Average Above & -2.0 & 35.6 & 19.6 & 38.9 & 35.0 & 37.2 & \colorbox{yellow}{102.2} \\
    \midrule
    pen-human & 34.4 & -1.0 & 37.5 &  \colorbox{yellow}{68.9} & 0.0$\pm$0.0 & \colorbox{yellow}{68.7}$\pm$8.6 & \colorbox{yellow}{68.5}$\pm$6.5 \\
    door-human & 0.5 & -0.3 & \colorbox{yellow}{9.9} & 0.0 & 0.0$\pm$0.0 & 3.3$\pm$1.3 & 2.3$\pm$2.2 \\
    relocate-human & 0.0 & -0.3 & \colorbox{yellow}{0.2} & -0.1 & 0.0$\pm$0.0 & 0.0$\pm$0.0 & \colorbox{yellow}{0.1}$\pm$0.1 \\
    hammer-human & 1.5 & 0.3 & \colorbox{yellow}{4.4} & 0.5 & 0.0$\pm$0.0 & 1.4$\pm$0.6 & 0.3$\pm$0.1 \\
    pen-cloned & \colorbox{yellow}{56.9} & 26.5 & 39.2 & \colorbox{yellow}{44.0} & 0.0$\pm$0.0 & 35.3$\pm$7.3 & \colorbox{yellow}{49.4}$\pm$4.3 \\
    door-cloned & -0.1 & -0.1 & 0.4 & 0.0 & 0.0$\pm$0.0 & 0.5$\pm$0.6 & \colorbox{yellow}{1.3}$\pm$0.4 \\
    relocate-cloned & -0.1 & -0.3 & -0.1 & -0.3 & \colorbox{yellow}{0.0}$\pm$0.0 & -0.2$\pm$0.0 & \colorbox{yellow}{0.0}$\pm$0.0 \\
    hammer-cloned & 0.8 & 0.3 & \colorbox{yellow}{2.1} & 0.4 & 0.0$\pm$0.0 & \colorbox{yellow}{1.7}$\pm$1.0 & \colorbox{yellow}{1.4}$\pm$0.5 \\
    \midrule
    Average Total & 7.2 & 13.9 & 14.3 & 22.4 & 11.7 & 23.8 & \colorbox{yellow}{44.3} \\
    \bottomrule
  \end{tabular}
\end{table}

\begin{table}[htb]
  \caption{Detailed hyperparameters used in MCQ, where we conduct experiments on D4RL maze2d "-v1" datasets and Adroit "-v0" datasets.}
  \label{tab:otherdatasets}
  \small
  \centering
  \begin{tabular}{l|c|c}
    \toprule
    Task Name  & weighting coefficient $\lambda$ & number of sampled actions $N$ \\
    \midrule
    maze2d-umaze & 0.7 & 10 \\
    maze2d-umaze-dense & 0.95 & 10 \\
    maze2d-medium & 0.9 & 10 \\
    maze2d-medium-dense & 0.9 & 10 \\
    pen-human & 0.3 & 10 \\
    door-human & 0.7 & 10 \\
    relocate-human & 0.3 & 10 \\
    hammer-human & 0.7 & 10 \\
    pen-cloned & 0.9 & 10 \\
    door-cloned & 0.3 & 10 \\
    relocate-cloned & 0.3 & 10 \\
    hammer-cloned & 0.5 & 10 \\
    \bottomrule
  \end{tabular}
\end{table}

\end{document}